\documentclass{article}

\usepackage[preprint]{neurips_2026}

\usepackage[utf8]{inputenc} 
\usepackage[T1]{fontenc}    
\usepackage{hyperref}       
\usepackage{url}            
\usepackage{booktabs}       
\usepackage{amsfonts}       
\usepackage{nicefrac}       
\usepackage{microtype}      
\usepackage{xcolor}         

\usepackage{amsmath}
\usepackage{amssymb}
\usepackage{bm}
\usepackage{amsthm}
\usepackage{graphicx}
\usepackage{algorithm}
\usepackage{algorithmic}

\def\gX{{\mathcal{X}}}
\def\gY{{\mathcal{Y}}}
\def\gS{{\mathcal{S}}}
\def\gA{{\mathcal{A}}}
\def\gD{{\mathcal{D}}}
\def\vv{{\bm{v}}}
\def\vl{{\bm{l}}}
\def\vt{{\bm{t}}}
\def\vz{{\bm{z}}}
\def\vg{{\bm{g}}}
\def\Eqref#1{Eq.~\eqref{#1}}

\DeclareMathOperator*{\argmax}{arg\,max}
\DeclareMathOperator*{\argmin}{arg\,min}

\newcommand{\bcc}[1]{\left\{{#1}\right\}}
\newcommand{\brr}[1]{\left({#1}\right)}
\newcommand{\bss}[1]{\left[{#1}\right]}
\newcommand{\ipp}[2]{\left\langle{#1},{#2}\right\rangle}
\newcommand{\norm}[1]{\left\lVert#1\right\rVert}

\newcommand{\Expectover}[2]{\mathbb{E}_{#1}\!\left[#2\right]}
\newcommand{\abs}[1]{\left\vert#1\right\vert}

\newcommand{\LINECOMMENT}[1]{\STATE {\color{blue}\ttfamily\small \(\triangleright\) #1}}

\newtheorem{theorem}{Theorem}
\newtheorem{proposition}[theorem]{Proposition}

\title{Fairness-Aware Test-Time Prompt Tuning}

\author{%
  Yoann Launay\thanks{Work completed while at the Risk and Security AI Lab, Visa Inc.} \\
  University of Cambridge \\
  \texttt{yl844@cam.ac.uk} \\
  \And
  Parameswaran Kamalaruban\\
  Risk and Security AI Lab, Visa Inc.\\
  \texttt{kaparame@visa.com}\\
  \And
  Tom Kempton\thanks{Work completed while a consultant at the Risk and Security AI Lab, Visa Inc.}\\
  University of Manchester\\
  \texttt{thomas.kempton@manchester.ac.uk}\\
  \And
  Stuart Burrell\\
  Risk and Security AI Lab, Visa Inc.\\
  \texttt{sburrell@visa.com}\\
  \And
  David Sutton\\
  Risk and Security AI Lab, Visa Inc.\\
  \texttt{dsutton@visa.com}\\
}

\begin{document}

\maketitle

\begin{abstract}
Vision-language models have displayed remarkable capabilities in multi-modal understanding and are increasingly used in critical applications where economic and practical deployment constraints prohibit re-training or fine-tuning. However, these models can also exhibit systematic biases that disproportionately affect protected demographic groups and existing approaches to addressing these biases require extensive model retraining and access to demographic attributes. There is a clear need to develop test-time adaptation (TTA) approaches that improve the fairness characteristics of pretrained models under distributional shift. In this paper, we evaluate how episodic TTA affects fairness in CLIP classification under subpopulation shifts and develop \textsc{FairTPT}, a novel fairness-aware episodic TTA method that jointly minimizes target marginal entropy while maximizing spurious marginal entropy through soft-prompt tuning. We find that standard episodic TTA generally exacerbates disparities between majority and minority groups, that blinding a model to spurious attributes without degrading target performance is inherently challenging, and that excessive blinding can lead to catastrophic forgetting. This model collapse can be prevented by monitoring test-time changes in target loss within the linear regime, while still achieving fairness improvements on reactive data and preserving overall performance. \textsc{FairTPT} outperforms all state-of-the-art episodic test-time debiasing methods and establishes a foundation for robust TTA, which is essential for achieving fairness in practice.
\end{abstract}
\section{Introduction} 
\label{sec:introduction}

Vision–language models (VLMs) such as CLIP~\cite{radford2021learning} have achieved remarkable success across multi-modal tasks, including recognition, retrieval, and reasoning~\cite{alayrac2022flamingo,cherti2023reproducible,li2023blip}. A core strength of these models is zero-shot classification, which makes predictions without task-specific fine-tuning by aligning images with class-descriptive prompts in a shared embedding space. This ability is especially valuable when labeled data is scarce or fine-tuning is infeasible due to computational or deployment constraints, enabling strong cross-domain generalization~\cite{radford2021learning,jia2021scaling}. However, recent studies show that VLMs often inherit and sometimes amplify systematic biases from pre-training data~\cite{Birhane2021MultimodalDM,Agarwal2021EvaluatingCT,Hamidieh2024IdentifyingIS,Konavoor2025VisionLanguageMD,wang2021gender,hall2023vision}. Such biases can propagate to downstream zero-shot classifiers, leading to disparities in performance across sensitive attributes (e.g., gender, race) and undermine fairness in socially critical domains~\cite{healthcareimpact}. 

Test-time adaptation (TTA) has emerged as a promising approach to improve generalization beyond zero-shot prediction by adapting models to unseen distributions during inference using only unlabeled test inputs~\cite{Lee2022ConfidenceSF,Kundu22,Gong22b,Goyal22,Sinha2022,AdaContrast22}. For VLMs, episodic TTA methods such as Test-Time Prompt Tuning (\textsc{TPT})~\cite{shu2022test} and \textsc{Zero}~\cite{farina2024frustratingly} adapt soft prompts or aggregate predictions over augmented views to improve accuracy. While these methods improve average accuracy, their impact on subgroup robustness (i.e., the ability to maintain performance across all sensitive groups) and fairness remains largely unexplored. Existing evaluations primarily focus on overall performance, overlooking fairness-related metrics. Moreover, prior work has highlighted instability and hyperparameter sensitivity in TTA methods~\cite{pitfallsTTA23,IllusionProgress25}, raising concerns about their reliability in fairness-critical settings.

We address this gap by studying fairness-aware episodic TTA for VLMs, where adaptation is performed independently for each test instance and the model is reset after each episode. Formally, given an unlabeled test image $x$ with neither the target attribute $y$ nor the sensitive attribute $s$, the goal is to adapt the model (through prompt tuning) so that: (i) subgroup robustness is improved, i.e., disparities in accuracy across sensitive attributes are reduced, (ii) overall accuracy is maintained, and (iii) hyperparameter sensitivity is minimized, since tuning at test time is impractical. Existing test-time debiasing methods~\cite{gerych2024bendvlm,adilazero,lu2025mitigating} often require batched test inputs or partial supervision of sensitive attributes, making them unsuitable for strict, fully unsupervised episodic adaptation. To our knowledge, only \cite{chuang2023debiasing} propose an episodic unsupervised debiasing method (\textsc{OrthCali}). It projects out biased directions in text embeddings in a zero-shot setting, rather than following a standard TTA paradigm.

We introduce Fairness-Aware Test-Time Prompt Tuning (\textsc{FairTPT}), an episodic TTA method that jointly minimizes the marginal entropy of target attribute predictions to encourage overall accuracy, while maximizing the marginal entropy of sensitive attribute predictions to reduce reliance on spurious correlations. This dual-entropy objective mitigates subgroup disparities while preserving the simplicity and efficiency of prompt-level adaptation. To further stabilize adaptation and prevent catastrophic forgetting (where excessive debiasing degrades target performance), we propose a lightweight learning-rate adaptation heuristic that monitors target entropy changes during adaptation. Our method is fully unsupervised, operates in the episodic setting, and requires no access to sensitive attribute labels at test time.

Our contributions are summarized as follows:
\begin{enumerate}
\item \textbf{Evaluating fairness of episodic TTA methods.} We present the first systematic fairness evaluation of episodic TTA methods (\textsc{TPT}, \textsc{Zero}) for VLMs, showing that they often fail to improve subgroup robustness, can exacerbate disparities, and are highly sensitive to hyperparameters.
\item \textbf{Proposing and evaluating a fairness-aware episodic TTA method.} We propose \textsc{FairTPT}, which jointly mitigates spurious reliance and preserves overall accuracy, while being more robust to hyperparameter choices. On standard fairness benchmarks, \textsc{FairTPT} outperforms or matches baselines (\textsc{TPT}, \textsc{Zero}, \textsc{OrthCali}) in both overall and subgroup accuracy, and shows markedly improved stability under varying hyperparameters. 
\end{enumerate}

\section{Related Work} 
\label{sec:related-work}

\textbf{Test-time adaptation of VLMs.} TTA methods~\cite{wang2020tent,zhang2022memo,niutowards,shu2022test,xiao2024beyond} adapt models during inference with minimal computation, either via partial parameter tuning or tuning-free strategies~\cite{farina2024frustratingly}. Popular approaches like MEMO~\cite{zhang2022memo}, Tent~\cite{wang2020tent}, and TPT~\cite{shu2022test} leverage entropy minimization to encourage confident predictions. TTA can be applied in online settings, where the model is updated continually across test samples, or in episodic settings, where adaptation is reset to the base model for each episode or input. The latter is the focus of this work. While effective for improving average accuracy under distribution shift, TTA methods can be unstable and hyperparameter-sensitive~\cite{pitfallsTTA23,IllusionProgress25}, with fairness impacts largely unstudied.

\textbf{Training-time debiasing of VLMs.} Many works mitigate bias during training or fine-tuning~\cite{wang2021gender,berg-etal-2022-prompt,zhang2022contrastive,Seth23dear}. Examples include removing embedding dimensions correlated with sensitive attributes~\cite{wang2021gender}, adversarial prompt learning~\cite{berg-etal-2022-prompt}, residual feature debiasing~\cite{Seth23dear}, and contrastive adapter training~\cite{zhang2022contrastive}. Broader work on group robustness includes robust optimization with group labels~\cite{arjovsky2019invariant,sagawadistributionally} and label-free methods such as LfF~\cite{nam2020learning} and JTT~\cite{liu2021just}. While effective, these approaches require retraining on labeled datasets, making them impractical for deployment-time bias mitigation.

\textbf{Test-time debiasing of VLMs.} At inference, most debiasing methods perform embedding space projections to remove biased directions~\cite{chuang2023debiasing,gerych2024bendvlm,adilazero,lu2025mitigating}. However, many are not fully unsupervised with respect to sensitive attributes~\cite{gerych2024bendvlm,adilazero,lu2025mitigating}, require batched inputs~\cite{adilazero,lu2025mitigating}, or rely on LLM-generated attribute descriptions~\cite{adilazero,PerceptionCLIP}. \textsc{OrthCali}~\cite{chuang2023debiasing} is a fully episodic, unsupervised baseline applicable to our setting; however, it does not follow a standard TTA procedure. In this work, we explore fairness-aware, entropy-based episodic TTA for VLMs. \textsc{SEraser}~\cite{ma2025spurious} can be viewed as an extreme form of entropy maximization that suppresses spurious features without explicitly balancing target accuracy.

\section{Problem Setup and Background}
\label{sec:problem-setup}

\textbf{Problem Setup.} We consider a test-time image classification task in a zero-shot or test-time adaptive setting, where the goal is to predict a target attribute from an input image without access to any task-specific labeled training data. Each test instance consists of an image $x \in \gX$, an unknown target label $y \in \gY = \bcc{y_1, y_2, \dots, y_C}$, and a sensitive or spurious attribute $s \in \gS = \bcc{s_1, s_2, \dots, s_G}$. The classifier only receives $x$ at inference time; $y$ and $s$ are used solely for evaluation.

Since no labeled data is available at test time, we employ zero-shot or test-time adaptive classifiers derived from vision-language models (VLMs) such as the Contrastive Language-Image Pre-training (CLIP) model~\cite{radford2021learning}. These models are pre-trained on large-scale image-text datasets and can perform recognition tasks directly by aligning image features with prompt-based textual features. However, due to correlations in the pretraining data, VLMs may encode spurious associations between the target label $y$ and the sensitive attribute $s$. For instance, textual descriptions of certain occupations may be disproportionately associated with one gender. Consequently, classifiers derived from such VLMs can exhibit subgroup performance disparities, particularly when the joint or marginal distributions of $(x, y, s)$ shifts between the pretraining and test domains.

To systematically quantify and analyze these disparities, we adopt the group robustness framework of \cite{sagawadistributionally}. Let $P_\textnormal{test}$ denote the test distribution over triples $(x,y,s)$. For a classifier $f: \gX \to \gY$, we define the average error as $\textsc{Err}^\textnormal{avg}(f) = \Expectover{(x, y) \sim P_\textnormal{test}}{\mathbf{1}\{f(x) \ne y\}}$, and the worst-group error as $\textsc{Err}^\textnormal{wg}(f) = \max_{s \in \gS} \Expectover{(x, y) \sim P_\textnormal{test} \mid s}{\mathbf{1}\{f(x) \ne y\}}$, where $P_\textnormal{test} \mid s$ denotes the conditional distribution given the sensitive attribute value $s$. The robustness gap is then defined as the difference $\textsc{Gap}(f) = \textsc{Err}^\textnormal{avg}(f) - \textsc{Err}^\textnormal{wg}(f)$, capturing the discrepancy between the overall and worst-case group performance. The goal of this work is to design test-time adaptive classifiers that achieve high overall accuracy while minimizing the robustness gap, thereby promoting fairness under distribution shifts along sensitive attributes.

\textbf{Zero-Shot Classification.} To perform zero-shot classification, we leverage a pre-trained VLM, which learns a shared embedding space for images and natural language prompts. The model consists of an image encoder $f_\textnormal{V}$, which maps an image $x \in \gX$ to a feature vector $\vv = f_\textnormal{V}(x) \in \mathbb{R}^d$, and a text encoder $f_\textnormal{L}$, which maps a text sequence (e.g., a class-descriptive prompt) to a feature vector $\vl \in \mathbb{R}^d$. The classifier predicts the target attribute $y \in \gY$ by measuring the similarity between the image feature $\vv$ and a set of class-specific text features $\vl_y$ derived from prompts.

To construct these prompts, a hand-crafted template containing a placeholder (e.g., \texttt{"a photo of a [T-CLS] person"} where \texttt{[T-CLS]} is the placeholder for a target class) is instantiated with a textual description for each target label $y \in \gY$. For example, for the attribute \texttt{"smiling"}, we may use: \{ \texttt{"a photo of a smiling person"}, \texttt{"a photo of a non-smiling person"} \}. These completed prompts are tokenized and embedded into soft prompt representations using the model's token embedding table. For simplicity, each soft prompt is represented as a non-ordered tuple $(\vt_\textnormal{ctx}, \vt_y) \in \mathbb{R}^{L_y \times d}$, where $\vt_\textnormal{ctx}$ encodes the shared context tokens, and $\vt_y$ encodes the label-specific portion. The length $L_y$ corresponds to the total number of tokens in the prompt for label $y$, and $d$ is the token embedding dimension. This results in a soft prompt set $\bcc{\vt_\textnormal{ctx}; \gY} = \bcc{(\vt_\textnormal{ctx}, \vt_y): y \in \gY}$, which is used to generate the set of text features $\bcc{\vl_y: y \in \gY}$, where $\vl_y = f_\textnormal{L}((\vt_\textnormal{ctx}, \vt_y))$.

During inference, the image feature $\vv$ is compared to each text feature $\vl_y$ using cosine similarity, denoted $\texttt{sim}(\vl_y, \vv) = \frac{\ipp{\vl_y}{\vv}}{\norm{\vl_y} \cdot \norm{\vv}}$. These similarity scores are passed through a softmax function to produce a probability distribution over the target classes $ y \in \gY$:
\[
p(y \mid x, \bcc{\vt_\textnormal{ctx}; \gY}, \tau) ~=~ \frac{\exp\brr{\tau \cdot \texttt{sim}(\vl_y, \vv)}}{\sum_{y' \in \gY} \exp\brr{\tau \cdot \texttt{sim}(\vl_{y'}, \vv)}} ,
\]
where $\tau > 0$ is a temperature parameter that controls the sharpness of the output distribution. The final prediction is chosen from the above distribution: $\widehat{y} = \textnormal{argmax}_y p(y \mid x, \bcc{\vt_\textnormal{ctx}; \gY}, \tau)$. This zero-shot framework forms the foundation upon which we develop our fairness-aware prompt tuning method.

\textbf{Test-Time Adaptation (TTA).} We study episodic test-time adaptation, where the model processes one test image at a time and is allowed to adapt its parameters within a reasonable time before producing the final prediction. The model receives an image $x \in \gX$ and has no access to its true label $y$ or sensitive attribute $s$ during inference. Adaptation is performed using a pre-trained VLM $\bcc{f_\textnormal{V}, f_\textnormal{L}}$, together with a set of augmentation functions $\gA = \bcc{A_1, \dots, A_N}$ that generate alternative views of $x$. This is performed using AugMix~\cite{hendrycks2020augmix}. The aim is to leverage these augmented views to improve prediction robustness under distribution shifts.

We first consider Test-Time Prompt Tuning (\textsc{TPT})~\cite{shu2022test}, an adaptation of marginal entropy minimization (MEM)~\cite{zhang2022memo} to the vision-language setting. MEM was originally proposed for unimodal vision models, but \cite{shu2022test} demonstrated that it can be repurposed for VLMs by optimizing prompt embeddings rather than the model weights~\cite{zhou2022learning,zhou2022conditional,khattak2023maple}. Specifically, a trainable soft-prompt set $\bcc{\vt_\textnormal{ctx}; \gY} = \bcc{(\vt_\textnormal{ctx}, \vt_y): y \in \gY}$ is constructed as in the zero-shot procedure, where $\vt_\textnormal{ctx}$ represents the shared context and $\vt_y$ encodes the label-specific portion. For each test image $x$, the augmentation set $\gA$ is applied to generate $N$ views, producing a marginal predictive distribution:
\[
\overline{p}(y \mid x, \bcc{\vt_\textnormal{ctx}; \gY}, \tau) ~=~ \frac{1}{N} \sum_{i=1}^N p(y \mid A_i(x), \bcc{\vt_\textnormal{ctx}; \gY}, \tau),
\]
where $p(\cdot \mid \cdot)$ is computed as in the zero-shot classifier and $\tau > 0$ is the softmax temperature. Justified by the desired augmentation-invariance and improved confidence of the predictions~\cite{zhang2022memo}, the adaptation objective is to minimize the entropy of this marginal distribution for each input $x$:
\begin{equation}
\begin{aligned}
&\min_{\vt_\textnormal{ctx}} H(\overline{p}(\cdot \mid x, \bcc{\vt_\textnormal{ctx}; \gY}, \tau)) ~:=~ - \sum_{y \in \gY} \overline{p}(y \mid x, \bcc{\vt_\textnormal{ctx}; \gY}, \tau) \cdot \log \overline{p}(y \mid x, \bcc{\vt_\textnormal{ctx}; \gY}, \tau) .
\end{aligned}
\label{eq:tpt-objective}
\end{equation}
This optimization is performed for $n_\textnormal{epochs}$ iterations using an update rule $G$ with learning rate $\eta$, yielding an adapted context embedding $\vt_\textnormal{ctx}^*$. The final prediction is then obtained by applying the zero-shot classification procedure with the updated soft-prompt set $\bcc{\vt_\textnormal{ctx}^*; \gY}$. Following \cite{shu2022test}, we employ AdamW~\cite{loshchilovdecoupled} as the optimizer, with $\eta = 5e-3$ and $n_\textnormal{epochs} = 1$.\footnote{Note that we did not use CoOp or CoCoOp ~\cite{zhou2022learning, zhou2022conditional}, as they were not necessary to achieve a performance uplift in the original study.}

As a recent and simpler alternative, we also consider the \textsc{Zero} method of \cite{farina2024frustratingly}, which avoids any adaptation step and instead aggregates predictions from the augmented views directly. The predicted label is obtained as 
\begin{equation}
\begin{aligned}
&\textsc{Zero}(x, \bcc{\vt_\textnormal{ctx}; \gY}) ~=~ \argmax_{y \in \gY} \sum_{i=1}^N \lim_{\tau \to 0^{+}} p(y \mid A_i(x), \bcc{\vt_\textnormal{ctx}; \gY}, \tau) ,
\end{aligned}
\label{eq:zero-opt}
\end{equation}
corresponding to majority voting over deterministic predictions from each view. 

For both \textsc{TPT} and \textsc{Zero}, performance and stability can be improved through confidence-based view filtering~\cite{shu2022test,niutowards,farina2024frustratingly}. The idea is to retain the top-$\rho$ fraction of high-confidence (low-entropy) augmented views for the target attribute prediction. High-entropy views are discarded as they often lack sufficient information for reliable classification. Formally, the retained augmentation set is
$
\gA_\textnormal{filtered}(x) ~=~ \bcc{A_i: H(p(\cdot \mid A_i(x), \bcc{\vt_\textnormal{ctx}; \gY}, \tau)) < \delta} ,
$
where $\delta$ is a threshold chosen to retain the top-$\rho$ fraction of most confident views. In both cases, the original studies were setting $\rho=0.1$.

\section{Fairness-Aware TPT}
\label{sec:our-method}

\begin{figure*}
    \centering
    \includegraphics[width=\linewidth]{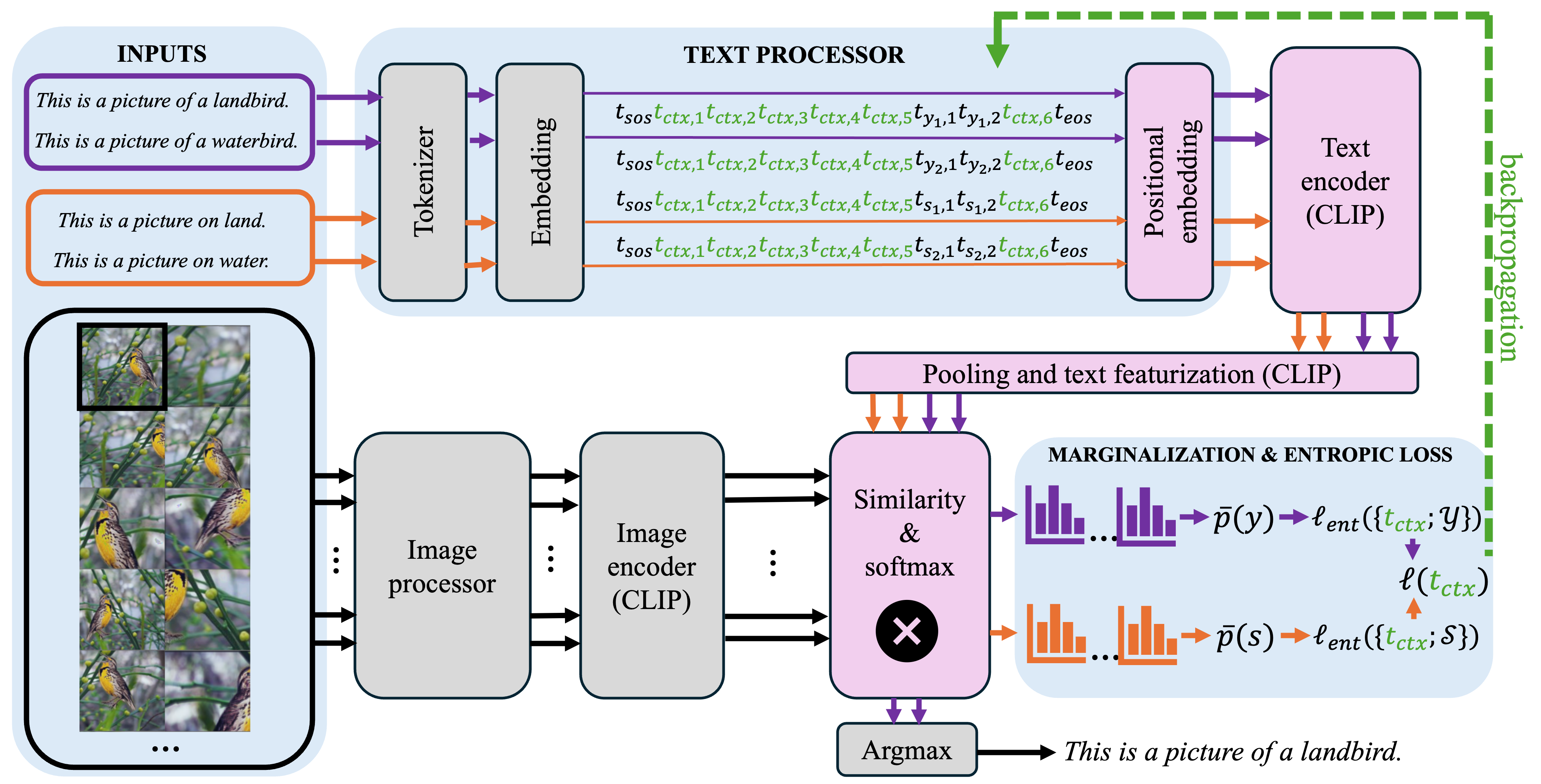}
    \caption{\textsc{FairTPT} pipeline using a single unlabeled image at a time (black frame). The soft prompt $\vt_\textnormal{ctx}$ is the only parameter tuned at test time, requiring text processor to be re-written. Pink and grey blocks have frozen weights; gradients are computed only for pink blocks. This example illustrates the objective in \Eqref{eq:fair-tpt-objective}. For the alternative objective in \Eqref{eq:fair-tpt-alt-objective}, the spurious prompts (orange) and their corresponding soft prompts would also include target labels, resulting in four spurious inputs in this example.}
    \label{fig:pipeline}
\end{figure*}

\textbf{Fairness-Aware Test-Time Prompt Tuning (\textsc{FairTPT}).} We propose Fairness-Aware Test-Time Prompt Tuning (\textsc{FairTPT}), a fairness-aware extension of test-time prompt tuning designed to improve subgroup robustness while preserving overall accuracy. The key observation is that standard entropy minimization in \textsc{TPT} encourages confident predictions by exploiting any predictive signal available at test time, which can inadvertently amplify spurious correlations. \textsc{FairTPT} counteracts this effect by explicitly maximizing the marginal entropy over a specified sensitive attribute while minimizing entropy over the target attribute. By making predictions uncertain with respect to the spurious attribute, the reliance of the target prediction on spurious cues is reduced. This yields a principled min–max formulation grounded in entropy-based debiasing~\cite{roy2019mitigating}.

As in the zero-shot classification setting, we construct a trainable target soft-prompt set $\bcc{\vt_\textnormal{ctx}; \gY} = \bcc{(\vt_\textnormal{ctx}, \vt_y): y \in \gY}$ where $\vt_\textnormal{ctx}$ is the shared context embedding and $\vt_y$ encodes the target label description. In parallel, we construct a spurious soft-prompt set $\bcc{\vt_\textnormal{ctx}; \gS} = \bcc{(\vt_\textnormal{ctx}, \vt_s): s \in \gS}$ for the sensitive attribute, reusing the same shared context $\vt_\textnormal{ctx}$ across both sets.

Following the marginal entropy minimization framework in \textsc{TPT}, we compute two marginal predictive distributions: the target marginal $\overline{p}(\cdot \mid x, \bcc{\vt_\textnormal{ctx}; \gY}, \tau)$ over $\gY$ and the spurious marginal $\overline{p}(\cdot \mid x, \bcc{\vt_\textnormal{ctx}; \gS}, \tau)$ over $\gS$, each obtained by averaging predictions over the same augmented views $\{A_i(x): A_i \in \gA\}$. To jointly promote confident predictions for the target attribute and uncertainty for the spurious attribute, we optimize the objective
\begin{equation}
\min_{\vt_\textnormal{ctx}} \frac{1}{1+\lambda_\textnormal{fair} }\ell_\textnormal{ent} (x, \bcc{\vt_\textnormal{ctx}; \gY}) - \frac{\lambda_\textnormal{fair}}{1+\lambda_\textnormal{fair}} \cdot \ell_\textnormal{ent} (x, \bcc{\vt_\textnormal{ctx}; \gS}) ,
\label{eq:fair-tpt-objective}
\end{equation}
where $\lambda_\textnormal{fair} \geq 0$ balances the accuracy–fairness trade-off\footnote{Note that $\lambda_\textnormal{fair} = 0$ recovers the \textsc{TPT} baseline, and $\lambda_\textnormal{fair} \to \infty$ corresponds to optimizing solely for reduced confidence in the spurious attribute, ignoring the target attribute.}, and for $\star \in \bcc{\gY, \gS}$
\[
\ell_\textnormal{ent} (x, \bcc{\vt_\textnormal{ctx}; \star}) ~=~ \frac{1}{\log \abs{\star}} \cdot H(\overline{p}(\cdot \mid x, \bcc{\vt_\textnormal{ctx}; \star}, \tau)) .
\]
The objective is minimized for $n_\textnormal{epochs}$ iterations using a gradient-based update rule $G$ with learning rate $\eta$, producing an adapted context embedding $\vt_\textnormal{ctx}^*$. The final prediction is then obtained by applying the zero-shot classifier with the updated target soft-prompt set $\bcc{\vt_\textnormal{ctx}^*; \gY}$. The complete \textsc{FairTPT} procedure is provided in Algorithm~\ref{alg:fair-tta-cls} in the appendix.

\textbf{Alternative Formulation for the Test-Time Optimization.} We also consider an alternative formulation in which spurious prompts are conditioned on target label information. In this case, we define a joint prompt template containing two placeholders, one for the spurious attribute and one for the target attribute, e.g., \texttt{"a photo of a [S-CLS] celebrity [T-CLS]"}. For each pair $(y, s) \in \gY \times \gS$, this yields a prompt such as \texttt{"a photo of a male celebrity smiling"} or \texttt{"a photo of a female celebrity smiling"}. Each prompt is embedded as $(\vt_\textnormal{ctx}, \vt_y, \vt_s) \in \mathbb{R}^{L_{ys} \times d}$, and for a fixed $y$ we construct a spurious soft-prompt set $\bcc{\vt_\textnormal{ctx}, \vt_y; \gS} = \bcc{(\vt_\textnormal{ctx}, \vt_y, \vt_s): s \in \gS}$. The corresponding marginal distribution $\overline{p}(\cdot \mid x, \bcc{\vt_\textnormal{ctx}, \vt_y; \gS}, \tau)$ is then used to replace the spurious term in the objective in \Eqref{eq:fair-tpt-objective}:
\begin{equation}
\begin{aligned}
&\min_{\vt_\textnormal{ctx}} \frac{1}{1+\lambda_\textnormal{fair}}\ell_\textnormal{ent} (x, \bcc{\vt_\textnormal{ctx}; \gY}) - \frac{\lambda_\textnormal{fair}}{1+\lambda_\textnormal{fair}} \cdot \frac{1}{\abs{\gY}} \sum_{y \in \gY} \ell_\textnormal{ent} (x, \bcc{\vt_\textnormal{ctx}, \vt_y; \gS}) ,
\end{aligned}
\label{eq:fair-tpt-alt-objective}
\end{equation}
where 
\[
\ell_\textnormal{ent} (x, \bcc{\vt_\textnormal{ctx}, \vt_y; \gS}) ~=~ \frac{1}{\log \abs{\gS}} \cdot H(\overline{p}(\cdot \mid x, \bcc{\vt_\textnormal{ctx}, \vt_y; \gS}, \tau)) .
\]
We refer to the objective in \Eqref{eq:fair-tpt-objective} as the \textit{S} loss, and the objective in \Eqref{eq:fair-tpt-alt-objective} as the \textit{TS} loss.

It is also possible to make the context $\vt_\textnormal{ctx}$ target-dependent and optimize it separately for each target soft prompt:
\begin{equation}
\begin{aligned}
&\min_{\{\vt_\textnormal{ctx}^{(y)}, y \in \gY\}} \frac{1}{1+\lambda_\textnormal{fair}}\cdot\frac{1}{\abs{\gY}} \sum_{y \in \gY} \ell_\textnormal{ent} (x, \{\vt_\textnormal{ctx}^{(y)}; \gY\}) - \frac{\lambda_\textnormal{fair}}{1+\lambda_\textnormal{fair}} \cdot \frac{1}{\abs{\gY}} \sum_{y \in \gY} \ell_\textnormal{ent} (x, \{\vt_\textnormal{ctx}^{(y)}, \vt_y; \gS\}) ,
\end{aligned}
\label{eq:fair-tpt-alt-objective2}
\end{equation}
where $\vt_\textnormal{ctx}^{(y)}$ is the tunable soft context for target $y$ and excludes its tokens. We refer to this second alternative as the \textit{Super TS} loss.

\textbf{Learning Rate Adaptation.} 
Large gradient updates during test-time adaptation can degrade performance, sometimes even dropping below that of zero-shot classification \cite{niutowards}. To mitigate the risk of model collapse and monitor the adaptation speed, we employ a learning rate adaptation heuristic, whose implementation is described below and presented in Algorithm~\ref{alg:elra} in the appendix.

We choose the initial learning rate $\eta_\textnormal{init}$ from a regime in which $\vt_\textnormal{ctx}$ undergoes only a small change in norm after a single optimization step. This defines the linear regime, where a first-order Taylor expansion of the loss function $\ell$ with respect to $\vt_\textnormal{ctx}$ is valid. For a simple optimizer such as SGD, the update rule is $G: \vt_\textnormal{ctx} \gets \vt_\textnormal{ctx} -\eta \cdot \nabla_{\vt_\textnormal{ctx}} {\ell}$. The linear regime is achieved when $\eta_\textnormal{init} = \sigma \cdot \norm{\vt_{\textnormal{ctx}}}/\norm{\nabla_{\vt_\textnormal{ctx}} {\ell}}$ for $\sigma \ll 1$, where we set $\sigma = 0.01$ in our pipeline.

Operating in this regime allows direct control over parameter changes by linearly scaling the learning rate. After each optimization step, we monitor the absolute change in the target loss, $\abs{\Delta \ell_\textnormal{ent} (x, \bcc{\vt_\textnormal{ctx}; \gY})}$, and bound it by $\beta$ via the rescaling $\eta \gets \eta \cdot \beta/\abs{\Delta \ell_\textnormal{ent} (x, \bcc{\vt_\textnormal{ctx}; \gY})}$. This dynamic adjustment stabilizes adaptation and prevents divergence from overly large updates. Moreover, continuity of the pre-argmax model implies that bounding parameter changes also bounds logit variations; hence, small parameter updates in the linear regime yield predictable and controlled accuracy changes.



\textbf{Multi-Objective Optimization.} The objective in \Eqref{eq:fair-tpt-objective} (or \Eqref{eq:fair-tpt-alt-objective}) comprises the target term $\ell_\textnormal{ent} (x, \bcc{\vt_\textnormal{ctx}; \gY})$ and the spurious term $- \ell_\textnormal{ent} (x, \bcc{\vt_\textnormal{ctx}; \gS})$, the gradients of which may conflict. Optimizing only their combined scalar loss risks improving one term at the expense of the other. We therefore cast the problem as a multi-objective optimization (MO) task: 
\begin{equation}
\min_{\vt_\textnormal{ctx}} \brr{\ell_\textnormal{ent} (x, \bcc{\vt_\textnormal{ctx}; \gY}), - \ell_\textnormal{ent} (x, \bcc{\vt_\textnormal{ctx}; \gS})}. 
\label{eq:fair-tpt-multi-objective}
\end{equation}
We solve this MO problem using Jacobian descent~\cite{quinton2024jacobian}, which optimizes vector-valued objectives by computing per-term gradients and aggregating them via a chosen aggregator. The aggregator can incorporate a weighting parameter $\lambda_\textnormal{fair (mo)}$ for the spurious term. Jacobian descent guarantees a simultaneous decrease in both objectives, provided the learning rate is sufficiently small~\cite{quinton2024jacobian}.  
In our experiments, we use the TorchJD library\footnote{\url{https://torchjd.org/stable/}} with the Unconflicting Projection of Gradients (UPGrad) aggregator. 

\section{Experiments}
\label{sec:experiments}

In this section, we comprehensively evaluate \textsc{FairTPT} against episodic test-time adaptation (TTA) baselines (\textsc{TPT} and \textsc{Zero}) as well as an episodic test-time debiasing baseline, \textsc{OrthCali}. Ablations are also performed to quantify the impact of key design choices and hyperparameters. 

\subsection{Experimental setup}

We use CLIP~\cite{radford2021learning} as the base model, specifically the \texttt{ViT-L/14} variant (approximately 428M parameters), obtained from Hugging Face.\footnote{\url{https://huggingface.co/openai/clip-vit-large-patch14}} 

\textbf{Datasets.} We evaluate on standard benchmarks from the algorithmic fairness literature: \textsc{CelebA}~\cite{liu2015deep}, \textsc{UTKFace}~\cite{zhang2017age}, \textsc{FairFace}~\cite{karkkainen2021fairface}, and \textsc{WaterBirds}~\cite{sagawadistributionally}. For each run and each dataset, we uniformly sample $K=1000$ images. The target and spurious attributes used in our experiments are specified in Table~\ref{table:dataset} (in the appendix), and the initial prompt templates with placeholders for target and sensitive attributes, which are required by \textsc{FairTPT} and the episodic TTA baselines, are given in Table~\ref{table:prompt-template} (in the appendix). Across these datasets, zero-shot accuracy of the base model ranges from roughly $70\%$ to over $95\%$, while the corresponding bias (formally defined below) ranges from about $5\%$ to over $40\%$. These characteristics make the suite well-suited for probing the behavior of test-time adaptation and debiasing methods under subpopulation distribution shift.

\textbf{Evaluation Metrics.} Let $M \in \mathbb{R}^{C \times C}$ denote the overall confusion matrix, where $M_{ij}$ counts examples with true label $y_i$ predicted as $y_j$ across $K$ test episodes, and let $M^{(s)} \in \mathbb{R}^{C \times C}$ be the confusion matrix restricted to group $s \in \gS$ with $K^{(s)}$ episodes. We report overall accuracy $\textsc{Acc} = \frac{1}{K} \sum_{i \in [C]} M_{ii}$, worst-group accuracy $\textsc{WGA} = \min_{s \in \gS} \frac{1}{K^{(s)}} \sum_{i \in [C]} M_{ii}^{(s)}$, and bias $\textsc{Bias} = \textsc{Acc} - \textsc{WGA}$. We also measure equalized-odds difference, defined as $\textsc{EOD} = \mathrm{mean}_{y_i} \max \{\mathrm{Gap}_{s} \textsc{TPR}^{(s)}_{y_i}, \mathrm{Gap}_{s} \textsc{FPR}^{(s)}_{y_i}\}$, where $\textsc{TPR}^{(s)}_{y_i} = M_{ii}^{(s)} / \sum_{j \in [C]} M_{ij}^{(s)}$, $\textsc{FPR}^{(s)}_{y_i} = \sum_{j \in [C], j \ne i} M_{ji}^{(s)} / \sum_{j \in [C], j \ne i} \sum_{k \in [C]} M_{jk}^{(s)}$, and $\mathrm{Gap}_{s} \textsc{TPR}^{(s)}_{y_i} = \max_{s \in \gS} \textsc{TPR}^{(s)}_{y_i} - \min_{s \in \gS} \textsc{TPR}^{(s)}_{y_i}$ (analogously for FPR). 

\subsection{Results and Discussion}

All results are averaged over five independent runs with different random seeds. We report (i) overall performance using overall accuracy, and (ii) subgroup-level performance using worst-group accuracy (WGA; higher is better), bias (lower is better), and equalized odds difference (EOD; lower is better). Unless otherwise stated, ``our method" refers to both \textsc{FairTPT} and \textsc{FairTPT (MO)}, each equipped with ELRA and \textit{S} loss.

\textbf{Questions and evaluation protocol.} We organize the analysis around three evaluations. \textbf{E1} probes the behavior of episodic TTA methods designed to improve overall accuracy (\textsc{TPT} and \textsc{Zero}), including their fairness implications and hyperparameter sensitivity. \textbf{E2} examines an episodic test-time debiasing baseline (\textsc{OrthCali}), focusing on whether it improves subgroup performance without sacrificing overall accuracy, and how sensitive it is to its hyperparameters. \textbf{E3} evaluates \textsc{FairTPT}, asking whether it retains or improves overall and subgroup-level performance relative to zero-shot and baselines, how sensitive it is to hyperparameters, and which components matter through ablations. We report both aggregated trends (averaged across datasets) and per-dataset behavior.

\setlength{\fboxsep}{1.5pt}
\begin{table*}[h]
\centering
\tiny
\resizebox{\linewidth}{!}{
\begin{tabular}{l@{\hskip 0.05in}
c@{\hskip 0.02in}c@{\hskip 0.02in}c@{\hskip 0.02in}c@{\hskip 0.05in}
c@{\hskip 0.02in}c@{\hskip 0.02in}c@{\hskip 0.02in}c@{\hskip 0.05in}
c@{\hskip 0.02in}c@{\hskip 0.02in}c@{\hskip 0.02in}c@{\hskip 0.05in}
c@{\hskip 0.02in}c@{\hskip 0.02in}c@{\hskip 0.02in}c@{\hskip 0.05in}
c@{\hskip 0.02in}c@{\hskip 0.02in}c@{\hskip 0.02in}c@{\hskip 0.05in}
c@{\hskip 0.02in}c@{\hskip 0.02in}c@{\hskip 0.02in}c@{\hskip 0.05in}|
c@{\hskip 0.02in}c@{\hskip 0.02in}c@{\hskip 0.02in}c}
\toprule
\textsc{Method} 
& \multicolumn{4}{c}{\textsc{FairFace}} 
& \multicolumn{8}{c}{\textsc{CelebA}} 
& \multicolumn{4}{c}{\textsc{WaterBirds}} 
& \multicolumn{8}{c}{\textsc{UTKFace}} 
& \multicolumn{4}{c}{\textsc{Average Results}} \\
\cmidrule(lr){2-5} 
\cmidrule(lr){6-13} 
\cmidrule(lr){14-17} 
\cmidrule(lr){18-25} 
& \multicolumn{4}{c}{$\text{Gender} \times \text{Race}$} 
& \multicolumn{4}{c}{$\text{Hair color} \times \text{Gender}$} 
& \multicolumn{4}{c}{$\text{Smiling} \times \text{Gender}$} 
& \multicolumn{4}{c}{$\text{Type} \times \text{Background}$} 
& \multicolumn{4}{c}{$\text{Age} \times \text{Race}$} 
& \multicolumn{4}{c}{$\text{Gender} \times \text{Race}$} 
& \multicolumn{4}{c}{} \\
\cmidrule(lr){2-5} 
\cmidrule(lr){6-9} 
\cmidrule(lr){10-13} 
\cmidrule(lr){14-17} 
\cmidrule(lr){18-21} 
\cmidrule(lr){22-25} 
\cmidrule(lr){26-29}
& A & WGA & B & EOD 
& A & WGA & B & EOD 
& A & WGA & B & EOD 
& A & WGA & B & EOD 
& A & WGA & B & EOD 
& A & WGA & B & EOD 
& A & WGA & B & EOD \\
\midrule

\textsc{Zero-Shot} & \underline{95.7} & 90.0 & 5.7 & 9.4 & 86.4 & \underline{67.8} & 18.6 & 22.3 & 75.8 & 53.7 & 22.1 & 8.0 & \underline{83.8} & 40.2 & 43.7 & 25.0 & 80.3 & 45.7 & 34.5 & 23.2 & \textbf{97.1} & 90.1 & 7.0 & 9.1 & \textbf{86.5} & 64.6 & 21.9 & 16.2 \\
\midrule
\multicolumn{22}{l}{\textit{Episodic test-time adaptation methods}} \\
\textsc{TPT} & \colorbox{red!25}{93.6} & \colorbox{red!25}{85.0} & \colorbox{red!25}{8.6} & \colorbox{red!25}{12.4} & \colorbox{green!25}{\textbf{92.0}} & \colorbox{red!25}{42.6} & \colorbox{red!25}{49.4} & \colorbox{red!25}{37.6} & \colorbox{red!25}{59.4} & \colorbox{red!25}{20.0} & \colorbox{red!25}{39.4} & \colorbox{green!25}{\textbf{5.0}} & \textbf{84.0} & 40.8 & 43.1 & \colorbox{red!25}{37.1} & \colorbox{green!25}{\textbf{87.5}} & \colorbox{green!25}{\textbf{57.7}} & \colorbox{green!25}{29.8} & \colorbox{red!25}{26.3} & \colorbox{red!25}{94.3} & \colorbox{red!25}{86.4} & 7.9 & \textbf{8.0} & 85.1 & \colorbox{red!25}{55.4} & \colorbox{red!25}{29.7} & \colorbox{red!25}{21.1} \\
\textsc{Zero} & \colorbox{red!25}{91.2} & \colorbox{red!25}{78.6} & \colorbox{red!25}{12.6} & \colorbox{red!25}{11.9} & \colorbox{green!25}{\underline{90.4}} & \colorbox{red!25}{56.3} & \colorbox{red!25}{34.0} & 23.8 & \colorbox{red!25}{69.9} & \colorbox{red!25}{38.0} & \colorbox{red!25}{31.9} & \colorbox{red!25}{11.1} & 83.1 & 40.0 & 43.2 & \colorbox{red!25}{29.0} & \colorbox{green!25}{\underline{86.2}} & \colorbox{green!25}{\underline{53.8}} & \colorbox{green!25}{32.4} & \colorbox{red!25}{28.7} & \colorbox{red!25}{93.5} & \colorbox{red!25}{77.8} & \colorbox{red!25}{15.7} & \colorbox{red!25}{13.4} & 85.7 & \colorbox{red!25}{57.4} & \colorbox{red!25}{28.3} & \colorbox{red!25}{19.6} \\
\midrule
\multicolumn{22}{l}{\textit{Episodic test-time debiasing methods}} \\
\textsc{OrthCali} & \textbf{95.9} & \colorbox{red!25}{87.8} & \colorbox{red!25}{8.1} & 11.3 & 85.1 & \colorbox{green!25}{\textbf{71.6}} & \colorbox{green!25}{\textbf{13.5}} & 22.0 & \colorbox{red!25}{71.4} & \colorbox{red!25}{36.8} & \colorbox{red!25}{34.6} & \colorbox{red!25}{30.6} & 83.3 & \colorbox{green!25}{\textbf{62.0}} & \colorbox{green!25}{\textbf{21.4}} & \colorbox{green!25}{\textbf{16.2}} & 79.8 & 47.0 & 32.8 & \colorbox{green!25}{\textbf{14.4}} & \underline{96.8} & 88.9 & 7.9 & 10.1 & 85.4 & 65.7 & \colorbox{green!25}{\underline{19.7}} & 17.4 \\
\midrule
\multicolumn{22}{l}{\textit{Our method}} \\
\textsc{FairTPT} & 95.5 & \textbf{90.7} & \textbf{4.8} & \underline{8.1} & 85.6 & 67.7 & 17.9 & \underline{21.9} & \underline{75.9} & \colorbox{green!25}{\underline{56.1}} & \colorbox{green!25}{\underline{19.8}} & \underline{6.2} & 83.2 & \colorbox{green!25}{\underline{42.4}} & \colorbox{green!25}{\underline{40.8}} & \underline{23.5} & 81.0 & \colorbox{green!25}{51.5} & \colorbox{green!25}{\underline{29.5}} & 23.4 & 96.7 & \underline{90.6} & \underline{6.1} & 8.6 & \underline{86.3} & \underline{66.5} & \colorbox{green!25}{19.8} & \underline{15.3} \\
\textsc{FairTPT (MO)} & 95.3 & \underline{90.4} & \underline{4.9} & \textbf{7.9} & 85.3 & 67.5 & \underline{17.8} & \textbf{21.3} & \textbf{76.1} & \colorbox{green!25}{\textbf{57.8}} & \colorbox{green!25}{\textbf{18.3}} & 6.3 & 83.2 & 41.4 & 41.8 & 24.6 & 80.8 & \colorbox{green!25}{51.6} & \colorbox{green!25}{\textbf{29.3}} & \colorbox{green!25}{\underline{20.9}} & 96.6 & \textbf{90.9} & \textbf{5.8} & \underline{8.1} & 86.2 & \colorbox{green!25}{\textbf{66.6}} & \colorbox{green!25}{\textbf{19.6}} & \textbf{14.8} \\
\bottomrule
\end{tabular}
}
\caption{Overall (Accuracy) and subgroup-level performance (Worst-Group Accuracy, Bias, and Equalized Odds Difference) evaluation of all the methods considered (ours and baselines). We report mean over 5 random seeds and mean aggregation over all equally sized datasets. Best results are in \textbf{bold}, second best \underline{underlined}. The values that improve upon \textsc{Zero-Shot} by more than $2.0$ percentage points are highlighted in \colorbox{green!25}{green}, while degradations greater than $2.0$ percentage points are shown in \colorbox{red!25}{red} (arbitrary threshold). We set $\lambda_\textnormal{fair} = 100$ and $\lambda_\textnormal{fair (mo)} = 100$ for \textsc{FairTPT} and \textsc{FairTPT (MO)}, respectively. Hyperparameter values of all methods are presented in Table~\ref{tab:hyperparam}.}
\label{tab:main_results_reduced}
\end{table*}

\textbf{Main comparison.} Table~\ref{tab:main_results_reduced} summarizes the overall and subgroup-level results. For all baselines we keep their recommended hyperparameters fixed across datasets. See Table~\ref{tab:hyperparam} in the appendix for the hyperparameter values of our methods and baselines. Overall, when averaged across all datasets, our methods' accuracy and subgroup metrics are comparable to or better than those of all baselines.

\textbf{Overall accuracy (E1-3).} Episodic TTA methods do not consistently improve accuracy over zero-shot across datasets. We observe cases with improvements exceeding +2.0pp (percentage points) and others with degradations below -2.0pp; on aggregate, accuracy remains close to or marginally below zero-shot. This aligns with recent reports on the brittleness of episodic TTA methods, which have rarely been benchmarked on fairness-oriented datasets. Even when incorporating model-recovery and sharpness-aware updates~\cite{niutowards}, we did not see consistent gains. \textsc{OrthCali} generally retains zero-shot accuracy, with the exception of \textsc{CelebA} (Hair color $\times$ Gender), where it underperforms. On aggregate, accuracy for \textsc{OrthCali} remains close to or marginally below zero-shot. Both \textsc{FairTPT} variants consistently retain zero-shot accuracy across datasets, and their aggregated accuracy is on par with (or marginally below) zero-shot.

\textbf{Subgroup metrics (E1-3).} In most dataset/attribute configurations, episodic TTA methods worsen subgroup performance relative to zero-shot, with more than -2.0pp (percentage points) drops observed in WGA, Bias, and EOD. Aggregated over datasets, all subgroup metrics degrade by more than -2.0pp. This underscores fairness risks of episodic TTA under subpopulation shift. \textsc{OrthCali} does not improve subgroup metrics consistently. We observe both improvements beyond +2.0pp and degradations below -2.0pp depending on the dataset; on aggregate, WGA and Bias slightly improve while EOD slightly degrades. \textsc{FairTPT} and \textsc{FairTPT (MO)} retain or improve subgroup metrics across datasets, with several cases exceeding +2.0pp gains. Aggregated over datasets, all subgroup metrics improve, with WGA and Bias improving by more than +2.0pp.

\textbf{Sensitivity of episodic TTA (E1).} We study \textsc{TPT} sensitivity to the learning rate $\eta$ and confidence threshold $\rho$ in Table~\ref{tab:tpt_scan} in the appendix, keeping other hyperparameters fixed as in Table~\ref{tab:hyperparam}.

\underline{Effect of $\eta$.} 
Aggregated results show that subgroup metrics (WGA, Bias, EOD) begin to degrade even at moderate $\eta \approx 1e-3$ (often beyond -2.0pp), while overall accuracy remains stable until much larger $\eta$, typically degrading around $\eta \approx 1e-1$. This trend is consistent across datasets: a broad $\eta$ range preserves accuracy, but only a narrow $\eta$ range preserves subgroup performance. The precise onset of degradation is dataset dependent. Subgroup metrics are thus more sensitive than overall accuracy to $\eta$, reinforcing the need for careful learning-rate selection at test time.

\underline{Effect of $\rho$.} 
For fixed $\eta$, varying $\rho$ has a mild effect on aggregated performance. Per-dataset, when $\eta$ lies in a stable regime (no evident degradation), both accuracy and subgroup metrics vary minimally with $\rho$. This suggests $\rho$ is secondary relative to $\eta$ for \textsc{TPT} stability.

For \textsc{Zero}, Table~\ref{tab:zero_scan} in the appendix shows $\rho$ has a mild aggregate effect, with accuracy being consistently robust and subgroup metrics occasionally sensitive on certain datasets. Overall, episodic TTA methods exhibit limited robustness of subgroup performance to their key hyperparameters, even when accuracy appears stable.

\textbf{Sensitivity of \textsc{OrthCali} (E2).} Table~\ref{tab:orthcali_scan} in the appendix scans $\lambda_\textnormal{orth}$. Aggregated results show that all metrics except EOD are largely robust to $\lambda_\textnormal{orth}$, but per-dataset analysis reveals that either accuracy or subgroup metrics begin to degrade beyond dataset-specific $\lambda_\textnormal{orth}$ ranges. The $\lambda_\textnormal{orth}$ values that best balance accuracy and subgroup metrics vary with the dataset and attribute pairing, indicating limited cross-dataset robustness.

\textbf{Sensitivity and ablations for \textsc{FairTPT} (E3).} Table~\ref{tab:fair-tpt-sensitivity-elra-ablation} in the appendix scans $\lambda_\textnormal{fair}$ (and $\lambda_\textnormal{fair (mo)}$) with other settings fixed as in Table~\ref{tab:hyperparam}: $n_\textnormal{epochs} = 1$ (as in \textsc{TPT}), $\rho = 0.75$ (chosen because \textsc{TPT} is relatively insensitive to $\rho$ and a higher $\rho$ allows to maximize spurious-attribute entropy over more augmentations), and $\beta = 0.01$ (intuitive choice).

\underline{Robustness to $\lambda_\textnormal{fair}$.} Aggregated and per-dataset trends show that \textsc{FairTPT} and \textsc{FairTPT (MO)} retain or improve both accuracy and subgroup metrics across a wide range of $\lambda_\textnormal{fair}$ and $\lambda_\textnormal{fair (mo)}$ values. Any sufficiently large $\lambda_\textnormal{fair}$ (from $100$ up to effectively $\infty$) balances accuracy and subgroup objectives reliably across datasets.

\underline{Role of ELRA.} Removing ELRA (and using AdamW with $\eta = 5e-3$ for \textsc{FairTPT} and \textsc{FairTPT (MO)}) leads to notable degradations in both accuracy and subgroup metrics on average, and similarly on most individual datasets. While careful manual tuning of $\eta$ could partially mitigate this (as suggested by the \textsc{TPT} sensitivity study), such tuning is impractical at test time. In contrast, ELRA adaptively selects learning rates from unlabeled inputs and interacts favorably with $\lambda_\textnormal{fair}$ and $\rho$, enabling \textsc{FairTPT} variants to improve subgroup performance where possible while preventing accuracy collapse.

\underline{Variants of the spurious-entropy term.} Table~\ref{tab:fair-tpt-loss-choice} in the appendix compares alternative formulations of the spurious-entropy component; we do not observe meaningful gains over the original formulation.

\underline{Combination of \textsc{FairTPT} and \textsc{OrthCali}.} \textsc{FairTPT} is compatible with \textsc{OrthCali}. For completeness, Table~\ref{tab:fairtpt_with_OP} in the appendix provides the performance obtained by applying \textsc{FairTPT} followed by the orthogonal projection of \textsc{OrthCali} (line \ref{alg:OP} of Algorithm \ref{alg:fair-tta-cls}). No collapse is observed as opposed to \textsc{OrthCali} alone but no improvement is found over \textsc{FairTPT}.

\textbf{Takeaways.} Across datasets, episodic TTA baselines are not reliably accuracy-improving and often harm subgroup performance unless hyperparameters are carefully chosen, something that is particularly hard at test time with only unlabeled inputs. \textsc{OrthCali} can retain accuracy on average and sometimes improves subgroup metrics, but the best $\lambda_\textnormal{orth}$ varies by dataset, limiting plug-and-play robustness. In contrast, \textsc{FairTPT} and \textsc{FairTPT (MO)} deliver (i) subgroup improvements with (ii) retained accuracy and (iii) markedly improved robustness to hyperparameters, thus meeting the desirable criteria for fairness-aware test-time methods. Practically, using sufficiently large $\lambda_\textnormal{fair}$ and a small $\beta$ yields strong and stable performance across datasets without per-dataset tuning. The robustness of these conclusions is further supported by additional experiments on other attributes, reported in Table~\ref{tab:additional_full_results} in the appendix. Further details, including effectiveness validation, runtime comparison, and multi-attribute support, are provided in the appendix.

\section{Conclusions}
\label{sec:conclusions}

In this work, we conducted the first fairness evaluation of episodic test-time adaptation (TTA) methods for vision–language models (VLMs) in zero-shot classification. Our analysis revealed that existing methods often fail to improve subgroup robustness, can amplify disparities, and are highly sensitive to hyperparameters. To address these shortcomings, we introduced \textsc{FairTPT}, a fully unsupervised, label-free method that jointly minimizes target-attribute entropy to preserve accuracy and maximizes sensitive-attribute entropy to reduce spurious correlations. Combined with a lightweight learning-rate adaptation heuristic, \textsc{FairTPT} mitigates collapse, and achieves state-of-the-art or competitive performance on fairness benchmarks, thus improving both overall and subgroup accuracy while exhibiting markedly greater robustness to hyperparameter variation.

While our results demonstrate the promise of fairness-aware episodic TTA, they also highlight open challenges and opportunities for future research. First, extending fairness-aware debiasing from the episodic to the online TTA setting could enable models to adapt continuously while maintaining subgroup robustness. Second, ensuring that confidence estimates are well-aligned for both majority and minority groups by improving subgroup-level calibration, remains an important but underexplored objective. Third, a deeper theoretical understanding of entropy-based test-time debiasing is needed to explain when and why marginal entropy maximization succeeds or fails, potentially guiding adaptive strategies that avoid over-debiasing or catastrophic forgetting. Finally, exploring meta-learned or hybrid objectives that balance accuracy, fairness, and stability could further advance the deployment of fairness-aware TTA in high-stakes, real-world applications. 

\textbf{Limitations.} \textsc{FairTPT} requires the auditor to manually specify the sensitive attribute at test time, and it is not effective under attribute misspecification. That said, it only requires a generic initial prompt template, as the context is optimized during adaptation. As a targeted debiasing approach, \textsc{FairTPT} explicitly removes predictive information for the specified sensitive attribute but does not prevent reliance on unmodeled proxies. The method also incurs additional inference-time compute due to episodic optimization and learning-rate adaptation; while this overhead is necessary for stable, label-free fairness-aware adaptation, it may limit deployment in latency-critical settings. Finally, while \textsc{FairTPT} can mitigate harmful biases, misuse or incorrect specification of sensitive attributes may lead to unintended behavior or false assurances of fairness. Careful human oversight and domain expertise are required when deploying targeted debiasing methods in high-stakes settings.

\bibliography{main}
\bibliographystyle{unsrt}

\newpage
\appendix

\appendix
{
\allowdisplaybreaks
\section{Additional Details and Results}
\label{sec-app:additional-details}

\subsection{Methods}
\label{app:methods}

A comparison of existing test-time debiasing methods is presented in Table~\ref{tab:fairTTAmethods}. \textsc{DeYO}~\citep{leeentropy} proposes disentanglement-inspired objectives for online TTA and does not explicitly target fairness with respect to a given attribute. \textsc{FairTPT}-style explicit debiasing (via a min–max entropy objective) can be incorporated into \textsc{DeYO}'s overall objective through a principled derivation; we view this as a potential direction for future work. \textsc{SEraser}~\citep{ma2025spurious}, which uses augmented views of the test image as auxiliary inputs, can be viewed as a limiting case of \textsc{FairTPT} with $\lambda_\textnormal{fair} \to \infty$. We empirically show that it is a strong baseline when combined with our learning-rate adaptation (ELRA) strategy. In contrast, \textsc{FairTPT} enables controlled accuracy-fairness trade-offs, with moderate values of $\lambda_\textnormal{fair}$ proving most effective.

\begin{table}[h]
\centering
\resizebox{\linewidth}{!}
{
\begin{tabular}{lcccl}
\toprule
\textbf{Method} & \textbf{Explicit debiasing} & \textbf{Unsupervised} & \textbf{Episodic} & \textbf{Difference from our setup}\\
\midrule
\textsc{OrthCali}~\citep{chuang2023debiasing} & \checkmark &  \checkmark & \checkmark & --- \\
\textsc{SEraser}~\citep{ma2025spurious} & \checkmark &  \checkmark & \checkmark & --- \\
\textsc{BendVLM}~\citep{gerych2024bendvlm} & \checkmark &  $\times$ & $\times$ & Requires a reference dataset of images with spurious labels. \\
\textsc{RoboSHOT}~\citep{adilazero} & $\times$ &  \checkmark & \checkmark & Spurious attributes suggested by an LLM. \\
\textsc{TIE}~\citep{lu2025mitigating} & \checkmark & $\times$ & $\times$ & Supervised with respect to the spurious attribute for each batch. \\
\textsc{TIE$^*$}~\citep{lu2025mitigating} & \checkmark &  \checkmark & $\times$ & Scale coefficient computed as the batch average. \\
\textsc{P}erception\textsc{CLIP}~\citep{PerceptionCLIP} & $\times$ &  \checkmark & \checkmark & Conditioning over a mix of concepts rather than targeted debiasing. \\
\textsc{DeYO}~\citep{leeentropy} & $\times$ &  \checkmark & \checkmark & Evaluated on vision-only models (can be adapted to VLMs). \\
\bottomrule
\end{tabular}
}
\caption{Comparison of state-of-the-art test-time debiasing methods for VLMs~\citep{chuang2023debiasing,ma2025spurious,gerych2024bendvlm,adilazero,lu2025mitigating,PerceptionCLIP,leeentropy}. \textit{Explicit debiasing} refers to debiasing concepts chosen by the user or predefined policies. \textit{Unsupervised} indicates access only to unlabeled images at test time. \textit{Episodic} denotes processing a single image at a time, as opposed to batch processing in an online setting. Given the absence of differences from our setup, \textsc{OrthCali} serves as our fairness baseline.}
\label{tab:fairTTAmethods}
\end{table}

\paragraph{\textsc{FairTPT} and baselines:}
\begin{itemize}
\item The zero-shot classification procedure is summarized in Algorithm~\ref{alg:zero-shot-cls}.
\item The test-time adaptive classification procedure is summarized in Algorithm~\ref{alg:tta-cls}.
\item The fairness-aware test-time adaptive classification procedure is summarized in Algorithm~\ref{alg:fair-tta-cls}.
\item The entropic learning rate adapter procedure is summarized in Algorithm~\ref{alg:elra}. Prior work has proposed methods to prevent model collapse in online TTA settings, including entropy matching and risk estimation strategies that rely on calibration or holdout datasets from the source distribution~\citep{bar2024protected,schirmer2025monitoring}. In contrast, our proposed ELRA is a simple learning-rate adaptation strategy designed to prevent model collapse in episodic TTA settings, and it operates on a single unlabeled test instance without access to auxiliary data.
\item The hyperparameters used in all methods are summarized in Table~\ref{tab:hyperparam}.
\end{itemize}

\begin{algorithm*}[h]
    \caption{Zero-Shot classification with VLMs}
    \begin{algorithmic}[1]
    \STATE \textbf{Input:} test image $x$, and vision-language model $\bcc{f_\textnormal{V}, f_\textnormal{L}}$.
    \STATE construct the soft-prompt set $\bcc{\vt_\textnormal{ctx}; \gY} = \bcc{(\vt_\textnormal{ctx}, \vt_y): y \in \gY}$
    \STATE obtain the set of text features $\bcc{\vl_y = f_\textnormal{L}((\vt_\textnormal{ctx}, \vt_y)): y \in \gY}$ using the text encoder
    \STATE obtain the image feature $\vv = f_\textnormal{V}(x)$ using the image encoder
    \STATE compute the similarity score $\texttt{sim}(\vl_y, \vv)$ for each pair $(\vl_y, \vv)$
    \STATE obtain the softmax probability distribution $p(\cdot \mid x, \bcc{\vt_\textnormal{ctx}; \gY}, \tau)$
    \STATE sample $\widehat{y} \sim p(\cdot | x, \bcc{\vt_\textnormal{ctx}; \gY}, \tau)$
    \STATE \textbf{Output:} predicted target label $\widehat{y}$.
    \end{algorithmic}
    \label{alg:zero-shot-cls}
\end{algorithm*}

\begin{algorithm*}[h]
    \caption{Test-Time Adaptive classification with VLMs}
    \begin{algorithmic}[1]
    \STATE \textbf{Input:} test image $x$, vision-language model $\bcc{f_\textnormal{V}, f_\textnormal{L}}$, and set of augmentation functions $\gA = \bcc{A_1, \dots, A_N}$.
    \STATE construct the soft-prompt set $\bcc{\vt_\textnormal{ctx}; \gY} = \bcc{(\vt_\textnormal{ctx}, \vt_y): y \in \gY}$
    \LINECOMMENT{\textsc{TPT}}
    \STATE \textbf{Require:} update rule $G$ with learning rate $\eta$, and number of epochs $n_\textnormal{epochs}$
    \STATE compute the marginal probability distribution $\overline{p}(\cdot \mid x, \bcc{\vt_\textnormal{ctx}; \gY}, \tau)$ using $\gA_\textnormal{filtered}$
    \STATE \textcolor{gray}{if ELRA: set the learning rate to $\eta_\textnormal{ELRA}$ using Algorithm~\ref{alg:elra}}  
    \STATE update the shared soft-prompt by optimizing \Eqref{eq:tpt-objective} via $G$ for $n_\textnormal{epochs}$ to obtain $\vt_\textnormal{ctx}^*$
    \STATE obtain $\widehat{y}$ by using Algorithm~\ref{alg:zero-shot-cls} with the updated set $\bcc{\vt_\textnormal{ctx}^*; \gY}$
    \LINECOMMENT{\textsc{Zero}} 
    \STATE obtain $\widehat{y} = \textsc{Zero}(x, \bcc{\vt_\textnormal{ctx}; \gY})$ using $\gA_\textnormal{filtered}$ (see \Eqref{eq:zero-opt})
    \STATE \textbf{Output:} predicted target label $\widehat{y}$.
    \end{algorithmic}
    \label{alg:tta-cls}
\end{algorithm*}

\begin{algorithm*}[h]
    \caption{Fairness-Aware Test-Time Adaptive classification with VLMs}
    \begin{algorithmic}[1]
    \STATE \textbf{Input:} test image $x$, vision-language model $\bcc{f_\textnormal{V}, f_\textnormal{L}}$, and set of augmentation functions $\gA = \bcc{A_1, \dots, A_N}$.
    \STATE construct the target soft-prompt set $\bcc{\vt_\textnormal{ctx}; \gY} = \bcc{(\vt_\textnormal{ctx}, \vt_y): y \in \gY}$
    \STATE construct the spurious soft-prompt set $\bcc{\vt_\textnormal{ctx}; \gS} = \bcc{(\vt_\textnormal{ctx}, \vt_s): s \in \gS}$
    \LINECOMMENT{\textsc{OrthCali}}
    \STATE \textbf{Require:} orthogonal calibration strength $\lambda_\textnormal{orth}$
    \STATE define a matrix $A$ whose columns are the embeddings of spurious prompts, and the orthogonal projection matrix $P_0 = I - A (A^\top A)^{-1} A^\top$ \label{alg:OP}
    \STATE construct the target-spurious joint soft-prompt set $\bcc{\vt_\textnormal{ctx}, \vt_y; \gS}$ for each target $y$
    \STATE construct a set $\gD$ containing all positive pairs of embeddings (i.e., pairs with the same target but different spurious attributes)
    \STATE optimize the projection matrix using $\gD$:
    \[
    P^* ~\gets~ \argmin_P \norm{P - P_0}^2 + \frac{\lambda_\textnormal{orth}}{\abs{\gD}} \sum_{(\vz_i,\vz_j) \in \gD} \norm{P \vz_i - P \vz_j}^2 , 
    \]
    \STATE project target soft-prompts $\bcc{\vt_\textnormal{ctx}; \gY}$ using $P^*$
    \STATE obtain $\widehat{y}$ by using Algorithm~\ref{alg:zero-shot-cls} with the projected target soft-prompts
    \LINECOMMENT{\textsc{FairTPT}}
    \STATE \textbf{Require:} update rule $G$ with learning rate $\eta$, and number of epochs $n_\textnormal{epochs}$
    \STATE compute the target marginal probability distribution $\overline{p}(\cdot \mid x, \bcc{\vt_\textnormal{ctx}; \gY}, \tau)$ using $\gA_\textnormal{filtered}$
    \STATE compute the spurious marginal probability distribution $\overline{p}(\cdot \mid x, \bcc{\vt_\textnormal{ctx}; \gS}, \tau)$ using $\gA_\textnormal{filtered}$
    \STATE \textcolor{gray}{if ELRA: set the learning rate to $\eta_\textnormal{ELRA}$ using Algorithm~\ref{alg:elra}}  
    \STATE update the shared soft-prompt by optimizing \Eqref{eq:fair-tpt-objective} (or \Eqref{eq:fair-tpt-multi-objective} for \textsc{FairTPT (MO)}) via $G$ for $n_\textnormal{epochs}$ to obtain $\vt_\textnormal{ctx}^*$
    \STATE use Algorithm~\ref{alg:zero-shot-cls} with the updated set $\bcc{\vt_\textnormal{ctx}^*; \gY}$ to obtain $\widehat{y}$
    \STATE \textbf{Output:} predicted target label $\widehat{y}$.
    \end{algorithmic}
    \label{alg:fair-tta-cls}
\end{algorithm*}

\begin{algorithm*}[h]
    \caption{Entropic Learning Rate Adapter}
    \begin{algorithmic}[1]
    \STATE \textbf{Input:} initial context $\vt_\textnormal{ctx}$, desired target loss change amplitude $\beta$, auto differentiation method $B$, and target entropic loss $\ell_\textnormal{ent}: \vt_{\textnormal{ctx}} \to \mathbb{R}$
    \STATE set optimizer $G$ as SGD
    \STATE call $B$ to obtain the initial gradient $\vg_\textnormal{init}$ of $\vt_\textnormal{ctx}$ 
    \STATE set the learning rate to {$\eta = 0.01 \cdot \norm{\vt_{\textnormal{ctx}}}/\norm{\vg_\textnormal{init}}$} to keep $G$ in the linear regime
    \STATE reset gradients
    \STATE call $B$ and perform an optimizer step to obtain the new context $\vt_{\textnormal{ctx}}'$
    \STATE compute $\Delta\ell_\textnormal{ent} = \abs{\ell_\textnormal{ent}(\vt_{\textnormal{ctx}})-\ell_\textnormal{ent}(\vt_{\textnormal{ctx}}')}$
    \STATE reset the optimizer state
    \STATE \textbf{Output:} adapted learning rate $\eta_\textnormal{ELRA} = \eta \times \beta/\Delta\ell_\textnormal{ent}$
    \end{algorithmic}
    \label{alg:elra}
\end{algorithm*}

\begin{table*}[h]
\centering
\resizebox{0.75\linewidth}{!}
{
\begin{tabular}{lllllllll}
\toprule
\textbf{Method} & $A$ & $\rho$ & Optimizer & $\eta_\textnormal{init}$  & $\eta$ & $\beta$ & $n_\textnormal{epochs}$ & $\lambda$  \\
\midrule 
\textsc{TPT} & $64$ & $0.1$ & AdamW & & $5e-3$ & & $1$ & \\
\midrule
\textsc{Zero} & $64$ & $0.1$ & & & & & &  \\
\midrule
\textsc{OrthCali} & & & & & & & & $\lambda_\textnormal{orth}=1000$\\
\midrule
\textsc{FairTPT} & $64$ & $0.75$ & SGD & Any & $\eta_\textnormal{ELRA}$ & 0.01 & 1 & $\lambda_\textnormal{fair} = 100$ \\
\midrule
\textsc{FairTPT (MO)} & $64$ & $0.75$ & UPGrad & Any & $\eta_\textnormal{ELRA}$ & 0.01 & 1 & $\lambda_\textnormal{fair (mo)} = 100$ \\
\midrule
\textsc{FairTPT} w/o ELRA & $64$ & $0.75$ & AdamW & & $5e-3$ & & 1 & $\lambda_\textnormal{fair} = 100$ \\
\midrule
\textsc{FairTPT (MO)} w/o ELRA & $64$ & $0.75$ & UPGrad & & {$5e-3$} & & 1 & $\lambda_\textnormal{fair (mo)} = 100$ \\
\bottomrule
\end{tabular}
}
\caption{Hyperparameters of all the methods evaluated in our experiments.}
\label{tab:hyperparam}
\end{table*}

\subsection{Datasets}

The target and spurious attributes used in our experiments are specified in Table~\ref{table:dataset}, and the initial prompt templates with placeholders for target and sensitive attributes, which are required by \textsc{FairTPT} and the episodic TTA baselines, are given in Table~\ref{table:prompt-template}.

\begin{table*}[h]
\centering
\resizebox{0.9\linewidth}{!}
{
\begin{tabular}{lll}
\toprule
\textbf{Dataset} & \textbf{Loss Term} & \textbf{Initial Prompt Template} \\
\midrule 
FairFace~\cite{karkkainen2021fairface} & $\ell_\textnormal{ent} (x, \bcc{\vt_\textnormal{ctx}; \gY})$ & \texttt{A photo of a [T-CLS] person.} \\
 & $\ell_\textnormal{ent} (x, \bcc{\vt_\textnormal{ctx}; \gS})$ & \texttt{A photo of a [S-CLS] person.} \\
  & $\ell_\textnormal{ent} (x, \bcc{\vt_\textnormal{ctx}, \vt_y; \gS})$ & \texttt{A photo of a [S-CLS] [T-CLS] person.} \\
\midrule
CelebA~\cite{liu2015deep} & $\ell_\textnormal{ent} (x, \bcc{\vt_\textnormal{ctx}; \gY})$ & \texttt{A photo of a celebrity [T-CLS].} \\
 & $\ell_\textnormal{ent} (x, \bcc{\vt_\textnormal{ctx}; \gS})$ & \texttt{A photo of a [S-CLS] celebrity.} \\
  & $\ell_\textnormal{ent} (x, \bcc{\vt_\textnormal{ctx}, \vt_y; \gS})$ & \texttt{A photo of a [S-CLS] celebrity [T-CLS].} \\
\midrule
WaterBirds~\cite{sagawadistributionally} & $\ell_\textnormal{ent} (x, \bcc{\vt_\textnormal{ctx}; \gY})$ & \texttt{This is a picture [T-CLS].} \\
 & $\ell_\textnormal{ent} (x, \bcc{\vt_\textnormal{ctx}; \gS})$ & \texttt{This is a picture [S-CLS].} \\
  & $\ell_\textnormal{ent} (x, \bcc{\vt_\textnormal{ctx}, \vt_y; \gS})$ & \texttt{This is a picture [T-CLS] [S-CLS].} \\
\midrule
UTKFace~\cite{zhang2017age} & $\ell_\textnormal{ent} (x, \bcc{\vt_\textnormal{ctx}; \gY})$ & \texttt{A photo of a [T-CLS] person.} \\
 & $\ell_\textnormal{ent} (x, \bcc{\vt_\textnormal{ctx}; \gS})$ & \texttt{A photo of a [S-CLS] person.} \\
  & $\ell_\textnormal{ent} (x, \bcc{\vt_\textnormal{ctx}, \vt_y; \gS})$ & \texttt{A photo of a [S-CLS] [T-CLS] person.} \\
\bottomrule
\end{tabular}
}
\caption{Initial prompt templates for each dataset. The placeholders \texttt{[T-CLS]} and \texttt{[S-CLS]} denote the target and spurious attributes, respectively, as defined in Table~\ref{table:dataset}.}
\label{table:prompt-template}
\end{table*}

\begin{table*}[h]
\centering
\resizebox{0.9\linewidth}{!}{
\begin{tabular}{lll}
\toprule
\textbf{Dataset} & \textbf{Target Attribute (\texttt{[T-CLS]} String)} & \textbf{Spurious Attribute (\texttt{[S-CLS]} String)} \\
\midrule 
FairFace & Gender (\texttt{male}, \texttt{female}) & Race (\texttt{White}, \texttt{Southeast Asian}, \texttt{Middle Eastern}, \\
& & \texttt{Black}, \texttt{Indian}, \texttt{Latino Hispanic}, \texttt{East Asian}) \\
\midrule
CelebA & Hair color (\texttt{with dark hair}, \texttt{with blond hair}) & Gender (\texttt{male}, \texttt{female}) \\
 & Smile (\texttt{smiling}, \texttt{not smiling}) & \\
\midrule
WaterBirds & Bird type (\texttt{of a water bird}, \texttt{of a land bird}) & Background (\texttt{on water}, \texttt{on land}) \\
\midrule
UTKFace & Age (\texttt{young}, \texttt{old}) & Race (\texttt{White}, \texttt{Black}, \texttt{Asian}, \texttt{Indian}) \\
 & Gender (\texttt{male}, \texttt{female}) & \\
\bottomrule
\end{tabular}
}
\caption{Placeholder values for the prompt templates shown in Table~\ref{table:prompt-template}.}
\label{table:dataset}
\end{table*}


\subsection{Results}
\label{app-sec:additional-results}

\begin{itemize}
\item Tables~\ref{tab:tpt_scan}, \ref{tab:zero_scan}, and \ref{tab:orthcali_scan} present the $(\eta,\rho)$ benchmark of \textsc{TPT}, the $\rho$ benchmark of \textsc{Zero}, and the $\lambda_\textnormal{orth}$ benchmark of \textsc{OrthCali}, respectively, w.r.t. overall and subgroup-level performance metrics.
\item Tables~\ref{tab:fair-tpt-sensitivity-elra-ablation} and \ref{tab:fair-tpt-loss-choice} benchmark \textsc{FairTPT} for diverse values of $\lambda_\textnormal{fair}$ by ablating ELRA and changing the loss respectively.
\item Table~\ref{tab:fairtpt_with_OP} presents overall and subgroup-level performance evaluation of \textsc{FairTPT} followed by the orthogonal projection of \textsc{OrthCali}.
\item Table~\ref{tab:additional_full_results} presents the benchmark of all methods on additional dataset-attribute configurations.
\end{itemize}

\begin{table*}[htbp]
\centering
\tiny
\resizebox{\linewidth}{!}{
\begin{tabular}{l@{\hskip 0.05in}
c@{\hskip 0.02in}c@{\hskip 0.02in}c@{\hskip 0.02in}c@{\hskip 0.05in}
c@{\hskip 0.02in}c@{\hskip 0.02in}c@{\hskip 0.02in}c@{\hskip 0.05in}
c@{\hskip 0.02in}c@{\hskip 0.02in}c@{\hskip 0.02in}c@{\hskip 0.05in}
c@{\hskip 0.02in}c@{\hskip 0.02in}c@{\hskip 0.02in}c@{\hskip 0.05in}
c@{\hskip 0.02in}c@{\hskip 0.02in}c@{\hskip 0.02in}c@{\hskip 0.05in}
c@{\hskip 0.02in}c@{\hskip 0.02in}c@{\hskip 0.02in}c@{\hskip 0.05in}|
c@{\hskip 0.02in}c@{\hskip 0.02in}c@{\hskip 0.02in}c}
\toprule
\textsc{Method} 
& \multicolumn{4}{c}{\textsc{FairFace}} 
& \multicolumn{8}{c}{\textsc{CelebA}} 
& \multicolumn{4}{c}{\textsc{WaterBirds}} 
& \multicolumn{8}{c}{\textsc{UTKFace}} 
& \multicolumn{4}{c}{\textsc{Average Results}} \\
\cmidrule(lr){2-5} 
\cmidrule(lr){6-13} 
\cmidrule(lr){14-17} 
\cmidrule(lr){18-25} 
& \multicolumn{4}{c}{$\text{Gender} \times \text{Race}$} 
& \multicolumn{4}{c}{$\text{Hair color} \times \text{Gender}$} 
& \multicolumn{4}{c}{$\text{Smiling} \times \text{Gender}$} 
& \multicolumn{4}{c}{$\text{Type} \times \text{Background}$} 
& \multicolumn{4}{c}{$\text{Age} \times \text{Race}$} 
& \multicolumn{4}{c}{$\text{Gender} \times \text{Race}$} 
& \multicolumn{4}{c}{} \\
\cmidrule(lr){2-5} 
\cmidrule(lr){6-9} 
\cmidrule(lr){10-13} 
\cmidrule(lr){14-17} 
\cmidrule(lr){18-21} 
\cmidrule(lr){22-25} 
\cmidrule(lr){26-29}
& A & WGA & B & EOD 
& A & WGA & B & EOD 
& A & WGA & B & EOD 
& A & WGA & B & EOD 
& A & WGA & B & EOD 
& A & WGA & B & EOD 
& A & WGA & B & EOD \\
\midrule
Zero Shot & \textbf{95.7} & 90.0 & 5.7 & 9.4 & 86.4 & \textbf{67.8} & 18.6 & 22.3 & \textbf{75.8} & \textbf{53.7} & \underline{22.1} & \underline{8.0} & 83.8 & 40.2 & 43.7 & \textbf{25.0} & 80.3 & 45.7 & 34.5 & \underline{23.2} & \textbf{97.1} & 90.1 & 7.0 & 9.1 & \underline{86.5} & \textbf{64.6} & 21.9 & \underline{16.2} \\
\midrule
\multicolumn{22}{l}{$\eta = 1e-5$} \\
\hspace{0.15cm} $\rho=0.1$ & \textbf{95.7} & \textbf{90.5} & \underline{5.2} & \textbf{8.9} & 86.5 & \textbf{67.8} & 18.8 & 22.3 & \underline{75.5} & \underline{52.9} & 22.6 & 8.3 & 83.8 & 40.2 & 43.7 & \textbf{25.0} & 80.3 & 45.6 & 34.6 & 23.3 & \textbf{97.1} & \underline{90.3} & 6.8 & 8.9 & \underline{86.5} & \underline{64.5} & 21.9 & \textbf{16.1} \\
\hspace{0.15cm} $\rho=0.25$ & \textbf{95.7} & 90.3 & 5.4 & 9.1 & 86.5 & \textbf{67.8} & 18.8 & 22.3 & \underline{75.5} & \underline{52.9} & 22.6 & 8.3 & \underline{83.9} & 40.2 & 43.7 & \underline{25.2} & 80.3 & 45.6 & 34.6 & 23.3 & \textbf{97.1} & \underline{90.3} & 6.8 & 8.9 & \underline{86.5} & \underline{64.5} & 22.0 & \underline{16.2} \\
\hspace{0.15cm} $\rho=0.5$ & \textbf{95.7} & 90.3 & 5.4 & 9.1 & 86.6 & \textbf{67.8} & 18.8 & 22.3 & \underline{75.5} & 52.8 & 22.7 & 8.4 & 83.8 & 40.2 & 43.7 & \textbf{25.0} & 80.3 & 45.6 & 34.6 & 23.3 & \textbf{97.1} & \underline{90.3} & 6.8 & 8.9 & \underline{86.5} & \underline{64.5} & 22.0 & \underline{16.2} \\
\hspace{0.15cm} $\rho = 0.75$ & \textbf{95.7} & 90.3 & 5.4 & 9.1 & 86.6 & \textbf{67.8} & 18.8 & 22.6 & \underline{75.5} & \underline{52.9} & 22.7 & 8.3 & \underline{83.9} & 40.2 & 43.7 & \textbf{25.0} & 80.3 & 45.7 & 34.5 & \underline{23.2} & \textbf{97.1} & \underline{90.3} & 6.8 & 8.9 & \underline{86.5} & \underline{64.5} & 22.0 & \underline{16.2} \\
\hspace{0.15cm} $\rho=1.0$ & \textbf{95.7} & 90.3 & 5.4 & 9.1 & 86.6 & \textbf{67.8} & 18.8 & 22.5 & \underline{75.5} & 52.8 & 22.8 & 8.4 & \underline{83.9} & 40.2 & 43.7 & \underline{25.2} & 80.3 & 45.6 & 34.6 & 23.3 & \textbf{97.1} & \underline{90.3} & 6.8 & 8.9 & \underline{86.5} & \underline{64.5} & 22.0 & \underline{16.2} \\
\midrule
\multicolumn{22}{l}{$\eta = 1e-4$} \\
\hspace{0.15cm} $\rho=0.1$ & \textbf{95.7} & \textbf{90.5} & \textbf{5.1} & \textbf{8.9} & 87.4 & \colorbox{red!25}{61.0} & \colorbox{red!25}{26.4} & \colorbox{red!25}{28.0} & \colorbox{red!25}{73.6} & \colorbox{red!25}{48.2} & \colorbox{red!25}{25.4} & 8.7 & \underline{83.9} & 39.3 & 44.6 & 26.6 & 81.1 & 46.8 & 34.3 & 24.6 & \textbf{97.1} & \textbf{91.1} & \textbf{6.0} & 8.1 & \underline{86.5} & 62.8 & 23.6 & 17.5 \\
\hspace{0.15cm} $\rho=0.25$ & \textbf{95.7} & \underline{90.4} & 5.3 & \underline{9.0} & 87.7 & \colorbox{red!25}{61.1} & \colorbox{red!25}{26.5} & \colorbox{red!25}{28.2} & \colorbox{red!25}{73.7} & \colorbox{red!25}{48.3} & \colorbox{red!25}{25.4} & 8.7 & \underline{83.9} & 39.3 & 44.6 & 26.6 & 81.5 & 47.2 & 34.3 & 24.9 & \textbf{97.1} & \textbf{91.1} & \textbf{6.0} & 8.1 & \textbf{86.6} & 62.9 & 23.7 & 17.6 \\
\hspace{0.15cm} $\rho=0.5$ & \textbf{95.7} & \underline{90.4} & 5.3 & \underline{9.0} & 87.6 & \colorbox{red!25}{\underline{61.2}} & \colorbox{red!25}{26.4} & \colorbox{red!25}{28.1} & \colorbox{red!25}{73.7} & \colorbox{red!25}{47.8} & \colorbox{red!25}{25.9} & 9.6 & \underline{83.9} & 39.5 & 44.4 & 26.1 & 81.6 & 47.4 & 34.2 & 24.7 & \textbf{97.1} & \textbf{91.1} & \textbf{6.0} & 8.2 & \textbf{86.6} & 62.9 & 23.7 & 17.6 \\
\hspace{0.15cm} $\rho = 0.75$ & \textbf{95.7} & \underline{90.4} & 5.3 & \underline{9.0} & 87.6 & \colorbox{red!25}{\underline{61.2}} & \colorbox{red!25}{26.4} & \colorbox{red!25}{28.1} & \colorbox{red!25}{73.6} & \colorbox{red!25}{47.8} & \colorbox{red!25}{25.8} & 9.4 & 83.8 & 39.6 & 44.2 & 25.7 & 81.7 & 47.5 & 34.2 & 24.6 & \textbf{97.1} & \textbf{91.1} & \textbf{6.0} & 8.2 & \textbf{86.6} & 62.9 & 23.6 & 17.5 \\
\hspace{0.15cm} $\rho=1.0$ & \textbf{95.7} & 90.3 & 5.4 & 9.3 & 87.6 & \colorbox{red!25}{\underline{61.2}} & \colorbox{red!25}{26.4} & \colorbox{red!25}{27.9} & \colorbox{red!25}{73.6} & \colorbox{red!25}{47.5} & \colorbox{red!25}{26.0} & 9.7 & 83.8 & 39.8 & 44.0 & 25.6 & 81.8 & 47.6 & 34.2 & 24.5 & \textbf{97.1} & \textbf{91.1} & \textbf{6.0} & 8.2 & \textbf{86.6} & 62.9 & 23.7 & 17.5 \\
\midrule
\multicolumn{22}{l}{$\eta = 1e-3$} \\
\hspace{0.15cm} $\rho=0.1$ & \underline{95.1} & 88.4 & 6.7 & 9.8 & \colorbox{green!25}{\textbf{91.8}} & \colorbox{red!25}{42.6} & \colorbox{red!25}{49.2} & \colorbox{red!25}{38.7} & \colorbox{red!25}{59.2} & \colorbox{red!25}{17.4} & \colorbox{red!25}{41.8} & 9.8 & \textbf{84.2} & \colorbox{green!25}{\textbf{44.2}} & \colorbox{green!25}{\textbf{40.0}} & \colorbox{red!25}{33.2} & \colorbox{green!25}{86.1} & \colorbox{green!25}{57.4} & \colorbox{green!25}{28.7} & 23.4 & 95.9 & 89.6 & \underline{6.3} & \underline{7.7} & 85.4 & \colorbox{red!25}{56.6} & \colorbox{red!25}{28.8} & \colorbox{red!25}{20.4} \\
\hspace{0.15cm} $\rho=0.25$ & 94.7 & 88.1 & 6.6 & 9.4 & \colorbox{green!25}{\underline{91.3}} & \colorbox{red!25}{56.0} & \colorbox{red!25}{35.2} & \colorbox{red!25}{29.3} & \colorbox{red!25}{62.7} & \colorbox{red!25}{20.6} & \colorbox{red!25}{42.0} & \colorbox{red!25}{16.2} & 83.6 & \colorbox{green!25}{\underline{42.6}} & \colorbox{green!25}{\underline{41.1}} & \colorbox{red!25}{33.3} & \colorbox{green!25}{86.6} & \colorbox{green!25}{55.8} & \colorbox{green!25}{30.8} & \colorbox{red!25}{26.3} & \underline{96.2} & 89.8 & \underline{6.3} & \colorbox{green!25}{\textbf{7.1}} & 85.9 & \colorbox{red!25}{58.8} & \colorbox{red!25}{27.0} & \colorbox{red!25}{20.3} \\
\hspace{0.15cm} $\rho=0.5$ & \underline{95.1} & 88.6 & 6.5 & 9.9 & \colorbox{green!25}{90.7} & \colorbox{red!25}{56.3} & \colorbox{red!25}{34.4} & \colorbox{red!25}{27.8} & \colorbox{red!25}{65.3} & \colorbox{red!25}{26.4} & \colorbox{red!25}{38.9} & \colorbox{red!25}{15.2} & 83.8 & 42.1 & 41.8 & \colorbox{red!25}{33.6} & \colorbox{green!25}{\underline{86.7}} & \colorbox{green!25}{55.4} & \colorbox{green!25}{31.3} & \colorbox{red!25}{31.4} & 96.1 & 89.1 & 7.1 & 8.2 & 86.3 & \colorbox{red!25}{59.6} & \colorbox{red!25}{26.7} & \colorbox{red!25}{21.0} \\
\hspace{0.15cm} $\rho = 0.75$ & 94.9 & \colorbox{red!25}{87.5} & 7.5 & 10.4 & \colorbox{green!25}{90.5} & \colorbox{red!25}{56.3} & \colorbox{red!25}{34.2} & \colorbox{red!25}{27.7} & \colorbox{red!25}{65.9} & \colorbox{red!25}{27.7} & \colorbox{red!25}{38.2} & \colorbox{red!25}{14.9} & \underline{83.9} & 42.1 & 41.8 & \colorbox{red!25}{33.3} & \colorbox{green!25}{\underline{86.7}} & \colorbox{green!25}{55.8} & \colorbox{green!25}{30.9} & \colorbox{red!25}{30.5} & 96.1 & 89.1 & 7.1 & 8.3 & 86.4 & \colorbox{red!25}{59.7} & \colorbox{red!25}{26.6} & \colorbox{red!25}{20.8} \\
\hspace{0.15cm} $\rho=1.0$ & 94.9 & \colorbox{red!25}{87.5} & 7.5 & 10.5 & \colorbox{green!25}{90.4} & \colorbox{red!25}{56.3} & \colorbox{red!25}{34.1} & \colorbox{red!25}{27.4} & \colorbox{red!25}{66.0} & \colorbox{red!25}{27.7} & \colorbox{red!25}{38.2} & \colorbox{red!25}{14.9} & \textbf{84.2} & \colorbox{green!25}{\underline{42.6}} & \colorbox{green!25}{41.6} & \colorbox{red!25}{33.0} & \colorbox{green!25}{\textbf{86.8}} & \colorbox{green!25}{55.4} & \colorbox{green!25}{31.4} & \colorbox{red!25}{31.4} & \underline{96.2} & 89.2 & 7.1 & 8.2 & 86.4 & \colorbox{red!25}{59.8} & \colorbox{red!25}{26.6} & \colorbox{red!25}{20.9} \\
\midrule
\multicolumn{22}{l}{$\eta = 1e-2$} \\
\hspace{0.15cm} $\rho=0.1$ & \colorbox{red!25}{93.5} & \colorbox{red!25}{82.9} & \colorbox{red!25}{10.6} & \colorbox{red!25}{15.7} & \colorbox{green!25}{\textbf{91.8}} & \colorbox{red!25}{42.6} & \colorbox{red!25}{49.2} & \colorbox{red!25}{38.1} & \colorbox{red!25}{60.0} & \colorbox{red!25}{20.4} & \colorbox{red!25}{39.6} & \textbf{6.9} & 83.2 & 41.7 & \colorbox{green!25}{41.5} & \colorbox{red!25}{36.5} & \colorbox{green!25}{85.3} & \colorbox{green!25}{\textbf{59.3}} & \colorbox{green!25}{26.0} & \colorbox{red!25}{26.8} & \colorbox{red!25}{94.6} & \colorbox{red!25}{87.1} & 7.5 & \colorbox{red!25}{11.1} & 84.8 & \colorbox{red!25}{55.7} & \colorbox{red!25}{29.1} & \colorbox{red!25}{22.5} \\
\hspace{0.15cm} $\rho=0.25$ & 94.3 & \colorbox{red!25}{86.7} & 7.6 & \colorbox{red!25}{12.6} & \colorbox{green!25}{91.2} & \colorbox{red!25}{56.0} & \colorbox{red!25}{35.1} & \colorbox{red!25}{29.4} & \colorbox{red!25}{63.5} & \colorbox{red!25}{23.6} & \colorbox{red!25}{39.9} & \colorbox{red!25}{13.4} & 82.9 & 40.9 & 42.0 & \colorbox{red!25}{36.5} & \colorbox{green!25}{86.1} & \colorbox{green!25}{57.2} & \colorbox{green!25}{28.9} & \colorbox{red!25}{28.4} & 95.7 & 89.0 & 6.6 & 8.5 & 85.6 & \colorbox{red!25}{58.9} & \colorbox{red!25}{26.7} & \colorbox{red!25}{21.5} \\
\hspace{0.15cm} $\rho=0.5$ & 94.3 & \colorbox{red!25}{87.1} & 7.2 & \colorbox{red!25}{11.4} & \colorbox{green!25}{90.6} & \colorbox{red!25}{56.3} & \colorbox{red!25}{34.3} & \colorbox{red!25}{27.8} & \colorbox{red!25}{66.2} & \colorbox{red!25}{29.1} & \colorbox{red!25}{37.1} & \colorbox{red!25}{13.0} & 83.4 & 41.6 & 41.8 & \colorbox{red!25}{34.0} & \colorbox{green!25}{85.9} & \colorbox{green!25}{56.2} & \colorbox{green!25}{29.7} & \colorbox{red!25}{30.8} & 95.7 & 89.0 & 6.7 & 9.0 & 86.0 & \colorbox{red!25}{59.9} & \colorbox{red!25}{26.1} & \colorbox{red!25}{21.0} \\
\hspace{0.15cm} $\rho = 0.75$ & 94.3 & \colorbox{red!25}{87.0} & 7.3 & \colorbox{red!25}{11.9} & \colorbox{green!25}{90.4} & \colorbox{red!25}{56.3} & \colorbox{red!25}{34.1} & \colorbox{red!25}{27.7} & \colorbox{red!25}{66.5} & \colorbox{red!25}{29.7} & \colorbox{red!25}{36.8} & \colorbox{red!25}{13.1} & 83.6 & 41.7 & 41.9 & \colorbox{red!25}{32.8} & \colorbox{green!25}{85.9} & \colorbox{green!25}{\underline{58.0}} & \colorbox{green!25}{28.0} & \colorbox{red!25}{29.3} & 95.7 & \colorbox{red!25}{87.8} & 7.9 & 10.3 & 86.1 & \colorbox{red!25}{60.1} & \colorbox{red!25}{26.0} & \colorbox{red!25}{20.8} \\
\hspace{0.15cm} $\rho=1.0$ & 94.3 & \colorbox{red!25}{87.8} & 6.5 & 10.5 & \colorbox{green!25}{90.3} & \colorbox{red!25}{56.3} & \colorbox{red!25}{34.0} & \colorbox{red!25}{27.4} & \colorbox{red!25}{66.4} & \colorbox{red!25}{29.2} & \colorbox{red!25}{37.2} & \colorbox{red!25}{13.7} & \underline{83.9} & \colorbox{green!25}{42.2} & \colorbox{green!25}{41.6} & \colorbox{red!25}{32.6} & \colorbox{green!25}{85.9} & \colorbox{green!25}{57.1} & \colorbox{green!25}{28.8} & \colorbox{red!25}{29.1} & 95.5 & \colorbox{red!25}{88.1} & 7.3 & 10.2 & 86.0 & \colorbox{red!25}{60.1} & \colorbox{red!25}{25.9} & \colorbox{red!25}{20.6} \\
\midrule
\multicolumn{22}{l}{$\eta = 1e-1$} \\
\hspace{0.15cm} $\rho=0.1$ & \colorbox{red!25}{82.2} & \colorbox{red!25}{70.6} & \colorbox{red!25}{11.6} & \colorbox{red!25}{19.2} & \colorbox{red!25}{65.0} & \colorbox{red!25}{55.5} & \colorbox{green!25}{9.5} & 22.2 & \colorbox{red!25}{64.0} & \colorbox{red!25}{44.3} & \colorbox{green!25}{\textbf{19.7}} & \colorbox{red!25}{12.4} & \colorbox{red!25}{75.6} & \colorbox{red!25}{26.8} & \colorbox{red!25}{48.8} & 25.7 & \colorbox{red!25}{70.5} & \colorbox{green!25}{52.8} & \colorbox{green!25}{17.7} & \colorbox{red!25}{26.6} & \colorbox{red!25}{81.3} & \colorbox{red!25}{66.0} & \colorbox{red!25}{15.3} & \colorbox{red!25}{16.1} & \colorbox{red!25}{73.1} & \colorbox{red!25}{52.7} & \textbf{20.4} & \colorbox{red!25}{20.3} \\
\hspace{0.15cm} $\rho=0.25$ & \colorbox{red!25}{82.1} & \colorbox{red!25}{73.5} & \colorbox{red!25}{8.6} & \colorbox{red!25}{16.3} & \colorbox{red!25}{67.3} & \colorbox{red!25}{58.5} & \colorbox{green!25}{\underline{8.8}} & \colorbox{green!25}{\textbf{12.8}} & \colorbox{red!25}{64.8} & \colorbox{red!25}{40.8} & 24.0 & \colorbox{red!25}{14.9} & \colorbox{red!25}{76.8} & \colorbox{red!25}{27.5} & \colorbox{red!25}{49.3} & \colorbox{red!25}{29.6} & \colorbox{red!25}{69.6} & \colorbox{green!25}{51.7} & \colorbox{green!25}{18.0} & \colorbox{red!25}{26.7} & \colorbox{red!25}{81.1} & \colorbox{red!25}{61.7} & \colorbox{red!25}{19.4} & \colorbox{red!25}{18.0} & \colorbox{red!25}{73.6} & \colorbox{red!25}{52.3} & 21.4 & \colorbox{red!25}{19.7} \\
\hspace{0.15cm} $\rho=0.5$ & \colorbox{red!25}{82.8} & \colorbox{red!25}{72.1} & \colorbox{red!25}{10.7} & \colorbox{red!25}{17.6} & \colorbox{red!25}{66.8} & \colorbox{red!25}{56.7} & \colorbox{green!25}{10.1} & 20.9 & \colorbox{red!25}{65.3} & \colorbox{red!25}{39.6} & \colorbox{red!25}{25.8} & \colorbox{red!25}{17.5} & \colorbox{red!25}{77.1} & \colorbox{red!25}{26.1} & \colorbox{red!25}{51.0} & \colorbox{red!25}{32.6} & \colorbox{red!25}{69.7} & \colorbox{green!25}{52.7} & \colorbox{green!25}{\underline{16.9}} & \colorbox{red!25}{25.6} & \colorbox{red!25}{80.7} & \colorbox{red!25}{61.7} & \colorbox{red!25}{19.0} & \colorbox{red!25}{18.1} & \colorbox{red!25}{73.7} & \colorbox{red!25}{51.5} & 22.2 & \colorbox{red!25}{22.0} \\
\hspace{0.15cm} $\rho = 0.75$ & \colorbox{red!25}{83.0} & \colorbox{red!25}{74.1} & \colorbox{red!25}{8.9} & \colorbox{red!25}{14.5} & \colorbox{red!25}{68.3} & \colorbox{red!25}{59.2} & \colorbox{green!25}{9.0} & 20.6 & \colorbox{red!25}{65.3} & \colorbox{red!25}{38.3} & \colorbox{red!25}{27.0} & \colorbox{red!25}{17.9} & \colorbox{red!25}{77.0} & \colorbox{red!25}{26.1} & \colorbox{red!25}{50.8} & \colorbox{red!25}{34.0} & \colorbox{red!25}{70.4} & \colorbox{green!25}{52.2} & \colorbox{green!25}{18.2} & \colorbox{red!25}{30.0} & \colorbox{red!25}{81.1} & \colorbox{red!25}{63.2} & \colorbox{red!25}{17.9} & \colorbox{red!25}{16.4} & \colorbox{red!25}{74.2} & \colorbox{red!25}{52.2} & 22.0 & \colorbox{red!25}{22.2} \\
\hspace{0.15cm} $\rho=1.0$ & \colorbox{red!25}{80.7} & \colorbox{red!25}{71.9} & \colorbox{red!25}{8.8} & \colorbox{red!25}{16.5} & \colorbox{red!25}{68.5} & \colorbox{red!25}{60.6} & \colorbox{green!25}{\textbf{7.9}} & \colorbox{green!25}{\underline{19.2}} & \colorbox{red!25}{65.5} & \colorbox{red!25}{37.7} & \colorbox{red!25}{27.7} & \colorbox{red!25}{18.7} & \colorbox{red!25}{77.0} & \colorbox{red!25}{29.7} & \colorbox{red!25}{47.3} & \colorbox{red!25}{37.4} & \colorbox{red!25}{70.9} & \colorbox{green!25}{57.4} & \colorbox{green!25}{\textbf{13.5}} & \textbf{21.6} & \colorbox{red!25}{79.8} & \colorbox{red!25}{58.6} & \colorbox{red!25}{21.2} & \colorbox{red!25}{20.4} & \colorbox{red!25}{73.7} & \colorbox{red!25}{52.6} & \underline{21.1} & \colorbox{red!25}{22.3} \\
\bottomrule
\end{tabular}
}
\caption{Hyperparameter benchmark of TPT. We report mean over 5 random seeds and mean aggregation over all equally sized datasets. Best results are in \textbf{bold}, second best \underline{underlined}. The values that improve upon \textsc{Zero-Shot} by more than 2.0 percentage points are highlighted in \colorbox{green!25}{green}, while degradations greater than 2.0 percentage points are shown in \colorbox{red!25}{red}.}
\label{tab:tpt_scan}
\end{table*}

\begin{table*}[htbp]
\centering
\tiny
\resizebox{\linewidth}{!}{
\begin{tabular}{l@{\hskip 0.05in}
c@{\hskip 0.02in}c@{\hskip 0.02in}c@{\hskip 0.02in}c@{\hskip 0.05in}
c@{\hskip 0.02in}c@{\hskip 0.02in}c@{\hskip 0.02in}c@{\hskip 0.05in}
c@{\hskip 0.02in}c@{\hskip 0.02in}c@{\hskip 0.02in}c@{\hskip 0.05in}
c@{\hskip 0.02in}c@{\hskip 0.02in}c@{\hskip 0.02in}c@{\hskip 0.05in}
c@{\hskip 0.02in}c@{\hskip 0.02in}c@{\hskip 0.02in}c@{\hskip 0.05in}
c@{\hskip 0.02in}c@{\hskip 0.02in}c@{\hskip 0.02in}c@{\hskip 0.05in}|
c@{\hskip 0.02in}c@{\hskip 0.02in}c@{\hskip 0.02in}c}
\toprule
\textsc{Method} 
& \multicolumn{4}{c}{\textsc{FairFace}} 
& \multicolumn{8}{c}{\textsc{CelebA}} 
& \multicolumn{4}{c}{\textsc{WaterBirds}} 
& \multicolumn{8}{c}{\textsc{UTKFace}} 
& \multicolumn{4}{c}{\textsc{Average Results}} \\
\cmidrule(lr){2-5} 
\cmidrule(lr){6-13} 
\cmidrule(lr){14-17} 
\cmidrule(lr){18-25} 
& \multicolumn{4}{c}{$\text{Gender} \times \text{Race}$} 
& \multicolumn{4}{c}{$\text{Hair color} \times \text{Gender}$} 
& \multicolumn{4}{c}{$\text{Smiling} \times \text{Gender}$} 
& \multicolumn{4}{c}{$\text{Type} \times \text{Background}$} 
& \multicolumn{4}{c}{$\text{Age} \times \text{Race}$} 
& \multicolumn{4}{c}{$\text{Gender} \times \text{Race}$} 
& \multicolumn{4}{c}{} \\
\cmidrule(lr){2-5} 
\cmidrule(lr){6-9} 
\cmidrule(lr){10-13} 
\cmidrule(lr){14-17} 
\cmidrule(lr){18-21} 
\cmidrule(lr){22-25} 
\cmidrule(lr){26-29}
& A & WGA & B & EOD 
& A & WGA & B & EOD 
& A & WGA & B & EOD 
& A & WGA & B & EOD 
& A & WGA & B & EOD 
& A & WGA & B & EOD 
& A & WGA & B & EOD \\
\midrule
Zero Shot & \textbf{95.7} & \textbf{90.0} & \textbf{5.7} & 9.4 & 86.4 & \textbf{67.8} & \textbf{18.6} & \textbf{22.3} & \textbf{75.8} & \textbf{53.7} & \textbf{22.1} & \textbf{8.0} & 83.8 & 40.2 & 43.7 & \textbf{25.0} & 80.3 & 45.7 & 34.5 & \textbf{23.2} & \textbf{97.1} & \textbf{90.1} & \textbf{7.0} & \textbf{9.1} & \textbf{86.5} & \textbf{64.6} & \textbf{21.9} & \textbf{16.2} \\
\midrule
$\rho=0.1$ & \colorbox{red!25}{91.2} & \colorbox{red!25}{78.6} & \colorbox{red!25}{12.6} & \colorbox{red!25}{11.9} & \colorbox{green!25}{\textbf{90.4}} & \colorbox{red!25}{\underline{56.3}} & \colorbox{red!25}{34.0} & \underline{23.8} & \colorbox{red!25}{\underline{69.9}} & \colorbox{red!25}{\underline{38.0}} & \colorbox{red!25}{\underline{31.9}} & \colorbox{red!25}{\underline{11.1}} & 83.1 & 40.0 & 43.2 & \colorbox{red!25}{\underline{29.0}} & \colorbox{green!25}{86.2} & \colorbox{green!25}{53.8} & \colorbox{green!25}{32.4} & \colorbox{red!25}{28.7} & \colorbox{red!25}{93.5} & \colorbox{red!25}{77.8} & \colorbox{red!25}{15.7} & \colorbox{red!25}{13.4} & 85.7 & \colorbox{red!25}{57.4} & \colorbox{red!25}{28.3} & \colorbox{red!25}{\underline{19.6}} \\
$\rho=0.25$ & \colorbox{red!25}{93.4} & \colorbox{red!25}{83.3} & \colorbox{red!25}{10.1} & \colorbox{red!25}{11.4} & \colorbox{green!25}{\textbf{90.4}} & \colorbox{red!25}{\underline{56.3}} & \colorbox{red!25}{34.1} & \colorbox{red!25}{27.1} & \colorbox{red!25}{68.7} & \colorbox{red!25}{34.3} & \colorbox{red!25}{34.4} & \colorbox{red!25}{12.6} & 83.9 & 41.9 & 42.0 & \colorbox{red!25}{31.0} & \colorbox{green!25}{86.6} & \colorbox{green!25}{56.0} & \colorbox{green!25}{30.5} & \colorbox{red!25}{30.3} & \colorbox{red!25}{94.8} & \colorbox{red!25}{\underline{81.4}} & \colorbox{red!25}{\underline{13.4}} & \colorbox{red!25}{\underline{12.9}} & 86.3 & \colorbox{red!25}{58.9} & \colorbox{red!25}{27.4} & \colorbox{red!25}{20.9} \\
$\rho=0.5$ & 94.1 & \colorbox{red!25}{86.2} & \colorbox{red!25}{7.8} & \underline{9.3} & \colorbox{green!25}{\underline{90.3}} & \colorbox{red!25}{\underline{56.3}} & \colorbox{red!25}{34.0} & \colorbox{red!25}{28.0} & \colorbox{red!25}{68.2} & \colorbox{red!25}{33.1} & \colorbox{red!25}{35.1} & \colorbox{red!25}{13.2} & \underline{84.1} & 42.1 & 42.0 & \colorbox{red!25}{31.1} & \colorbox{green!25}{\underline{87.0}} & \colorbox{green!25}{\textbf{58.3}} & \colorbox{green!25}{\textbf{28.7}} & \colorbox{red!25}{\underline{28.1}} & \colorbox{red!25}{94.9} & \colorbox{red!25}{80.2} & \colorbox{red!25}{14.7} & \colorbox{red!25}{14.2} & \underline{86.4} & \colorbox{red!25}{\underline{59.4}} & \colorbox{red!25}{\underline{27.1}} & \colorbox{red!25}{20.6} \\
$\rho = 0.75$ & 94.2 & \colorbox{red!25}{86.1} & \colorbox{red!25}{8.1} & 9.8 & \colorbox{green!25}{90.2} & \colorbox{red!25}{\underline{56.3}} & \colorbox{red!25}{33.9} & \colorbox{red!25}{27.7} & \colorbox{red!25}{68.2} & \colorbox{red!25}{32.5} & \colorbox{red!25}{35.7} & \colorbox{red!25}{13.7} & \underline{84.1} & \colorbox{green!25}{\textbf{43.1}} & \colorbox{green!25}{\textbf{41.0}} & \colorbox{red!25}{\underline{29.0}} & \colorbox{green!25}{\underline{87.0}} & \colorbox{green!25}{\underline{57.0}} & \colorbox{green!25}{\underline{30.0}} & \colorbox{red!25}{29.0} & \colorbox{red!25}{\underline{95.1}} & \colorbox{red!25}{80.5} & \colorbox{red!25}{14.6} & \colorbox{red!25}{14.2} & \textbf{86.5} & \colorbox{red!25}{59.2} & \colorbox{red!25}{27.2} & \colorbox{red!25}{20.6} \\
$\rho=1.0$ & \underline{94.3} & \colorbox{red!25}{\underline{87.0}} & \underline{7.3} & \textbf{8.9} & \colorbox{green!25}{90.2} & \colorbox{red!25}{\underline{56.3}} & \colorbox{red!25}{\underline{33.8}} & \colorbox{red!25}{27.8} & \colorbox{red!25}{68.5} & \colorbox{red!25}{32.9} & \colorbox{red!25}{35.6} & \colorbox{red!25}{14.2} & \textbf{84.2} & \colorbox{green!25}{\underline{42.6}} & \colorbox{green!25}{\underline{41.6}} & \colorbox{red!25}{30.2} & \colorbox{green!25}{\textbf{87.1}} & \colorbox{green!25}{56.4} & \colorbox{green!25}{30.7} & \colorbox{red!25}{31.1} & \colorbox{red!25}{\underline{95.1}} & \colorbox{red!25}{79.9} & \colorbox{red!25}{15.2} & \colorbox{red!25}{15.0} & \textbf{86.5} & \colorbox{red!25}{59.2} & \colorbox{red!25}{27.4} & \colorbox{red!25}{21.2} \\
\bottomrule
\end{tabular}
}
\caption{Hyperparameter benchmark of \textsc{Zero}. We report mean over 5 random seeds and mean aggregation over all equally sized datasets. Best results are in \textbf{bold}, second best \underline{underlined}. The values that improve upon \textsc{Zero-Shot} by more than 2.0 percentage points are highlighted in \colorbox{green!25}{green}, while degradations greater than 2.0 percentage points are shown in \colorbox{red!25}{red}. }
\label{tab:zero_scan}
\end{table*}

\begin{table*}[htbp]
\centering
\tiny
\resizebox{\linewidth}{!}{
\begin{tabular}{l@{\hskip 0.05in}
c@{\hskip 0.02in}c@{\hskip 0.02in}c@{\hskip 0.02in}c@{\hskip 0.05in}
c@{\hskip 0.02in}c@{\hskip 0.02in}c@{\hskip 0.02in}c@{\hskip 0.05in}
c@{\hskip 0.02in}c@{\hskip 0.02in}c@{\hskip 0.02in}c@{\hskip 0.05in}
c@{\hskip 0.02in}c@{\hskip 0.02in}c@{\hskip 0.02in}c@{\hskip 0.05in}
c@{\hskip 0.02in}c@{\hskip 0.02in}c@{\hskip 0.02in}c@{\hskip 0.05in}
c@{\hskip 0.02in}c@{\hskip 0.02in}c@{\hskip 0.02in}c@{\hskip 0.05in}|
c@{\hskip 0.02in}c@{\hskip 0.02in}c@{\hskip 0.02in}c}
\toprule
\textsc{Method} 
& \multicolumn{4}{c}{\textsc{FairFace}} 
& \multicolumn{8}{c}{\textsc{CelebA}} 
& \multicolumn{4}{c}{\textsc{WaterBirds}} 
& \multicolumn{8}{c}{\textsc{UTKFace}} 
& \multicolumn{4}{c}{\textsc{Average Results}} \\
\cmidrule(lr){2-5} 
\cmidrule(lr){6-13} 
\cmidrule(lr){14-17} 
\cmidrule(lr){18-25} 
& \multicolumn{4}{c}{$\text{Gender} \times \text{Race}$} 
& \multicolumn{4}{c}{$\text{Hair color} \times \text{Gender}$} 
& \multicolumn{4}{c}{$\text{Smiling} \times \text{Gender}$} 
& \multicolumn{4}{c}{$\text{Type} \times \text{Background}$} 
& \multicolumn{4}{c}{$\text{Age} \times \text{Race}$} 
& \multicolumn{4}{c}{$\text{Gender} \times \text{Race}$} 
& \multicolumn{4}{c}{} \\
\cmidrule(lr){2-5} 
\cmidrule(lr){6-9} 
\cmidrule(lr){10-13} 
\cmidrule(lr){14-17} 
\cmidrule(lr){18-21} 
\cmidrule(lr){22-25} 
\cmidrule(lr){26-29}
& A & WGA & B & EOD 
& A & WGA & B & EOD 
& A & WGA & B & EOD 
& A & WGA & B & EOD 
& A & WGA & B & EOD 
& A & WGA & B & EOD 
& A & WGA & B & EOD \\
\midrule
Zero Shot & 95.7 & 90.0 & 5.7 & 9.4 & \textbf{86.4} & 67.8 & 18.6 & \underline{22.3} & \textbf{75.8} & \textbf{53.7} & \textbf{22.1} & \textbf{8.0} & 83.8 & 40.2 & 43.7 & 25.0 & \underline{80.3} & 45.7 & 34.5 & 23.2 & \textbf{97.1} & \textbf{90.1} & \textbf{7.0} & \textbf{9.1} & \textbf{86.5} & 64.6 & 21.9 & 16.2 \\
\midrule
$\lambda_\textnormal{orth}=0$ & \textbf{96.2} & \textbf{90.9} & \textbf{5.3} & \textbf{8.5} & \underline{86.2} & \colorbox{green!25}{\underline{70.7}} & \colorbox{green!25}{15.5} & 22.7 & \colorbox{red!25}{\underline{71.8}} & \colorbox{red!25}{\underline{40.1}} & \colorbox{red!25}{\underline{31.7}} & \colorbox{red!25}{\underline{24.4}} & 85.5 & \colorbox{green!25}{55.7} & \colorbox{green!25}{29.9} & \colorbox{green!25}{\underline{4.3}} & \textbf{81.1} & 45.2 & 36.0 & \colorbox{green!25}{18.9} & 96.9 & \underline{89.6} & \underline{7.3} & \underline{9.4} & 86.3 & 65.4 & 20.9 & \underline{14.7} \\
$\lambda_\textnormal{orth}=1$ & \textbf{96.2} & \textbf{90.9} & \textbf{5.3} & \textbf{8.5} & \underline{86.2} & \colorbox{green!25}{\underline{70.7}} & \colorbox{green!25}{15.5} & 22.7 & \colorbox{red!25}{71.7} & \colorbox{red!25}{39.9} & \colorbox{red!25}{31.8} & \colorbox{red!25}{24.5} & \colorbox{green!25}{\underline{86.0}} & \colorbox{green!25}{58.5} & \colorbox{green!25}{27.5} & \colorbox{green!25}{\textbf{3.6}} & \textbf{81.1} & 45.2 & 36.0 & \colorbox{green!25}{18.2} & 96.9 & \underline{89.6} & \underline{7.3} & \underline{9.4} & \underline{86.4} & \underline{65.8} & 20.6 & \textbf{14.5} \\
$\lambda_\textnormal{orth}=10$ & \underline{96.1} & \underline{90.6} & \underline{5.5} & \underline{8.8} & 86.1 & \colorbox{green!25}{\underline{70.7}} & \colorbox{green!25}{15.4} & 22.5 & \colorbox{red!25}{71.5} & \colorbox{red!25}{38.3} & \colorbox{red!25}{33.2} & \colorbox{red!25}{27.6} & \colorbox{green!25}{\textbf{86.5}} & \colorbox{green!25}{\textbf{63.1}} & \colorbox{green!25}{\underline{23.4}} & \colorbox{green!25}{8.5} & \textbf{81.1} & 45.2 & 36.0 & \colorbox{green!25}{18.6} & \underline{97.0} & \underline{89.6} & \underline{7.3} & \underline{9.4} & \underline{86.4} & \textbf{66.3} & \underline{20.1} & 15.9 \\
$\lambda_\textnormal{orth} = 1000$ & 95.9 & \colorbox{red!25}{87.8} & \colorbox{red!25}{8.1} & 11.3 & 85.1 & \colorbox{green!25}{\textbf{71.6}} & \colorbox{green!25}{\underline{13.5}} & \textbf{22.0} & \colorbox{red!25}{71.4} & \colorbox{red!25}{36.8} & \colorbox{red!25}{34.6} & \colorbox{red!25}{30.6} & 83.3 & \colorbox{green!25}{\underline{62.0}} & \colorbox{green!25}{\textbf{21.4}} & \colorbox{green!25}{16.2} & 79.8 & \underline{47.0} & \underline{32.8} & \colorbox{green!25}{\underline{14.4}} & 96.8 & 88.9 & 7.9 & 10.1 & 85.4 & 65.7 & \colorbox{green!25}{\textbf{19.7}} & 17.4 \\
$\lambda_\textnormal{orth}=100000$ & 95.8 & \colorbox{red!25}{86.5} & \colorbox{red!25}{9.3} & \colorbox{red!25}{12.7} & 85.0 & \colorbox{green!25}{\textbf{71.6}} & \colorbox{green!25}{\textbf{13.4}} & \textbf{22.0} & \colorbox{red!25}{71.4} & \colorbox{red!25}{36.4} & \colorbox{red!25}{35.0} & \colorbox{red!25}{31.2} & 83.0 & \colorbox{green!25}{56.4} & \colorbox{green!25}{26.7} & \colorbox{green!25}{13.9} & 79.4 & \colorbox{green!25}{\textbf{47.9}} & \colorbox{green!25}{\textbf{31.5}} & \colorbox{green!25}{\textbf{13.9}} & 96.8 & 88.7 & 8.2 & 10.3 & 85.2 & 64.6 & 20.7 & 17.3 \\
\bottomrule
\end{tabular}
}
\caption{Hyperparameter benchmark of \textsc{OrthCali}. We report mean over 5 random seeds and mean aggregation over all equally sized datasets. Best results are in \textbf{bold}, second best \underline{underlined}. The values that improve upon \textsc{Zero-Shot} by more than 2.0 percentage points are highlighted in \colorbox{green!25}{green}, while degradations greater than 2.0 percentage points are shown in \colorbox{red!25}{red}. }
\label{tab:orthcali_scan}
\end{table*}

\setlength{\fboxsep}{1.5pt}
\begin{table*}[h]
\centering
\tiny
\resizebox{\linewidth}{!}{
\begin{tabular}{l@{\hskip 0.05in}
c@{\hskip 0.02in}c@{\hskip 0.02in}c@{\hskip 0.02in}c@{\hskip 0.05in}
c@{\hskip 0.02in}c@{\hskip 0.02in}c@{\hskip 0.02in}c@{\hskip 0.05in}
c@{\hskip 0.02in}c@{\hskip 0.02in}c@{\hskip 0.02in}c@{\hskip 0.05in}
c@{\hskip 0.02in}c@{\hskip 0.02in}c@{\hskip 0.02in}c@{\hskip 0.05in}
c@{\hskip 0.02in}c@{\hskip 0.02in}c@{\hskip 0.02in}c@{\hskip 0.05in}
c@{\hskip 0.02in}c@{\hskip 0.02in}c@{\hskip 0.02in}c@{\hskip 0.05in}|
c@{\hskip 0.02in}c@{\hskip 0.02in}c@{\hskip 0.02in}c}
\toprule
\textsc{Method} 
& \multicolumn{4}{c}{\textsc{FairFace}} 
& \multicolumn{8}{c}{\textsc{CelebA}} 
& \multicolumn{4}{c}{\textsc{WaterBirds}} 
& \multicolumn{8}{c}{\textsc{UTKFace}} 
& \multicolumn{4}{c}{\textsc{Average Results}} \\
\cmidrule(lr){2-5} 
\cmidrule(lr){6-13} 
\cmidrule(lr){14-17} 
\cmidrule(lr){18-25} 
& \multicolumn{4}{c}{$\text{Gender} \times \text{Race}$} 
& \multicolumn{4}{c}{$\text{Hair color} \times \text{Gender}$} 
& \multicolumn{4}{c}{$\text{Smiling} \times \text{Gender}$} 
& \multicolumn{4}{c}{$\text{Type} \times \text{Background}$} 
& \multicolumn{4}{c}{$\text{Age} \times \text{Race}$} 
& \multicolumn{4}{c}{$\text{Gender} \times \text{Race}$} 
& \multicolumn{4}{c}{} \\
\cmidrule(lr){2-5} 
\cmidrule(lr){6-9} 
\cmidrule(lr){10-13} 
\cmidrule(lr){14-17} 
\cmidrule(lr){18-21} 
\cmidrule(lr){22-25} 
\cmidrule(lr){26-29}
& A & WGA & B & EOD 
& A & WGA & B & EOD 
& A & WGA & B & EOD 
& A & WGA & B & EOD 
& A & WGA & B & EOD 
& A & WGA & B & EOD 
& A & WGA & B & EOD \\
\midrule
Zero Shot & \textbf{95.7} & 90.0 & 5.7 & 9.4 & 86.4 & \textbf{67.8} & 18.6 & 22.3 & 75.8 & 53.7 & 22.1 & 8.0 & 83.8 & 40.2 & 43.7 & 25.0 & 80.3 & 45.7 & 34.5 & 23.2 & \textbf{97.1} & 90.1 & 7.0 & 9.1 & \textbf{86.5} & 64.6 & 21.9 & 16.2 \\
\midrule
\multicolumn{22}{l}{\textsc{FairTPT}}\\
\hspace{0.1cm}$\lambda_{\textnormal{fair}} = 1$ & \underline{95.5} & 90.5 & 5.0 & 8.8 & 86.2 & \underline{67.7} & 18.5 & 22.0 & 75.3 & 52.6 & 22.7 & 8.1 & 83.9 & 41.0 & 42.8 & 26.2 & 80.3 & 46.5 & 33.8 & 22.5 & \underline{97.0} & \underline{90.6} & 6.5 & 8.7 & \underline{86.4} & 64.8 & 21.6 & 16.1 \\
\hspace{0.1cm}$\lambda_{\textnormal{fair}} = 100$ & \underline{95.5} & \textbf{90.7} & \textbf{4.8} & \underline{8.1} & 85.6 & \underline{67.7} & 17.9 & 21.9 & 75.9 & \colorbox{green!25}{56.1} & \colorbox{green!25}{19.8} & 6.2 & 83.2 & \colorbox{green!25}{42.4} & \colorbox{green!25}{40.8} & 23.5 & 81.0 & \colorbox{green!25}{51.5} & \colorbox{green!25}{29.5} & 23.4 & 96.7 & \underline{90.6} & \underline{6.1} & 8.6 & 86.3 & \underline{66.5} & \colorbox{green!25}{\underline{19.8}} & \underline{15.3} \\
\hspace{0.1cm}$\lambda_{\textnormal{fair}} = 5000$ & 95.4 & \underline{90.6} & \underline{4.9} & \underline{8.1} & 85.5 & 67.5 & 18.0 & 21.5 & 75.9 & 55.6 & 20.3 & \colorbox{green!25}{\textbf{5.8}} & 83.3 & \colorbox{green!25}{43.6} & \colorbox{green!25}{39.7} & 23.4 & 80.2 & \colorbox{green!25}{49.4} & \colorbox{green!25}{30.8} & \colorbox{red!25}{26.4} & 96.7 & 90.1 & 6.6 & 9.0 & 86.2 & 66.1 & 20.0 & 15.7 \\
\midrule
\multicolumn{22}{l}{\textsc{FairTPT (MO)}}\\
\hspace{0.1cm}$\lambda_{\textnormal{fair (mo)}} = 1$ & 95.4 & 90.0 & 5.4 & 9.5 & 86.3 & \textbf{67.8} & 18.6 & 22.1 & 75.3 & 52.3 & 23.0 & 8.4 & 83.8 & 40.9 & 42.9 & 26.2 & 80.3 & 46.7 & 33.6 & \underline{22.3} & \underline{97.0} & \underline{90.6} & 6.5 & 8.7 & \underline{86.4} & 64.7 & 21.7 & 16.2 \\
\hspace{0.1cm}$\lambda_{\textnormal{fair (mo)}} = 100$ & 95.3 & 90.4 & \underline{4.9} & \textbf{7.9} & 85.3 & 67.5 & 17.8 & 21.3 & 76.1 & \colorbox{green!25}{57.8} & \colorbox{green!25}{18.3} & 6.3 & 83.2 & 41.4 & 41.8 & 24.6 & 80.8 & \colorbox{green!25}{51.6} & \colorbox{green!25}{29.3} & \colorbox{green!25}{\textbf{20.9}} & 96.6 & \textbf{90.9} & \textbf{5.8} & \underline{8.1} & 86.2 & \colorbox{green!25}{\textbf{66.6}} & \colorbox{green!25}{\textbf{19.6}} & \textbf{14.8} \\
\hspace{0.1cm}$\lambda_{\textnormal{fair (mo)}} = 5000$ & 95.3 & 90.2 & 5.2 & 8.4 & 85.2 & 67.4 & 17.8 & 22.0 & 76.0 & \colorbox{green!25}{56.3} & \colorbox{green!25}{19.7} & 6.2 & 83.2 & 41.7 & \colorbox{green!25}{41.5} & 24.8 & 80.9 & \colorbox{green!25}{48.9} & \colorbox{green!25}{32.0} & \colorbox{red!25}{26.3} & 96.7 & 90.3 & 6.3 & 8.5 & 86.2 & 65.8 & 20.4 & 16.0 \\
\midrule
\multicolumn{22}{l}{\textsc{FairTPT} \textit{special cases}}\\
\hspace{0.1cm}$\lambda_{\textnormal{fair}} = 0$ & 95.1 & 90.1 & 5.0 & 9.2 & 86.9 & \colorbox{red!25}{65.8} & \colorbox{red!25}{21.1} & 23.9 & 74.4 & \colorbox{red!25}{50.5} & 23.9 & 8.1 & 83.7 & 40.2 & 43.5 & 25.2 & 80.3 & 46.2 & 34.1 & 22.8 & 96.6 & 90.3 & 6.3 & 8.8 & 86.2 & 63.8 & 22.3 & 16.4 \\
\hspace{0.1cm}$\lambda_{\textnormal{fair}} = \infty$ & \underline{95.5} & 90.1 & 5.4 & 9.0 & 85.2 & 67.4 & 17.8 & 21.6 & 75.9 & \colorbox{green!25}{55.8} & \colorbox{green!25}{20.1} & \colorbox{green!25}{\underline{5.9}} & 83.5 & \colorbox{green!25}{44.5} & \colorbox{green!25}{39.0} & 25.0 & 80.8 & \colorbox{green!25}{49.7} & \colorbox{green!25}{31.1} & \colorbox{red!25}{27.6} & 96.8 & 90.3 & 6.5 & 8.9 & 86.3 & 66.3 & 20.0 & 15.8 \\
\midrule
\multicolumn{22}{l}{\textsc{FairTPT} \textit{without ELRA}} \\
\hspace{0.1cm}$\lambda_{\textnormal{fair}} = 1$ & 94.5 & \colorbox{red!25}{79.6} & \colorbox{red!25}{14.9} & \colorbox{red!25}{17.6} & \colorbox{green!25}{89.9} & \colorbox{red!25}{56.3} & \colorbox{red!25}{33.6} & \colorbox{red!25}{28.8} & \colorbox{red!25}{67.7} & \colorbox{red!25}{30.2} & \colorbox{red!25}{37.5} & \colorbox{red!25}{16.8} & \textbf{84.4} & \colorbox{green!25}{43.6} & \colorbox{green!25}{40.8} & \colorbox{red!25}{30.3} & \colorbox{green!25}{\underline{87.0}} & \colorbox{green!25}{56.4} & \colorbox{green!25}{30.6} & \colorbox{red!25}{31.8} & 95.7 & 88.4 & 7.3 & 8.5 & \textbf{86.5} & \colorbox{red!25}{59.1} & \colorbox{red!25}{27.4} & \colorbox{red!25}{22.3} \\
\hspace{0.1cm}$\lambda_{\textnormal{fair}} = 100$ & \colorbox{red!25}{88.6} & \colorbox{red!25}{25.2} & \colorbox{red!25}{63.3} & \colorbox{red!25}{71.9} & \colorbox{red!25}{68.7} & \colorbox{red!25}{51.1} & 17.6 & \colorbox{red!25}{25.3} & \colorbox{green!25}{79.0} & \colorbox{green!25}{\underline{65.9}} & \colorbox{green!25}{\underline{13.1}} & \colorbox{red!25}{25.0} & \underline{84.3} & \colorbox{green!25}{\underline{45.2}} & \colorbox{green!25}{39.1} & \underline{23.3} & \colorbox{red!25}{76.5} & \colorbox{green!25}{\textbf{62.4}} & \colorbox{green!25}{\underline{14.2}} & \colorbox{red!25}{30.9} & \colorbox{red!25}{82.4} & \colorbox{red!25}{15.2} & \colorbox{red!25}{67.2} & \colorbox{red!25}{80.9} & \colorbox{red!25}{79.9} & \colorbox{red!25}{44.2} & \colorbox{red!25}{35.8} & \colorbox{red!25}{39.4} \\
\hspace{0.1cm}$\lambda_{\textnormal{fair}} = 5000$ & \colorbox{red!25}{88.2} & \colorbox{red!25}{24.8} & \colorbox{red!25}{63.4} & \colorbox{red!25}{72.4} & \colorbox{red!25}{67.3} & \colorbox{red!25}{50.3} & \underline{17.0} & 23.5 & \colorbox{green!25}{\textbf{80.4}} & \colorbox{red!25}{25.0} & \colorbox{red!25}{25.0} & 6.1 & 84.1 & \colorbox{green!25}{\textbf{45.4}} & \colorbox{green!25}{\textbf{38.7}} & \colorbox{green!25}{\textbf{22.7}} & \colorbox{red!25}{75.2} & \colorbox{green!25}{\underline{62.3}} & \colorbox{green!25}{25.0} & \colorbox{red!25}{29.2} & \colorbox{red!25}{82.0} & \colorbox{red!25}{13.7} & \colorbox{red!25}{68.2} & \colorbox{red!25}{82.0} & \colorbox{red!25}{79.5} & \colorbox{red!25}{44.3} & \colorbox{red!25}{35.2} & \colorbox{red!25}{39.3} \\
\midrule
\multicolumn{22}{l}{\textsc{FairTPT (MO)} \textit{without ELRA}} \\
\hspace{0.1cm}$\lambda_{\textnormal{fair (mo)}} = 1$ & 94.5 & \colorbox{red!25}{80.4} & \colorbox{red!25}{14.2} & \colorbox{red!25}{16.5} & \colorbox{green!25}{\underline{90.0}} & \colorbox{red!25}{56.3} & \colorbox{red!25}{33.7} & \colorbox{red!25}{28.8} & \colorbox{red!25}{67.6} & \colorbox{red!25}{29.7} & \colorbox{red!25}{37.9} & \colorbox{red!25}{17.4} & \textbf{84.4} & \colorbox{green!25}{43.8} & \colorbox{green!25}{40.6} & \colorbox{red!25}{30.1} & \colorbox{green!25}{\underline{87.0}} & \colorbox{green!25}{55.9} & \colorbox{green!25}{31.1} & \colorbox{red!25}{32.3} & 95.7 & 88.4 & 7.3 & 8.5 & \textbf{86.5} & \colorbox{red!25}{59.1} & \colorbox{red!25}{27.5} & \colorbox{red!25}{22.3} \\
\hspace{0.1cm}$\lambda_{\textnormal{fair (mo)}} = 100$ & \colorbox{red!25}{90.0} & \colorbox{red!25}{29.9} & \colorbox{red!25}{60.1} & \colorbox{red!25}{67.6} & \colorbox{red!25}{70.9} & \colorbox{red!25}{56.2} & \colorbox{green!25}{\textbf{14.7}} & \colorbox{green!25}{\underline{20.3}} & \colorbox{green!25}{78.7} & \colorbox{green!25}{62.5} & \colorbox{green!25}{16.2} & 7.0 & \underline{84.3} & \colorbox{green!25}{45.0} & \colorbox{green!25}{39.3} & 23.8 & 78.8 & \colorbox{red!25}{25.0} & \colorbox{green!25}{14.9} & \colorbox{red!25}{31.6} & \colorbox{red!25}{85.2} & \colorbox{red!25}{25.9} & \colorbox{red!25}{59.3} & \colorbox{red!25}{70.3} & \colorbox{red!25}{81.3} & \colorbox{red!25}{47.2} & \colorbox{red!25}{34.1} & \colorbox{red!25}{36.8} \\
\hspace{0.1cm}$\lambda_{\textnormal{fair (mo)}} = 5000$ & \colorbox{red!25}{89.6} & \colorbox{red!25}{29.0} & \colorbox{red!25}{60.6} & \colorbox{red!25}{68.5} & \colorbox{red!25}{69.4} & \colorbox{red!25}{54.6} & \colorbox{green!25}{\textbf{14.7}} & \colorbox{green!25}{\textbf{19.5}} & \colorbox{green!25}{\underline{79.9}} & \colorbox{green!25}{\textbf{68.1}} & \colorbox{green!25}{\textbf{11.8}} & \colorbox{red!25}{25.0} & \underline{84.3} & \colorbox{green!25}{45.0} & \colorbox{green!25}{39.3} & 24.2 & \colorbox{red!25}{77.4} & \colorbox{red!25}{25.0} & \colorbox{green!25}{\textbf{13.8}} & \colorbox{red!25}{31.6} & \colorbox{red!25}{84.7} & \colorbox{red!25}{25.9} & \colorbox{red!25}{58.8} & \colorbox{red!25}{70.6} & \colorbox{red!25}{80.9} & \colorbox{red!25}{47.7} & \colorbox{red!25}{33.2} & \colorbox{red!25}{36.4} \\
\midrule
\multicolumn{22}{l}{\textsc{FairTPT} \textit{special cases without ELRA}} \\
\hspace{0.1cm}$\lambda_{\textnormal{fair}} = 0$ & \colorbox{red!25}{93.6} & \colorbox{red!25}{85.0} & \colorbox{red!25}{8.6} & \colorbox{red!25}{12.4} & \colorbox{green!25}{\textbf{92.0}} & \colorbox{red!25}{42.6} & \colorbox{red!25}{49.4} & \colorbox{red!25}{37.6} & \colorbox{red!25}{59.4} & \colorbox{red!25}{20.0} & \colorbox{red!25}{39.4} & \colorbox{red!25}{25.0} & 84.0 & 40.8 & 43.1 & \colorbox{red!25}{37.1} & \colorbox{green!25}{\textbf{87.5}} & \colorbox{green!25}{57.7} & \colorbox{green!25}{29.8} & \colorbox{red!25}{26.3} & \colorbox{red!25}{94.3} & \colorbox{red!25}{86.4} & 7.9 & \textbf{8.0} & 85.1 & \colorbox{red!25}{55.4} & \colorbox{red!25}{29.7} & \colorbox{red!25}{21.1} \\
\hspace{0.1cm}$\lambda_{\textnormal{fair}} = \infty$ & \colorbox{red!25}{88.2} & \colorbox{red!25}{24.8} & \colorbox{red!25}{63.4} & \colorbox{red!25}{72.3} & \colorbox{red!25}{67.3} & \colorbox{red!25}{50.1} & 17.2 & 23.6 & \colorbox{green!25}{\textbf{80.4}} & \colorbox{red!25}{25.0} & \colorbox{red!25}{25.0} & 6.6 & 84.2 & \colorbox{green!25}{\textbf{45.4}} & \colorbox{green!25}{\underline{38.8}} & \colorbox{green!25}{\textbf{22.7}} & \colorbox{red!25}{75.1} & \colorbox{green!25}{62.0} & \colorbox{green!25}{25.0} & \colorbox{red!25}{28.8} & \colorbox{red!25}{82.0} & \colorbox{red!25}{13.3} & \colorbox{red!25}{68.6} & \colorbox{red!25}{82.4} & \colorbox{red!25}{79.5} & \colorbox{red!25}{44.2} & \colorbox{red!25}{35.3} & \colorbox{red!25}{39.4} \\
\bottomrule
\end{tabular}
}
\caption{Overall and subgroup-level performance evaluation of \textsc{FairTPT} and \textsc{FairTPT (MO)} (with and without ELRA). We report mean over 5 random seeds and mean aggregation over all equally sized datasets. Best results are in \textbf{bold}, second best \underline{underlined}. The values that improve upon \textsc{Zero-Shot} by more than 2.0 percentage points are highlighted in \colorbox{green!25}{green}, while degradations greater than 2.0 percentage points are shown in \colorbox{red!25}{red}.}
\label{tab:fair-tpt-sensitivity-elra-ablation}
\end{table*}

\setlength{\fboxsep}{1.5pt}
\begin{table*}[h]
\centering
\tiny
\resizebox{\linewidth}{!}{
\begin{tabular}{l@{\hskip 0.05in}
c@{\hskip 0.02in}c@{\hskip 0.02in}c@{\hskip 0.02in}c@{\hskip 0.05in}
c@{\hskip 0.02in}c@{\hskip 0.02in}c@{\hskip 0.02in}c@{\hskip 0.05in}
c@{\hskip 0.02in}c@{\hskip 0.02in}c@{\hskip 0.02in}c@{\hskip 0.05in}
c@{\hskip 0.02in}c@{\hskip 0.02in}c@{\hskip 0.02in}c@{\hskip 0.05in}
c@{\hskip 0.02in}c@{\hskip 0.02in}c@{\hskip 0.02in}c@{\hskip 0.05in}
c@{\hskip 0.02in}c@{\hskip 0.02in}c@{\hskip 0.02in}c@{\hskip 0.05in}|
c@{\hskip 0.02in}c@{\hskip 0.02in}c@{\hskip 0.02in}c}
\toprule
\textsc{Method} 
& \multicolumn{4}{c}{\textsc{FairFace}} 
& \multicolumn{8}{c}{\textsc{CelebA}} 
& \multicolumn{4}{c}{\textsc{WaterBirds}} 
& \multicolumn{8}{c}{\textsc{UTKFace}} 
& \multicolumn{4}{c}{\textsc{Average Results}} \\
\cmidrule(lr){2-5} 
\cmidrule(lr){6-13} 
\cmidrule(lr){14-17} 
\cmidrule(lr){18-25} 
& \multicolumn{4}{c}{$\text{Gender} \times \text{Race}$} 
& \multicolumn{4}{c}{$\text{Hair color} \times \text{Gender}$} 
& \multicolumn{4}{c}{$\text{Smiling} \times \text{Gender}$} 
& \multicolumn{4}{c}{$\text{Type} \times \text{Background}$} 
& \multicolumn{4}{c}{$\text{Age} \times \text{Race}$} 
& \multicolumn{4}{c}{$\text{Gender} \times \text{Race}$} 
& \multicolumn{4}{c}{} \\
\cmidrule(lr){2-5} 
\cmidrule(lr){6-9} 
\cmidrule(lr){10-13} 
\cmidrule(lr){14-17} 
\cmidrule(lr){18-21} 
\cmidrule(lr){22-25} 
\cmidrule(lr){26-29}
& A & WGA & B & EOD 
& A & WGA & B & EOD 
& A & WGA & B & EOD 
& A & WGA & B & EOD 
& A & WGA & B & EOD 
& A & WGA & B & EOD 
& A & WGA & B & EOD \\
\midrule

Zero Shot & \textbf{95.7} & 90.0 & 5.7 & 9.4 & 86.4 & 67.8 & 18.6 & 22.3 & 75.8 & 53.7 & 22.1 & 8.0 & 83.8 & 40.2 & 43.7 & 25.0 & 80.3 & 45.7 & 34.5 & 23.2 & \textbf{97.1} & 90.1 & 7.0 & 9.1 & \textbf{86.5} & 64.6 & 21.9 & 16.2 \\
\midrule
\multicolumn{22}{l}{\textsc{FairTPT} and \textsc{FairTPT (MO)} \textit{with S loss}}\\
\hspace{0.3cm}$\lambda_{\textnormal{fair}} = 1$ & 95.5 & 90.5 & 5.0 & 8.8 & 86.2 & 67.7 & 18.5 & 22.0 & 75.3 & 52.6 & 22.7 & 8.1 & 83.9 & 41.0 & 42.8 & 26.2 & 80.3 & 46.5 & 33.8 & 22.5 & \underline{97.0} & 90.6 & 6.5 & 8.7 & \underline{86.4} & 64.8 & 21.6 & 16.1 \\
\hspace{0.3cm}$\lambda_{\textnormal{fair}} = 100$ & 95.5 & \textbf{90.7} & \textbf{4.8} & \underline{8.1} & 85.6 & 67.7 & 17.9 & 21.9 & 75.9 & \colorbox{green!25}{56.1} & \colorbox{green!25}{19.8} & 6.2 & 83.2 & \colorbox{green!25}{42.4} & \colorbox{green!25}{40.8} & \underline{23.5} & 81.0 & \colorbox{green!25}{\underline{51.5}} & \colorbox{green!25}{\underline{29.5}} & 23.4 & 96.7 & 90.6 & 6.1 & 8.6 & 86.3 & \underline{66.5} & \colorbox{green!25}{\underline{19.8}} & 15.3 \\
\hspace{0.3cm}$\lambda_{\textnormal{fair}} = 5000$ & 95.4 & \underline{90.6} & \underline{4.9} & \underline{8.1} & 85.5 & 67.5 & 18.0 & 21.5 & 75.9 & 55.6 & 20.3 & \colorbox{green!25}{5.8} & 83.3 & \colorbox{green!25}{\underline{43.6}} & \colorbox{green!25}{\underline{39.7}} & \textbf{23.4} & 80.2 & \colorbox{green!25}{49.4} & \colorbox{green!25}{30.8} & \colorbox{red!25}{26.4} & 96.7 & 90.1 & 6.6 & 9.0 & 86.2 & 66.1 & 20.0 & 15.7 \\
\hspace{0.3cm}$\lambda_{\textnormal{fair}} = \infty$ & 95.5 & 90.1 & 5.4 & 9.0 & 85.2 & 67.4 & \underline{17.8} & 21.6 & 75.9 & \colorbox{green!25}{55.8} & \colorbox{green!25}{20.1} & \colorbox{green!25}{5.9} & 83.5 & \colorbox{green!25}{\textbf{44.5}} & \colorbox{green!25}{\textbf{39.0}} & 25.0 & 80.8 & \colorbox{green!25}{49.7} & \colorbox{green!25}{31.1} & \colorbox{red!25}{27.6} & 96.8 & 90.3 & 6.5 & 8.9 & 86.3 & 66.3 & 20.0 & 15.8 \\
\hspace{0.3cm}$\lambda_{\textnormal{fair (mo)}} = 1$ & 95.4 & 90.0 & 5.4 & 9.5 & 86.3 & 67.8 & 18.6 & 22.1 & 75.3 & 52.3 & 23.0 & 8.4 & 83.8 & 40.9 & 42.9 & 26.2 & 80.3 & 46.7 & 33.6 & 22.3 & \underline{97.0} & 90.6 & 6.5 & 8.7 & \underline{86.4} & 64.7 & 21.7 & 16.2 \\
\hspace{0.3cm}$\lambda_{\textnormal{fair (mo)}} = 100$ & 95.3 & 90.4 & \underline{4.9} & \textbf{7.9} & 85.3 & 67.5 & \underline{17.8} & 21.3 & \textbf{76.1} & \colorbox{green!25}{\textbf{57.8}} & \colorbox{green!25}{\textbf{18.3}} & 6.3 & 83.2 & 41.4 & 41.8 & 24.6 & 80.8 & \colorbox{green!25}{\textbf{51.6}} & \colorbox{green!25}{\textbf{29.3}} & \colorbox{green!25}{20.9} & 96.6 & 90.9 & 5.8 & 8.1 & 86.2 & \colorbox{green!25}{\textbf{66.6}} & \colorbox{green!25}{\textbf{19.6}} & \textbf{14.8} \\
\hspace{0.3cm}$\lambda_{\textnormal{fair (mo)}} = 5000$ & 95.3 & 90.2 & 5.2 & 8.4 & 85.2 & 67.4 & \underline{17.8} & 22.0 & \underline{76.0} & \colorbox{green!25}{\underline{56.3}} & \colorbox{green!25}{19.7} & 6.2 & 83.2 & 41.7 & \colorbox{green!25}{41.5} & 24.8 & 80.9 & \colorbox{green!25}{48.9} & \colorbox{green!25}{32.0} & \colorbox{red!25}{26.3} & 96.7 & 90.3 & 6.3 & 8.5 & 86.2 & 65.8 & 20.4 & 16.0 \\
\midrule
\multicolumn{22}{l}{\textsc{FairTPT} and \textsc{FairTPT (MO)} \textit{with TS loss}}\\
\hspace{0.3cm}$\lambda_{\textnormal{fair}} = 1$ & 95.5 & 90.1 & 5.3 & 9.4 & \textbf{86.6} & 67.8 & 18.8 & 22.3 & 75.2 & 52.5 & 22.7 & 7.6 & 83.9 & 40.5 & 43.4 & 26.7 & 80.4 & 46.9 & 33.5 & 22.1 & 96.9 & 90.6 & 6.3 & 8.6 & \underline{86.4} & 64.7 & 21.7 & 16.1 \\
\hspace{0.3cm}$\lambda_{\textnormal{fair}} = 100$ & 95.3 & 89.1 & 6.2 & 9.9 & 86.1 & \textbf{68.0} & 18.1 & \colorbox{green!25}{\underline{20.2}} & 75.0 & 54.0 & 20.9 & \colorbox{green!25}{\textbf{5.3}} & \underline{84.0} & \colorbox{green!25}{42.2} & 41.8 & 24.8 & \textbf{81.6} & \colorbox{green!25}{48.0} & 33.6 & 24.3 & 96.5 & 90.2 & 6.2 & 8.9 & \underline{86.4} & 65.3 & 21.1 & 15.6 \\
\hspace{0.3cm}$\lambda_{\textnormal{fair}} = 5000$ & 95.2 & 89.6 & 5.7 & 9.7 & 86.2 & \underline{67.9} & 18.3 & 20.4 & 75.2 & 55.0 & 20.2 & \colorbox{green!25}{6.0} & \textbf{84.1} & \colorbox{green!25}{42.2} & 41.9 & 24.6 & \underline{81.5} & 47.0 & 34.5 & 24.5 & 96.5 & 90.5 & 6.0 & 8.3 & \textbf{86.5} & 65.4 & 21.1 & 15.6 \\
\hspace{0.3cm}$\lambda_{\textnormal{fair}} = \infty$ & 95.2 & 88.6 & 6.6 & 10.5 & 86.3 & \underline{67.9} & 18.4 & \colorbox{green!25}{20.3} & 75.1 & 54.9 & 20.2 & \colorbox{green!25}{\textbf{5.3}} & \textbf{84.1} & 42.1 & 42.1 & 25.1 & \underline{81.5} & 46.1 & 35.4 & \colorbox{red!25}{27.1} & 96.4 & 90.8 & 5.7 & 8.2 & \underline{86.4} & 65.1 & 21.4 & 16.1 \\
\hspace{0.3cm}$\lambda_{\textnormal{fair (mo)}} = 1$ & 95.5 & 90.5 & 5.0 & 9.0 & \underline{86.5} & 67.8 & 18.7 & 22.1 & 75.3 & 52.7 & 22.7 & 7.8 & \underline{84.0} & 40.7 & 43.4 & 26.3 & 80.4 & 46.9 & 33.6 & 22.1 & 96.9 & 90.6 & 6.3 & 8.6 & \textbf{86.5} & 64.8 & 21.6 & 16.0 \\
\hspace{0.3cm}$\lambda_{\textnormal{fair (mo)}} = 100$ & 95.0 & \colorbox{red!25}{88.0} & 7.1 & 10.9 & 86.1 & \underline{67.9} & 18.1 & \colorbox{green!25}{\underline{20.2}} & 75.0 & 52.6 & 22.4 & 7.1 & \textbf{84.1} & 41.4 & 42.7 & 25.6 & 80.9 & 47.0 & 33.9 & 22.8 & 96.3 & \underline{91.0} & \textbf{5.2} & \textbf{7.3} & 86.2 & 64.7 & 21.6 & 15.6 \\
\hspace{0.3cm}$\lambda_{\textnormal{fair (mo)}} = 5000$ & 94.9 & \colorbox{red!25}{88.0} & 6.8 & 10.7 & 86.1 & \underline{67.9} & 18.2 & 20.4 & 75.7 & 55.1 & 20.5 & \colorbox{green!25}{\underline{5.4}} & \underline{84.0} & 41.5 & 42.5 & 25.1 & 80.9 & 45.9 & 34.9 & 24.7 & 96.2 & 90.2 & 5.9 & 8.8 & 86.3 & 64.8 & 21.5 & 15.9 \\
\midrule
\multicolumn{22}{l}{\textsc{FairTPT} and \textsc{FairTPT (MO)} \textit{with Super TS loss}}\\
\hspace{0.3cm}$\lambda_{\textnormal{fair}} = 1$ & \textbf{95.7} & 90.5 & 5.2 & 9.0 & \underline{86.5} & 67.7 & 18.8 & 22.4 & 75.3 & 52.4 & 22.9 & 8.2 & 83.9 & 40.0 & 43.9 & 26.1 & 80.3 & 46.3 & 34.0 & 22.7 & \textbf{97.1} & 90.6 & 6.5 & 8.6 & \textbf{86.5} & 64.6 & 21.9 & 16.2 \\
\hspace{0.3cm}$\lambda_{\textnormal{fair}} = 100$ & 94.8 & \colorbox{red!25}{87.7} & 7.1 & \colorbox{red!25}{11.7} & 85.7 & \underline{67.9} & \textbf{17.7} & 21.1 & 75.2 & 52.8 & 22.4 & 7.5 & 83.5 & 39.3 & 44.2 & 25.5 & 81.1 & \colorbox{green!25}{47.8} & 33.3 & 25.0 & 96.5 & 90.6 & 5.9 & 8.3 & 86.1 & 64.4 & 21.8 & 15.7 \\
\hspace{0.3cm}$\lambda_{\textnormal{fair}} = 5000$ & 94.8 & 89.0 & 5.8 & 10.2 & 85.6 & 67.8 & \underline{17.8} & \colorbox{green!25}{\underline{20.2}} & 75.2 & 55.0 & 20.2 & 6.5 & 83.6 & 39.3 & 44.3 & 25.7 & 81.1 & \colorbox{green!25}{48.6} & \colorbox{green!25}{32.5} & \colorbox{green!25}{\textbf{20.4}} & 96.5 & 90.0 & 6.5 & 8.3 & 86.1 & 65.0 & 21.2 & \underline{15.2} \\
\hspace{0.3cm}$\lambda_{\textnormal{fair}} = \infty$ & 95.0 & 88.3 & 6.7 & 10.2 & 85.7 & 67.9 & 17.8 & \colorbox{green!25}{19.8} & 75.5 & 55.9 & 19.6 & 6.5 & 83.5 & 39.5 & 44.0 & 25.5 & 81.1 & 49.1 & 32.0 & 21.3 & 96.4 & 90.3 & 6.1 & 8.4 & 86.2 & 65.2 & 21.0 & 15.3 \\
\hspace{0.3cm}$\lambda_{\textnormal{fair (mo)}} = 1$ & \underline{95.6} & 90.3 & 5.3 & 9.3 & \underline{86.5} & 67.7 & 18.8 & 22.4 & 75.4 & 52.6 & 22.8 & 8.2 & \underline{84.0} & 39.8 & 44.2 & 26.7 & 80.3 & 46.3 & 34.0 & 22.7 & \textbf{97.1} & 90.6 & 6.5 & 8.6 & \textbf{86.5} & 64.5 & 21.9 & 16.3 \\
\hspace{0.3cm}$\lambda_{\textnormal{fair (mo)}} = 100$ & 95.1 & 88.3 & 6.9 & 9.9 & 85.9 & 67.7 & 18.2 & 20.6 & 75.4 & 52.5 & 22.9 & 8.8 & 83.7 & 39.5 & 44.2 & 25.7 & 80.2 & 46.8 & 33.5 & 22.2 & 96.5 & \textbf{91.1} & \underline{5.5} & \underline{7.5} & 86.1 & 64.3 & 21.8 & 15.8 \\
\hspace{0.3cm}$\lambda_{\textnormal{fair (mo)}} = 5000$ & 95.2 & 89.3 & 6.0 & 9.6 & 85.8 & 67.8 & 18.0 & \colorbox{green!25}{\textbf{19.8}} & 75.4 & 52.2 & 23.2 & 9.8 & 83.7 & 39.3 & 44.4 & 25.7 & 80.6 & \colorbox{green!25}{48.9} & \colorbox{green!25}{31.7} & \colorbox{green!25}{\underline{20.8}} & 96.1 & 89.7 & 6.4 & 8.7 & 86.1 & 64.5 & 21.6 & 15.7 \\
\bottomrule
\end{tabular}
}
\caption{Impact of the loss on the experiment. We report mean over 5 data random seeds and mean aggregation over all equally sized datasets. Best results are in \textbf{bold}, second best \underline{underlined}. The values that improve upon \textsc{Zero-Shot} by more than 2.0 percentage points are highlighted in \colorbox{green!25}{green}, while degradations greater than 2.0 percentage points are shown in \colorbox{red!25}{red}.}
\label{tab:fair-tpt-loss-choice}
\end{table*}

\setlength{\fboxsep}{1.5pt}
\begin{table*}[h]
\centering
\tiny
\resizebox{\linewidth}{!}{
\begin{tabular}{l@{\hskip 0.05in}
c@{\hskip 0.02in}c@{\hskip 0.02in}c@{\hskip 0.02in}c@{\hskip 0.05in}
c@{\hskip 0.02in}c@{\hskip 0.02in}c@{\hskip 0.02in}c@{\hskip 0.05in}
c@{\hskip 0.02in}c@{\hskip 0.02in}c@{\hskip 0.02in}c@{\hskip 0.05in}
c@{\hskip 0.02in}c@{\hskip 0.02in}c@{\hskip 0.02in}c@{\hskip 0.05in}
c@{\hskip 0.02in}c@{\hskip 0.02in}c@{\hskip 0.02in}c@{\hskip 0.05in}
c@{\hskip 0.02in}c@{\hskip 0.02in}c@{\hskip 0.02in}c@{\hskip 0.05in}|
c@{\hskip 0.02in}c@{\hskip 0.02in}c@{\hskip 0.02in}c}
\toprule
\textsc{Method} 
& \multicolumn{4}{c}{\textsc{FairFace}} 
& \multicolumn{8}{c}{\textsc{CelebA}} 
& \multicolumn{4}{c}{\textsc{WaterBirds}} 
& \multicolumn{8}{c}{\textsc{UTKFace}} 
& \multicolumn{4}{c}{\textsc{Average Results}} \\
\cmidrule(lr){2-5} 
\cmidrule(lr){6-13} 
\cmidrule(lr){14-17} 
\cmidrule(lr){18-25} 
& \multicolumn{4}{c}{$\text{Gender} \times \text{Race}$} 
& \multicolumn{4}{c}{$\text{Hair color} \times \text{Gender}$} 
& \multicolumn{4}{c}{$\text{Smiling} \times \text{Gender}$} 
& \multicolumn{4}{c}{$\text{Type} \times \text{Background}$} 
& \multicolumn{4}{c}{$\text{Age} \times \text{Race}$} 
& \multicolumn{4}{c}{$\text{Gender} \times \text{Race}$} 
& \multicolumn{4}{c}{} \\
\cmidrule(lr){2-5} 
\cmidrule(lr){6-9} 
\cmidrule(lr){10-13} 
\cmidrule(lr){14-17} 
\cmidrule(lr){18-21} 
\cmidrule(lr){22-25} 
\cmidrule(lr){26-29}
& A & WGA & B & EOD 
& A & WGA & B & EOD 
& A & WGA & B & EOD 
& A & WGA & B & EOD 
& A & WGA & B & EOD 
& A & WGA & B & EOD 
& A & WGA & B & EOD \\
\midrule
\textsc{Zero-Shot} & 95.7 & 90.0 & 5.7 & 9.4 & \textbf{86.4} & \underline{67.8} & 18.6 & 22.3 & 75.8 & 53.7 & \underline{22.1} & \underline{8.0} & \textbf{83.8} & 40.2 & 43.7 & 25.0 & 80.3 & 45.7 & 34.5 & \underline{23.2} & \textbf{97.1} & 90.1 & 7.0 & 9.1 & \underline{86.5} & 64.6 & 21.9 & \underline{16.2} \\
\midrule
\multicolumn{22}{l}{\textit{Episodic test-time debiasing methods}} \\
\textsc{OrthCali} & \textbf{95.9} & \colorbox{red!25}{87.8} & \colorbox{red!25}{8.1} & 11.3 & 85.1 & \colorbox{green!25}{\textbf{71.6}} & \colorbox{green!25}{\textbf{13.5}} & \underline{22.0} & \colorbox{red!25}{71.4} & \colorbox{red!25}{36.8} & \colorbox{red!25}{34.6} & \colorbox{red!25}{30.6} & 83.3 & \colorbox{green!25}{\textbf{62.0}} & \colorbox{green!25}{\textbf{21.4}} & \colorbox{green!25}{\textbf{16.2}} & 79.8 & 47.0 & 32.8 & \colorbox{green!25}{\textbf{14.4}} & \underline{96.8} & 88.9 & 7.9 & 10.1 & 85.4 & 65.7 & \colorbox{green!25}{\textbf{19.7}} & 17.4 \\
\midrule
\multicolumn{22}{l}{\textit{Our methods}} \\
\textsc{FairTPT} & 95.5 & \underline{90.7} & \textbf{4.8} & \textbf{8.1} & 85.6 & 67.7 & \underline{17.9} & \textbf{21.9} & \underline{75.9} & \colorbox{green!25}{\textbf{56.1}} & \colorbox{green!25}{\textbf{19.8}} & \textbf{6.2} & 83.2 & \colorbox{green!25}{42.4} & \colorbox{green!25}{\underline{40.8}} & 23.5 & \underline{81.0} & \colorbox{green!25}{\textbf{51.5}} & \colorbox{green!25}{\textbf{29.5}} & 23.4 & 96.7 & \textbf{90.6} & \textbf{6.1} & \textbf{8.6} & 86.3 & \textbf{66.5} & \colorbox{green!25}{\underline{19.8}} & \textbf{15.3} \\
\textsc{FairTPT}+\textsc{OrthCali} & \underline{95.8} & \textbf{90.8} & \underline{5.0} & \underline{8.6} & \underline{85.7} & 67.6 & 18.1 & 23.6 & \textbf{76.7} & \underline{54.0} & 22.7 & \colorbox{red!25}{17.0} & \underline{83.7} & \colorbox{green!25}{\underline{42.9}} & \colorbox{green!25}{\underline{40.8}} & \colorbox{green!25}{\underline{17.2}} & \textbf{81.2} & \colorbox{green!25}{\underline{50.0}} & \colorbox{green!25}{\underline{31.2}} & 23.4 & 96.7 & \underline{90.3} & \underline{6.4} & \underline{8.9} & \textbf{86.6} & \underline{65.9} & 20.7 & 16.4 \\
\bottomrule
\end{tabular}
}
\caption{Overall and subgroup-level performance evaluation of \textsc{FairTPT} followed by the orthogonal projection of \textsc{OrthCali} \citep{chuang2023debiasing}; we set $\lambda_\textnormal{fair} = 100$ and $\lambda_\textnormal{orth} = 0$. The spurious embeddings used for projection are those obtained after \textsc{FairTPT}. We report mean over 5 random seeds and mean aggregation over all equally sized datasets. Best results are in \textbf{bold}, second best \underline{underlined}. The values that improve upon \textsc{Zero-Shot} by more than 2.0 percentage points are highlighted in \colorbox{green!25}{green}, while degradations greater than 2.0 percentage points are shown in \colorbox{red!25}{red}.}
\label{tab:fairtpt_with_OP}
\end{table*}

\setlength{\fboxsep}{1.5pt}
\begin{table*}[htbp]
\centering
\tiny
\resizebox{\linewidth}{!}{
\begin{tabular}{l@{\hskip 0.05in}
c@{\hskip 0.02in}c@{\hskip 0.02in}c@{\hskip 0.02in}c@{\hskip 0.05in}
c@{\hskip 0.02in}c@{\hskip 0.02in}c@{\hskip 0.02in}c@{\hskip 0.05in}
c@{\hskip 0.02in}c@{\hskip 0.02in}c@{\hskip 0.02in}c@{\hskip 0.05in}
c@{\hskip 0.02in}c@{\hskip 0.02in}c@{\hskip 0.02in}c@{\hskip 0.05in}
c@{\hskip 0.02in}c@{\hskip 0.02in}c@{\hskip 0.02in}c@{\hskip 0.05in}
c@{\hskip 0.02in}c@{\hskip 0.02in}c@{\hskip 0.02in}c@{\hskip 0.05in}
c@{\hskip 0.02in}c@{\hskip 0.02in}c@{\hskip 0.02in}c@{\hskip 0.05in}|
c@{\hskip 0.02in}c@{\hskip 0.02in}c@{\hskip 0.02in}c}
\toprule
\textsc{Method} 
& \multicolumn{12}{c}{\textsc{FairFace}}
& \multicolumn{8}{c}{\textsc{CelebA}} 
& \multicolumn{8}{c}{\textsc{UTKFace}} 
& \multicolumn{4}{c}{\textsc{Average Results}} \\
\cmidrule(lr){2-13} 
\cmidrule(lr){14-21} 
\cmidrule(lr){22-29}  
& \multicolumn{4}{c}{$\text{Age} \times \text{Race}$} 
& \multicolumn{4}{c}{$\text{Age} \times \text{Gender}$} 
& \multicolumn{4}{c}{$\text{Race} \times \text{Gender}$} 
& \multicolumn{4}{c}{$\text{Makeup} \times \text{Gender}$} 
& \multicolumn{4}{c}{$\text{Glasses} \times \text{Gender}$} 
& \multicolumn{4}{c}{$\text{Age} \times \text{Gender}$} 
& \multicolumn{4}{c}{$\text{Race} \times \text{Gender}$} 
& \multicolumn{4}{c}{} \\
\cmidrule(lr){2-5} 
\cmidrule(lr){6-9} 
\cmidrule(lr){10-13} 
\cmidrule(lr){14-17} 
\cmidrule(lr){18-21} 
\cmidrule(lr){22-25} 
\cmidrule(lr){26-29}
\cmidrule(lr){30-33}
& A & WGA & B & EOD 
& A & WGA & B & EOD 
& A & WGA & B & EOD 
& A & WGA & B & EOD 
& A & WGA & B & EOD 
& A & WGA & B & EOD 
& A & WGA & B & EOD
& A & WGA & B & EOD \\
\midrule

Zero Shot & 83.3 & 40.1 & 43.2 & 34.2 & 83.3 & 52.8 & 30.5 & 8.0 & 57.6 & 2.5 & 55.1 & 7.1 & 66.1 & 26.7 & 39.4 & 38.6 & 94.5 & 78.8 & 15.7 & 7.9 & 80.3 & 45.3 & 35.0 & 12.3 & 62.3 & 14.3 & 47.9 & 14.0 & 75.3 & 37.2 & 38.1 & 17.4 \\
\midrule
\multicolumn{22}{l}{\textit{Episodic test-time adaptation methods}} \\
\textsc{TPT} & \colorbox{green!25}{\textbf{86.6}} & \colorbox{green!25}{54.9} & \colorbox{green!25}{31.7} & \colorbox{green!25}{27.8} & \colorbox{green!25}{\textbf{86.6}} & \colorbox{green!25}{61.3} & \colorbox{green!25}{25.3} & \colorbox{red!25}{12.7} & \colorbox{green!25}{\textbf{64.9}} & \colorbox{green!25}{\textbf{27.3}} & \colorbox{green!25}{\textbf{37.6}} & \textbf{6.5} & \colorbox{red!25}{51.3} & \colorbox{red!25}{13.9} & \colorbox{green!25}{37.4} & \colorbox{green!25}{26.6} & 95.0 & \colorbox{red!25}{23.9} & \colorbox{red!25}{71.1} & \colorbox{red!25}{14.4} & \colorbox{green!25}{\textbf{87.5}} & \colorbox{green!25}{60.7} & \colorbox{green!25}{26.8} & \colorbox{red!25}{20.2} & \colorbox{green!25}{\textbf{73.9}} & \colorbox{green!25}{\textbf{42.9}} & \colorbox{green!25}{\textbf{31.0}} & \colorbox{green!25}{10.5} & \colorbox{green!25}{\underline{78.0}} & \colorbox{green!25}{40.7} & 37.3 & 17.0 \\
\textsc{TPT} \textit{with ELRA} & 83.5 & 41.8 & 41.7 & 33.5 & 83.5 & 53.5 & 30.0 & 8.4 & 57.8 & 2.7 & 55.1 & 7.2 & 65.8 & 26.3 & 39.5 & 38.1 & 95.0 & \colorbox{red!25}{75.0} & \colorbox{red!25}{20.0} & \textbf{6.3} & 80.3 & 45.2 & 35.1 & 12.6 & 62.5 & 14.6 & 47.9 & 14.3 & 75.5 & 37.0 & 38.5 & 17.2 \\
\textsc{Zero} \\
\hspace{0.3cm} $\rho=0.1$ & 85.2 & \colorbox{green!25}{51.3} & \colorbox{green!25}{33.9} & \colorbox{green!25}{\underline{23.7}} & 85.2 & \colorbox{green!25}{55.4} & 29.8 & \colorbox{red!25}{15.5} & 57.9 & \colorbox{green!25}{5.1} & \colorbox{green!25}{52.8} & 8.0 & \colorbox{red!25}{59.9} & \colorbox{red!25}{19.5} & 40.4 & \colorbox{green!25}{34.5} & 96.4 & \colorbox{red!25}{39.6} & \colorbox{red!25}{56.8} & \colorbox{red!25}{16.2} & \colorbox{green!25}{86.2} & \colorbox{green!25}{58.4} & \colorbox{green!25}{27.8} & \colorbox{red!25}{18.0} & \colorbox{red!25}{54.3} & \colorbox{red!25}{8.2} & 46.1 & \colorbox{red!25}{19.0} & 75.0 & \colorbox{red!25}{33.9} & \colorbox{red!25}{41.1} & 19.3 \\
\hspace{0.3cm} $\rho=0.25$ & \colorbox{green!25}{85.5} & \colorbox{green!25}{53.1} & \colorbox{green!25}{32.4} & \colorbox{green!25}{24.9} & \colorbox{green!25}{85.5} & \colorbox{green!25}{58.2} & \colorbox{green!25}{27.4} & \colorbox{red!25}{16.2} & 59.0 & \colorbox{green!25}{5.3} & 53.6 & 8.4 & \colorbox{red!25}{58.1} & \colorbox{red!25}{17.6} & 40.5 & \colorbox{green!25}{33.6} & \colorbox{green!25}{96.8} & \colorbox{red!25}{50.8} & \colorbox{red!25}{46.0} & \colorbox{red!25}{11.7} & \colorbox{green!25}{86.6} & \colorbox{green!25}{59.1} & \colorbox{green!25}{27.5} & \colorbox{red!25}{19.9} & \colorbox{red!25}{54.3} & \colorbox{red!25}{5.3} & 49.0 & \colorbox{red!25}{19.4} & 75.1 & 35.6 & 39.5 & 19.2 \\
\hspace{0.3cm} $\rho=0.5$ & \colorbox{green!25}{85.7} & \colorbox{green!25}{53.6} & \colorbox{green!25}{32.1} & \colorbox{green!25}{27.5} & \colorbox{green!25}{85.7} & \colorbox{green!25}{60.2} & \colorbox{green!25}{25.5} & \colorbox{red!25}{15.5} & \colorbox{green!25}{59.6} & \colorbox{green!25}{4.8} & 54.9 & 8.2 & \colorbox{red!25}{57.4} & \colorbox{red!25}{16.9} & 40.5 & \colorbox{green!25}{33.2} & \colorbox{green!25}{96.8} & \colorbox{red!25}{48.1} & \colorbox{red!25}{48.7} & \colorbox{red!25}{18.4} & \colorbox{green!25}{87.0} & \colorbox{green!25}{60.4} & \colorbox{green!25}{26.6} & \colorbox{red!25}{20.5} & \colorbox{red!25}{54.9} & \colorbox{red!25}{4.7} & \colorbox{red!25}{50.2} & \colorbox{red!25}{19.6} & 75.3 & 35.5 & 39.8 & \colorbox{red!25}{20.4} \\
\hspace{0.3cm} $\rho=0.75$ & \colorbox{green!25}{85.8} & \colorbox{green!25}{56.9} & \colorbox{green!25}{28.9} & \colorbox{green!25}{24.7} & \colorbox{green!25}{85.8} & \colorbox{green!25}{61.7} & \colorbox{green!25}{24.1} & \colorbox{red!25}{14.0} & \colorbox{green!25}{59.9} & \colorbox{green!25}{4.6} & 55.4 & 7.5 & \colorbox{red!25}{57.1} & \colorbox{red!25}{17.1} & 40.0 & \colorbox{green!25}{32.3} & \colorbox{green!25}{96.9} & \colorbox{red!25}{51.8} & \colorbox{red!25}{45.0} & \colorbox{red!25}{15.4} & \colorbox{green!25}{87.0} & \colorbox{green!25}{60.4} & \colorbox{green!25}{26.6} & \colorbox{red!25}{19.9} & \colorbox{red!25}{54.8} & \colorbox{red!25}{4.3} & \colorbox{red!25}{50.5} & \colorbox{red!25}{19.0} & 75.4 & 36.7 & 38.6 & 19.0 \\
\hspace{0.3cm} $\rho=1.0$ & \colorbox{green!25}{85.9} & \colorbox{green!25}{57.0} & \colorbox{green!25}{28.9} & \colorbox{green!25}{25.9} & \colorbox{green!25}{\underline{85.9}} & \colorbox{green!25}{61.8} & \colorbox{green!25}{24.1} & \colorbox{red!25}{13.8} & \colorbox{green!25}{60.0} & 4.4 & 55.6 & 7.2 & \colorbox{red!25}{57.5} & \colorbox{red!25}{16.9} & 40.6 & \colorbox{green!25}{33.3} & \colorbox{green!25}{97.0} & \colorbox{red!25}{55.5} & \colorbox{red!25}{41.4} & \colorbox{red!25}{12.4} & \colorbox{green!25}{\underline{87.1}} & \colorbox{green!25}{59.8} & \colorbox{green!25}{27.3} & \colorbox{red!25}{20.8} & \colorbox{red!25}{54.9} & \colorbox{red!25}{4.2} & \colorbox{red!25}{50.7} & \colorbox{red!25}{19.1} & 75.5 & 37.1 & 38.4 & 18.9 \\
\midrule
\multicolumn{22}{l}{\textit{Episodic test-time debiasing methods}} \\
\textsc{OrthCali} \\
\hspace{0.3cm} $\lambda_\textnormal{orth}=0.0$ & 81.6 & \colorbox{green!25}{43.5} & \colorbox{green!25}{38.1} & \colorbox{red!25}{43.0} & 83.5 & 53.3 & 30.2 & 8.5 & 57.6 & 2.7 & 54.8 & 7.4 & \colorbox{green!25}{\textbf{75.7}} & \colorbox{green!25}{\textbf{43.9}} & \colorbox{green!25}{31.8} & 37.7 & 93.4 & \textbf{79.3} & \textbf{14.1} & \colorbox{red!25}{9.9} & 81.0 & 45.9 & 35.0 & 13.5 & 63.8 & 15.7 & 48.1 & 14.8 & 76.7 & \colorbox{green!25}{40.6} & \colorbox{green!25}{36.0} & 19.3 \\
\hspace{0.3cm} $\lambda_\textnormal{orth}=1.0$ & 81.7 & \colorbox{green!25}{44.0} & \colorbox{green!25}{37.7} & \colorbox{red!25}{42.5} & 83.3 & \colorbox{green!25}{55.0} & \colorbox{green!25}{28.2} & 7.1 & 57.7 & 2.7 & 54.9 & 7.5 & \colorbox{green!25}{\textbf{75.7}} & \colorbox{green!25}{\underline{43.8}} & \colorbox{green!25}{31.9} & 37.8 & 93.4 & 79.0 & 14.3 & 9.6 & 80.3 & 46.3 & 34.0 & 12.5 & 63.9 & 16.0 & 47.9 & 14.7 & 76.5 & \colorbox{green!25}{41.0} & \colorbox{green!25}{35.6} & 18.8 \\
\hspace{0.3cm} $\lambda_\textnormal{orth}=10.0$ & 81.6 & \colorbox{green!25}{42.5} & \colorbox{green!25}{39.2} & \colorbox{red!25}{44.0} & 82.2 & 52.1 & 30.1 & \colorbox{red!25}{14.3} & 57.6 & 2.5 & 55.1 & 7.4 & \colorbox{green!25}{\underline{75.4}} & \colorbox{green!25}{43.3} & \colorbox{green!25}{32.1} & 38.0 & 93.1 & 79.0 & \textbf{14.1} & \colorbox{red!25}{10.8} & \colorbox{red!25}{76.5} & \colorbox{red!25}{37.8} & \colorbox{red!25}{38.7} & 12.6 & 64.1 & \colorbox{green!25}{18.2} & \colorbox{green!25}{45.9} & 13.1 & 75.8 & \colorbox{green!25}{39.3} & 36.4 & \colorbox{red!25}{20.0} \\
\hspace{0.3cm} $\lambda_\textnormal{orth}=1000.0$ & \colorbox{red!25}{78.2} & \colorbox{green!25}{42.6} & \colorbox{green!25}{35.6} & \colorbox{red!25}{45.1} & \colorbox{red!25}{72.2} & \colorbox{red!25}{10.5} & \colorbox{red!25}{61.7} & \colorbox{red!25}{72.0} & 57.7 & 3.3 & 54.3 & 7.7 & \colorbox{green!25}{69.5} & \colorbox{green!25}{31.8} & 37.8 & 40.4 & \colorbox{red!25}{87.2} & \colorbox{red!25}{72.8} & 14.5 & \colorbox{red!25}{20.0} & \colorbox{red!25}{67.8} & \colorbox{red!25}{6.4} & \colorbox{red!25}{61.4} & \colorbox{red!25}{52.2} & \colorbox{green!25}{66.8} & \colorbox{green!25}{32.8} & \colorbox{green!25}{34.0} & \colorbox{green!25}{\underline{4.3}} & \colorbox{red!25}{71.3} & \colorbox{red!25}{28.6} & \colorbox{red!25}{42.7} & \colorbox{red!25}{34.5} \\
\hspace{0.3cm} $\lambda_\textnormal{orth}=100000$ & \colorbox{red!25}{76.7} & \colorbox{green!25}{44.2} & \colorbox{green!25}{32.5} & \colorbox{red!25}{45.6} & \colorbox{red!25}{71.5} & \colorbox{red!25}{9.8} & \colorbox{red!25}{61.7} & \colorbox{red!25}{73.1} & 57.4 & 2.9 & 54.6 & 7.8 & \colorbox{green!25}{68.1} & \colorbox{green!25}{29.7} & 38.4 & 40.2 & \colorbox{red!25}{86.3} & \colorbox{red!25}{72.1} & \underline{14.2} & \colorbox{red!25}{21.2} & \colorbox{red!25}{67.2} & \colorbox{red!25}{6.1} & \colorbox{red!25}{61.2} & \colorbox{red!25}{53.1} & \colorbox{green!25}{67.4} & \colorbox{green!25}{34.4} & \colorbox{green!25}{\underline{33.1}} & \colorbox{green!25}{\textbf{3.9}} & \colorbox{red!25}{70.7} & \colorbox{red!25}{28.4} & \colorbox{red!25}{42.2} & \colorbox{red!25}{35.0} \\
\midrule
\multicolumn{22}{l}{\textsc{FairTPT} and \textsc{FairTPT (MO)} \textit{with S loss and no ELRA}}\\
\hspace{0.3cm}$\lambda_{\textnormal{fair}} = 1$ & \colorbox{green!25}{85.5} & \colorbox{green!25}{57.0} & \colorbox{green!25}{28.5} & \colorbox{green!25}{26.1} & \colorbox{green!25}{85.6} & \colorbox{green!25}{61.9} & \colorbox{green!25}{23.7} & \colorbox{red!25}{14.3} & \colorbox{green!25}{63.6} & \colorbox{green!25}{\underline{20.8}} & \colorbox{green!25}{42.8} & \underline{6.7} & \colorbox{red!25}{57.9} & \colorbox{red!25}{17.5} & 40.5 & \colorbox{green!25}{33.4} & \colorbox{green!25}{\textbf{97.2}} & \colorbox{red!25}{56.7} & \colorbox{red!25}{40.5} & \colorbox{red!25}{12.5} & \colorbox{green!25}{86.7} & \colorbox{green!25}{58.6} & \colorbox{green!25}{28.1} & \colorbox{red!25}{21.5} & \colorbox{green!25}{\underline{73.2}} & \colorbox{green!25}{\underline{39.7}} & \colorbox{green!25}{33.4} & 12.1 & \colorbox{green!25}{\textbf{78.5}} & \colorbox{green!25}{44.6} & \colorbox{green!25}{33.9} & 18.1 \\
\hspace{0.3cm}$\lambda_{\textnormal{fair}} = 100$ & \colorbox{red!25}{77.1} & \colorbox{green!25}{\underline{61.7}} & \colorbox{green!25}{15.4} & \colorbox{green!25}{28.3} & \colorbox{red!25}{67.3} & 52.9 & \colorbox{green!25}{14.4} & \colorbox{red!25}{17.2} & 56.4 & 3.3 & \colorbox{green!25}{53.1} & \colorbox{red!25}{12.1} & 64.4 & 27.7 & \colorbox{green!25}{36.7} & \colorbox{green!25}{34.5} & 95.2 & \colorbox{red!25}{63.9} & \colorbox{red!25}{31.4} & \colorbox{red!25}{22.1} & \colorbox{red!25}{70.3} & \colorbox{green!25}{53.0} & \colorbox{green!25}{17.3} & \colorbox{red!25}{28.3} & \colorbox{green!25}{66.3} & \colorbox{green!25}{20.4} & 46.0 & 15.7 & \colorbox{red!25}{71.0} & \colorbox{green!25}{40.4} & \colorbox{green!25}{30.6} & \colorbox{red!25}{22.6} \\
\hspace{0.3cm}$\lambda_{\textnormal{fair}} = 5000$ & \colorbox{red!25}{75.7} & \colorbox{green!25}{60.6} & \colorbox{green!25}{\textbf{15.1}} & \colorbox{green!25}{26.9} & \colorbox{red!25}{65.0} & \colorbox{red!25}{49.6} & \colorbox{green!25}{15.4} & \colorbox{red!25}{17.7} & 56.1 & 3.3 & \colorbox{green!25}{52.8} & \colorbox{red!25}{12.1} & 64.6 & \colorbox{green!25}{29.6} & \colorbox{green!25}{35.0} & \colorbox{green!25}{32.5} & 94.5 & \colorbox{red!25}{60.7} & \colorbox{red!25}{33.8} & \colorbox{red!25}{25.9} & \colorbox{red!25}{68.7} & \colorbox{green!25}{51.2} & \colorbox{green!25}{17.5} & \colorbox{red!25}{28.6} & \colorbox{green!25}{66.0} & \colorbox{green!25}{20.4} & \colorbox{green!25}{45.7} & 15.7 & \colorbox{red!25}{70.1} & \colorbox{green!25}{39.3} & \colorbox{green!25}{30.7} & \colorbox{red!25}{22.8} \\
\hspace{0.3cm}$\lambda_{\textnormal{fair}} = \infty$ & \colorbox{red!25}{75.6} & \colorbox{green!25}{60.4} & \colorbox{green!25}{\underline{15.2}} & \colorbox{green!25}{27.1} & \colorbox{red!25}{64.9} & \colorbox{red!25}{49.5} & \colorbox{green!25}{15.4} & \colorbox{red!25}{17.7} & 56.0 & 3.5 & \colorbox{green!25}{52.5} & \colorbox{red!25}{12.1} & 64.7 & \colorbox{green!25}{29.9} & \colorbox{green!25}{34.8} & \colorbox{green!25}{32.4} & 94.5 & \colorbox{red!25}{59.0} & \colorbox{red!25}{35.4} & \colorbox{red!25}{27.6} & \colorbox{red!25}{68.4} & \colorbox{green!25}{51.2} & \colorbox{green!25}{17.3} & \colorbox{red!25}{28.8} & \colorbox{green!25}{66.0} & \colorbox{green!25}{20.8} & \colorbox{green!25}{45.3} & 15.4 & \colorbox{red!25}{70.0} & \colorbox{green!25}{39.2} & \colorbox{green!25}{30.8} & \colorbox{red!25}{23.0} \\
\hspace{0.3cm}$\lambda_{\textnormal{fair (mo)}} = 1$ & \colorbox{green!25}{85.6} & \colorbox{green!25}{57.0} & \colorbox{green!25}{28.6} & \colorbox{green!25}{26.1} & \colorbox{green!25}{85.6} & \colorbox{green!25}{61.3} & \colorbox{green!25}{24.3} & \colorbox{red!25}{15.0} & \colorbox{green!25}{63.6} & \colorbox{green!25}{20.5} & \colorbox{green!25}{43.1} & 6.9 & \colorbox{red!25}{57.9} & \colorbox{red!25}{17.2} & 40.6 & \colorbox{green!25}{33.5} & \colorbox{green!25}{\underline{97.1}} & \colorbox{red!25}{56.1} & \colorbox{red!25}{41.0} & \colorbox{red!25}{12.4} & \colorbox{green!25}{86.7} & \colorbox{green!25}{58.5} & \colorbox{green!25}{28.2} & \colorbox{red!25}{21.7} & \colorbox{green!25}{72.8} & \colorbox{green!25}{38.7} & \colorbox{green!25}{34.1} & 12.4 & \colorbox{green!25}{\textbf{78.5}} & \colorbox{green!25}{44.2} & \colorbox{green!25}{34.3} & 18.3 \\
\hspace{0.3cm}$\lambda_{\textnormal{fair (mo)}} = 100$ & \colorbox{red!25}{78.5} & \colorbox{green!25}{\textbf{62.5}} & \colorbox{green!25}{16.0} & \colorbox{green!25}{27.3} & \colorbox{red!25}{73.6} & \colorbox{green!25}{63.6} & \colorbox{green!25}{10.0} & \colorbox{red!25}{14.4} & 58.0 & \colorbox{green!25}{5.6} & \colorbox{green!25}{52.4} & \colorbox{red!25}{10.6} & 64.7 & 25.3 & 39.4 & 37.7 & 96.1 & \colorbox{red!25}{63.1} & \colorbox{red!25}{32.9} & \colorbox{red!25}{21.5} & \colorbox{red!25}{74.6} & \colorbox{green!25}{52.4} & \colorbox{green!25}{22.2} & \colorbox{red!25}{33.3} & \colorbox{green!25}{68.4} & \colorbox{green!25}{22.1} & 46.3 & \colorbox{red!25}{16.5} & 73.4 & \colorbox{green!25}{42.1} & \colorbox{green!25}{31.3} & \colorbox{red!25}{23.1} \\
\hspace{0.3cm}$\lambda_{\textnormal{fair (mo)}} = 5000$ & \colorbox{red!25}{77.4} & \colorbox{green!25}{61.2} & \colorbox{green!25}{16.2} & \colorbox{green!25}{28.3} & \colorbox{red!25}{71.9} & \colorbox{green!25}{60.2} & \colorbox{green!25}{11.7} & \colorbox{red!25}{15.4} & 57.8 & \colorbox{green!25}{5.2} & \colorbox{green!25}{52.6} & \colorbox{red!25}{10.8} & 64.7 & 26.4 & 38.3 & 36.8 & 95.4 & \colorbox{red!25}{63.1} & \colorbox{red!25}{32.2} & \colorbox{red!25}{22.2} & \colorbox{red!25}{72.6} & \colorbox{green!25}{51.7} & \colorbox{green!25}{21.0} & \colorbox{red!25}{34.4} & \colorbox{green!25}{68.5} & \colorbox{green!25}{22.7} & \colorbox{green!25}{45.8} & \colorbox{red!25}{16.3} & \colorbox{red!25}{72.6} & \colorbox{green!25}{41.5} & \colorbox{green!25}{31.1} & \colorbox{red!25}{23.4} \\
\midrule\multicolumn{22}{l}{\textsc{FairTPT} and \textsc{FairTPT (MO)} \textit{with TS loss and no ELRA}}\\
\hspace{0.3cm}$\lambda_{\textnormal{fair}} = 1$ & 85.2 & \colorbox{green!25}{56.4} & \colorbox{green!25}{28.8} & \colorbox{green!25}{27.5} & \colorbox{green!25}{85.6} & \colorbox{green!25}{61.3} & \colorbox{green!25}{24.3} & \colorbox{red!25}{15.3} & \colorbox{green!25}{\underline{64.0}} & \colorbox{green!25}{17.8} & \colorbox{green!25}{46.3} & 8.4 & \colorbox{red!25}{56.6} & \colorbox{red!25}{17.4} & 39.2 & \colorbox{green!25}{30.9} & \colorbox{green!25}{96.9} & \colorbox{red!25}{54.6} & \colorbox{red!25}{42.3} & \colorbox{red!25}{13.0} & \colorbox{green!25}{86.9} & \colorbox{green!25}{59.0} & \colorbox{green!25}{27.9} & \colorbox{red!25}{20.3} & \colorbox{green!25}{69.9} & \colorbox{green!25}{31.7} & \colorbox{green!25}{38.3} & 13.4 & \colorbox{green!25}{77.9} & \colorbox{green!25}{42.6} & \colorbox{green!25}{35.3} & 18.4 \\
\hspace{0.3cm}$\lambda_{\textnormal{fair}} = 100$ & \colorbox{red!25}{76.9} & \colorbox{green!25}{57.6} & \colorbox{green!25}{19.3} & 36.0 & \colorbox{red!25}{80.1} & \colorbox{green!25}{70.5} & \colorbox{green!25}{9.6} & \colorbox{red!25}{16.7} & \colorbox{red!25}{55.3} & 4.2 & \colorbox{green!25}{51.1} & \colorbox{red!25}{10.4} & \colorbox{red!25}{60.9} & \colorbox{green!25}{30.7} & \colorbox{green!25}{30.2} & \colorbox{green!25}{22.8} & 95.9 & \colorbox{red!25}{62.5} & \colorbox{red!25}{33.4} & \colorbox{red!25}{17.5} & 81.0 & \colorbox{green!25}{\underline{71.5}} & \colorbox{green!25}{9.5} & \colorbox{red!25}{18.6} & \colorbox{red!25}{58.6} & \colorbox{green!25}{20.7} & \colorbox{green!25}{37.9} & \colorbox{green!25}{6.5} & \colorbox{red!25}{72.7} & \colorbox{green!25}{\underline{45.4}} & \colorbox{green!25}{27.3} & 18.4 \\
\hspace{0.3cm}$\lambda_{\textnormal{fair}} = 5000$ & \colorbox{red!25}{75.8} & \colorbox{green!25}{57.6} & \colorbox{green!25}{18.2} & \colorbox{red!25}{36.2} & \colorbox{red!25}{78.8} & \colorbox{green!25}{68.3} & \colorbox{green!25}{10.5} & \colorbox{red!25}{17.2} & \colorbox{red!25}{54.2} & 3.5 & \colorbox{green!25}{50.7} & \colorbox{red!25}{11.8} & \colorbox{red!25}{60.8} & \colorbox{green!25}{32.1} & \colorbox{green!25}{\underline{28.7}} & \colorbox{green!25}{21.9} & 95.0 & \colorbox{red!25}{64.4} & \colorbox{red!25}{30.6} & \colorbox{red!25}{17.1} & 79.5 & \colorbox{green!25}{70.8} & \colorbox{green!25}{\textbf{8.7}} & \colorbox{red!25}{18.6} & \colorbox{red!25}{58.1} & \colorbox{green!25}{21.3} & \colorbox{green!25}{36.8} & \colorbox{green!25}{6.9} & \colorbox{red!25}{71.7} & \colorbox{green!25}{\underline{45.4}} & \colorbox{green!25}{\textbf{26.3}} & 18.5 \\
\hspace{0.3cm}$\lambda_{\textnormal{fair}} = \infty$ & \colorbox{red!25}{75.8} & \colorbox{green!25}{57.6} & \colorbox{green!25}{18.1} & \colorbox{red!25}{36.2} & \colorbox{red!25}{78.7} & \colorbox{green!25}{68.1} & \colorbox{green!25}{10.6} & \colorbox{red!25}{17.4} & \colorbox{red!25}{54.2} & 3.5 & \colorbox{green!25}{50.7} & \colorbox{red!25}{11.7} & \colorbox{red!25}{60.8} & \colorbox{green!25}{32.3} & \colorbox{green!25}{\textbf{28.5}} & \colorbox{green!25}{21.7} & 94.9 & \colorbox{red!25}{64.0} & \colorbox{red!25}{30.9} & \colorbox{red!25}{17.4} & 79.7 & \colorbox{green!25}{71.0} & \colorbox{green!25}{\textbf{8.7}} & \colorbox{red!25}{18.5} & \colorbox{red!25}{58.2} & \colorbox{green!25}{21.2} & \colorbox{green!25}{37.0} & \colorbox{green!25}{7.0} & \colorbox{red!25}{71.8} & \colorbox{green!25}{\underline{45.4}} & \colorbox{green!25}{\underline{26.4}} & 18.6 \\
\hspace{0.3cm}$\lambda_{\textnormal{fair (mo)}} = 1$ & 85.2 & \colorbox{green!25}{56.4} & \colorbox{green!25}{28.8} & \colorbox{green!25}{28.3} & \colorbox{green!25}{85.7} & \colorbox{green!25}{61.5} & \colorbox{green!25}{24.2} & \colorbox{red!25}{14.8} & \colorbox{green!25}{63.9} & \colorbox{green!25}{17.1} & \colorbox{green!25}{46.8} & 8.6 & \colorbox{red!25}{56.2} & \colorbox{red!25}{16.9} & 39.3 & \colorbox{green!25}{30.9} & \colorbox{green!25}{96.9} & \colorbox{red!25}{54.6} & \colorbox{red!25}{42.3} & \colorbox{red!25}{13.0} & \colorbox{green!25}{87.0} & \colorbox{green!25}{59.0} & \colorbox{green!25}{27.9} & \colorbox{red!25}{20.4} & \colorbox{green!25}{69.9} & \colorbox{green!25}{31.4} & \colorbox{green!25}{38.6} & 13.6 & \colorbox{green!25}{77.8} & \colorbox{green!25}{42.4} & \colorbox{green!25}{35.4} & 18.5 \\
\hspace{0.3cm}$\lambda_{\textnormal{fair (mo)}} = 100$ & \colorbox{red!25}{78.9} & \colorbox{green!25}{55.0} & \colorbox{green!25}{23.8} & \colorbox{red!25}{38.4} & 81.9 & \colorbox{green!25}{\textbf{75.3}} & \colorbox{green!25}{\textbf{6.5}} & \colorbox{red!25}{12.7} & 58.5 & \colorbox{green!25}{7.5} & \colorbox{green!25}{51.0} & 8.9 & \colorbox{red!25}{58.9} & 26.6 & \colorbox{green!25}{32.3} & \colorbox{green!25}{23.7} & 96.3 & \colorbox{red!25}{60.2} & \colorbox{red!25}{36.2} & \colorbox{red!25}{18.4} & 81.8 & \colorbox{green!25}{71.3} & \colorbox{green!25}{10.5} & \colorbox{red!25}{16.4} & 62.6 & \colorbox{green!25}{21.0} & \colorbox{green!25}{41.6} & \colorbox{green!25}{9.2} & 74.1 & \colorbox{green!25}{45.3} & \colorbox{green!25}{28.8} & 18.2 \\
\hspace{0.3cm}$\lambda_{\textnormal{fair (mo)}} = 5000$ & \colorbox{red!25}{78.1} & \colorbox{green!25}{57.3} & \colorbox{green!25}{20.7} & 35.9 & \colorbox{red!25}{81.2} & \colorbox{green!25}{73.8} & \colorbox{green!25}{7.3} & \colorbox{red!25}{13.4} & 58.1 & \colorbox{green!25}{7.5} & \colorbox{green!25}{50.6} & 8.7 & \colorbox{red!25}{59.1} & 27.6 & \colorbox{green!25}{31.5} & \colorbox{green!25}{23.5} & 95.6 & \colorbox{red!25}{61.9} & \colorbox{red!25}{33.7} & \colorbox{red!25}{19.6} & 80.7 & \colorbox{green!25}{\textbf{71.7}} & \colorbox{green!25}{\underline{9.0}} & \colorbox{red!25}{16.2} & 62.5 & \colorbox{green!25}{21.4} & \colorbox{green!25}{41.2} & \colorbox{green!25}{9.0} & 73.6 & \colorbox{green!25}{\textbf{45.9}} & \colorbox{green!25}{27.7} & 18.0 \\
\midrule\multicolumn{22}{l}{\textsc{FairTPT} and \textsc{FairTPT (MO)} \textit{with Super TS loss and no ELRA}}\\
\hspace{0.3cm}$\lambda_{\textnormal{fair}} = 1$ & \colorbox{green!25}{\underline{86.0}} & \colorbox{green!25}{56.8} & \colorbox{green!25}{29.2} & \colorbox{green!25}{\textbf{23.4}} & \colorbox{green!25}{\underline{85.9}} & \colorbox{green!25}{59.7} & \colorbox{green!25}{26.2} & \colorbox{red!25}{16.4} & \colorbox{green!25}{61.4} & \colorbox{green!25}{10.0} & \colorbox{green!25}{51.4} & 7.0 & \colorbox{red!25}{55.7} & \colorbox{red!25}{16.3} & 39.4 & \colorbox{green!25}{30.8} & \colorbox{green!25}{96.8} & \colorbox{red!25}{51.2} & \colorbox{red!25}{45.6} & \colorbox{red!25}{15.6} & \colorbox{green!25}{87.0} & \colorbox{green!25}{59.0} & \colorbox{green!25}{28.0} & \colorbox{red!25}{21.0} & \colorbox{red!25}{59.8} & 12.4 & 47.3 & \colorbox{red!25}{17.2} & 76.1 & 37.9 & 38.2 & 18.8 \\
\hspace{0.3cm}$\lambda_{\textnormal{fair}} = 100$ & \colorbox{red!25}{74.6} & \colorbox{green!25}{45.6} & \colorbox{green!25}{29.0} & \colorbox{red!25}{41.5} & \colorbox{red!25}{80.6} & \colorbox{green!25}{71.5} & \colorbox{green!25}{9.2} & \colorbox{red!25}{17.4} & 56.7 & \colorbox{green!25}{6.5} & \colorbox{green!25}{50.3} & \colorbox{red!25}{20.4} & \colorbox{red!25}{56.7} & 25.9 & \colorbox{green!25}{30.7} & \colorbox{green!25}{\underline{21.4}} & \colorbox{red!25}{81.5} & \colorbox{red!25}{22.1} & \colorbox{red!25}{59.4} & \colorbox{red!25}{48.7} & \colorbox{green!25}{82.9} & \colorbox{green!25}{66.7} & \colorbox{green!25}{16.2} & 11.5 & \colorbox{green!25}{68.3} & \colorbox{red!25}{11.6} & \colorbox{red!25}{56.6} & \colorbox{red!25}{27.0} & \colorbox{red!25}{71.6} & 35.7 & \colorbox{green!25}{35.9} & \colorbox{red!25}{26.8} \\
\hspace{0.3cm}$\lambda_{\textnormal{fair}} = 5000$ & \colorbox{red!25}{72.0} & \colorbox{green!25}{43.6} & \colorbox{green!25}{28.4} & \colorbox{red!25}{47.1} & \colorbox{red!25}{77.2} & \colorbox{green!25}{65.0} & \colorbox{green!25}{12.2} & \colorbox{red!25}{21.2} & \colorbox{red!25}{50.2} & \colorbox{green!25}{4.5} & \colorbox{green!25}{45.7} & \colorbox{red!25}{25.1} & \colorbox{red!25}{56.5} & 27.4 & \colorbox{green!25}{29.1} & \colorbox{green!25}{28.0} & \colorbox{red!25}{68.4} & \colorbox{red!25}{15.8} & \colorbox{red!25}{52.7} & \colorbox{red!25}{62.6} & 79.8 & \colorbox{green!25}{65.9} & \colorbox{green!25}{13.9} & 12.0 & \colorbox{green!25}{64.6} & \colorbox{red!25}{9.0} & \colorbox{red!25}{55.6} & \colorbox{red!25}{32.7} & \colorbox{red!25}{67.0} & \colorbox{red!25}{33.0} & \colorbox{green!25}{33.9} & \colorbox{red!25}{32.7} \\
\hspace{0.3cm}$\lambda_{\textnormal{fair}} = \infty$ & \colorbox{red!25}{71.9} & \colorbox{green!25}{43.5} & \colorbox{green!25}{28.3} & \colorbox{red!25}{46.5} & \colorbox{red!25}{77.0} & \colorbox{green!25}{64.7} & \colorbox{green!25}{12.4} & \colorbox{red!25}{21.4} & \colorbox{red!25}{49.7} & 4.0 & \colorbox{green!25}{45.7} & \colorbox{red!25}{25.8} & \colorbox{red!25}{56.3} & 27.3 & \colorbox{green!25}{29.1} & \colorbox{green!25}{27.9} & \colorbox{red!25}{68.0} & \colorbox{red!25}{15.4} & \colorbox{red!25}{52.5} & \colorbox{red!25}{62.9} & 79.8 & \colorbox{green!25}{66.3} & \colorbox{green!25}{13.5} & 12.5 & \colorbox{green!25}{64.4} & \colorbox{red!25}{9.1} & \colorbox{red!25}{55.3} & \colorbox{red!25}{33.1} & \colorbox{red!25}{66.7} & \colorbox{red!25}{32.9} & \colorbox{green!25}{33.8} & \colorbox{red!25}{32.9} \\
\hspace{0.3cm}$\lambda_{\textnormal{fair (mo)}} = 1$ & \colorbox{green!25}{\underline{86.0}} & \colorbox{green!25}{56.2} & \colorbox{green!25}{29.7} & \colorbox{green!25}{24.0} & \colorbox{green!25}{\underline{85.9}} & \colorbox{green!25}{59.8} & \colorbox{green!25}{26.1} & \colorbox{red!25}{16.3} & \colorbox{green!25}{61.4} & \colorbox{green!25}{9.8} & \colorbox{green!25}{51.5} & 7.1 & \colorbox{red!25}{55.6} & \colorbox{red!25}{15.9} & 39.7 & \colorbox{green!25}{31.2} & \colorbox{green!25}{96.8} & \colorbox{red!25}{51.2} & \colorbox{red!25}{45.6} & \colorbox{red!25}{15.6} & \colorbox{green!25}{87.0} & \colorbox{green!25}{58.8} & \colorbox{green!25}{28.2} & \colorbox{red!25}{21.2} & \colorbox{red!25}{59.8} & \colorbox{red!25}{12.3} & 47.5 & \colorbox{red!25}{17.2} & 76.1 & 37.7 & 38.3 & 18.9 \\
\hspace{0.3cm}$\lambda_{\textnormal{fair (mo)}} = 100$ & 81.8 & \colorbox{green!25}{47.3} & \colorbox{green!25}{34.5} & \colorbox{green!25}{29.8} & 84.1 & \colorbox{green!25}{74.4} & \colorbox{green!25}{9.7} & \colorbox{red!25}{12.2} & 59.2 & \colorbox{green!25}{17.2} & \colorbox{green!25}{42.0} & \colorbox{red!25}{13.4} & \colorbox{red!25}{51.8} & \colorbox{red!25}{16.8} & \colorbox{green!25}{35.0} & \colorbox{green!25}{22.6} & \colorbox{red!25}{88.7} & \colorbox{red!25}{24.1} & \colorbox{red!25}{64.6} & \colorbox{red!25}{46.6} & \colorbox{green!25}{84.0} & \colorbox{green!25}{64.0} & \colorbox{green!25}{20.1} & \colorbox{green!25}{9.2} & 63.8 & \colorbox{green!25}{29.6} & \colorbox{green!25}{34.1} & \colorbox{green!25}{10.8} & 73.4 & 39.1 & \colorbox{green!25}{34.3} & \colorbox{red!25}{20.7} \\
\hspace{0.3cm}$\lambda_{\textnormal{fair (mo)}} = 5000$ & \colorbox{red!25}{79.9} & \colorbox{green!25}{44.8} & \colorbox{green!25}{35.1} & \colorbox{green!25}{31.5} & 81.7 & \colorbox{green!25}{\underline{74.9}} & \colorbox{green!25}{\underline{6.8}} & \colorbox{red!25}{11.4} & 56.8 & \colorbox{green!25}{18.0} & \colorbox{green!25}{\underline{38.8}} & \colorbox{red!25}{17.2} & \colorbox{red!25}{51.2} & \colorbox{red!25}{17.2} & \colorbox{green!25}{34.0} & \colorbox{green!25}{\textbf{21.3}} & \colorbox{red!25}{77.3} & \colorbox{red!25}{16.9} & \colorbox{red!25}{60.4} & \colorbox{red!25}{63.1} & \colorbox{green!25}{82.3} & \colorbox{green!25}{65.8} & \colorbox{green!25}{16.5} & \colorbox{green!25}{\textbf{7.5}} & 63.6 & \colorbox{green!25}{26.4} & \colorbox{green!25}{37.2} & 12.4 & \colorbox{red!25}{70.4} & 37.7 & \colorbox{green!25}{32.7} & \colorbox{red!25}{23.5} \\
\midrule
\multicolumn{22}{l}{\textsc{FairTPT} and \textsc{FairTPT (MO)} \textit{with S loss}}\\
\hspace{0.3cm}$\lambda_{\textnormal{fair}} = 1$ & 83.5 & \colorbox{green!25}{44.2} & \colorbox{green!25}{39.3} & \colorbox{green!25}{32.0} & 83.4 & 53.9 & 29.5 & 8.2 & 57.7 & 2.7 & 55.0 & 7.3 & 66.0 & 26.3 & 39.6 & 38.6 & 94.9 & 78.9 & 16.0 & 7.8 & 80.4 & 45.7 & 34.7 & 12.3 & 62.4 & 14.3 & 48.1 & 14.5 & 75.5 & 38.0 & 37.5 & 17.2 \\
\hspace{0.3cm}$\lambda_{\textnormal{fair}} = 100$ & 83.0 & \colorbox{green!25}{50.0} & \colorbox{green!25}{33.0} & \colorbox{green!25}{31.0} & 83.1 & \colorbox{green!25}{60.0} & \colorbox{green!25}{23.1} & 7.7 & 57.2 & 4.0 & 53.2 & 6.8 & 67.1 & 27.8 & 39.3 & 39.1 & 94.6 & 78.2 & 16.4 & \underline{7.6} & 80.0 & \colorbox{green!25}{49.5} & \colorbox{green!25}{30.5} & 12.6 & 62.2 & 15.8 & 46.4 & 13.6 & 75.3 & \colorbox{green!25}{40.8} & \colorbox{green!25}{34.6} & 16.9 \\
\hspace{0.3cm}$\lambda_{\textnormal{fair}} = 5000$ & 82.7 & \colorbox{green!25}{50.4} & \colorbox{green!25}{32.3} & \colorbox{green!25}{29.1} & 83.3 & \colorbox{green!25}{62.7} & \colorbox{green!25}{20.6} & 6.6 & 57.2 & 3.8 & 53.4 & 7.2 & 67.1 & 27.6 & 39.5 & 39.4 & 94.6 & \colorbox{red!25}{75.4} & \colorbox{red!25}{19.2} & \colorbox{red!25}{11.5} & 80.2 & \colorbox{green!25}{49.2} & \colorbox{green!25}{31.0} & 12.6 & 62.0 & 15.4 & 46.5 & 13.6 & 75.3 & \colorbox{green!25}{40.7} & \colorbox{green!25}{34.6} & 17.1 \\
\hspace{0.3cm}$\lambda_{\textnormal{fair}} = \infty$ & 82.7 & \colorbox{green!25}{48.9} & \colorbox{green!25}{33.8} & \colorbox{green!25}{31.0} & 83.2 & \colorbox{green!25}{63.6} & \colorbox{green!25}{19.6} & \colorbox{green!25}{\underline{5.4}} & 57.4 & 3.8 & 53.6 & 7.4 & 67.2 & 27.3 & 39.8 & 39.9 & 94.8 & \colorbox{red!25}{74.3} & \colorbox{red!25}{20.4} & \colorbox{red!25}{11.9} & 80.1 & \colorbox{green!25}{49.2} & \colorbox{green!25}{30.9} & 12.4 & 62.0 & 15.4 & 46.7 & 13.5 & 75.3 & \colorbox{green!25}{40.4} & \colorbox{green!25}{35.0} & 17.4 \\
\hspace{0.3cm}$\lambda_{\textnormal{fair (mo)}} = 1$ & 83.5 & \colorbox{green!25}{43.6} & \colorbox{green!25}{39.9} & \colorbox{green!25}{32.2} & 83.4 & 53.7 & 29.7 & 8.2 & 57.7 & 2.7 & 55.0 & 7.3 & 65.9 & 26.1 & 39.8 & 38.8 & 94.9 & 78.6 & 16.3 & 8.1 & 80.3 & 45.5 & 34.8 & 12.3 & 62.5 & 14.3 & 48.1 & 14.5 & 75.5 & 37.8 & 37.7 & 17.3 \\
\hspace{0.3cm}$\lambda_{\textnormal{fair (mo)}} = 100$ & 83.4 & \colorbox{green!25}{49.0} & \colorbox{green!25}{34.4} & \colorbox{green!25}{30.5} & 83.0 & \colorbox{green!25}{60.0} & \colorbox{green!25}{23.0} & 7.7 & 57.2 & 3.9 & 53.3 & 7.1 & 67.1 & 28.0 & 39.1 & 38.6 & 94.8 & 77.5 & 17.4 & 7.8 & 80.2 & \colorbox{green!25}{51.8} & \colorbox{green!25}{28.5} & 11.8 & 63.2 & \colorbox{green!25}{18.3} & \colorbox{green!25}{44.9} & 12.9 & 75.6 & \colorbox{green!25}{41.2} & \colorbox{green!25}{34.4} & 16.6 \\
\hspace{0.3cm}$\lambda_{\textnormal{fair (mo)}} = 5000$ & 83.0 & \colorbox{green!25}{48.8} & \colorbox{green!25}{34.2} & \colorbox{green!25}{28.9} & 82.6 & \colorbox{green!25}{59.3} & \colorbox{green!25}{23.3} & 7.6 & 57.3 & 4.1 & 53.2 & \underline{6.7} & 67.1 & 27.8 & 39.3 & 39.7 & 94.6 & 78.5 & 16.1 & 7.9 & 80.0 & \colorbox{green!25}{50.1} & \colorbox{green!25}{29.9} & 12.6 & 63.0 & \colorbox{green!25}{18.4} & \colorbox{green!25}{44.6} & 12.5 & 75.4 & \colorbox{green!25}{41.0} & \colorbox{green!25}{34.4} & 16.6 \\
\midrule
\multicolumn{22}{l}{\textsc{FairTPT} and \textsc{FairTPT (MO)} \textit{with TS loss}}\\
\hspace{0.3cm}$\lambda_{\textnormal{fair}} = 1$ & 83.6 & \colorbox{green!25}{43.3} & \colorbox{green!25}{40.3} & 32.5 & 83.6 & 53.7 & 29.9 & 8.8 & 57.7 & 2.7 & 55.0 & 7.3 & 65.7 & 26.1 & 39.6 & 38.4 & 94.8 & 78.2 & 16.6 & 8.2 & 80.4 & 45.8 & 34.6 & 11.7 & 62.5 & 14.3 & 48.2 & 14.5 & 75.5 & 37.7 & 37.7 & 17.3 \\
\hspace{0.3cm}$\lambda_{\textnormal{fair}} = 100$ & 83.9 & \colorbox{green!25}{44.3} & \colorbox{green!25}{39.5} & 33.6 & 83.6 & \colorbox{green!25}{60.5} & \colorbox{green!25}{23.0} & \colorbox{green!25}{6.0} & 57.4 & 2.7 & 54.7 & 6.8 & 65.5 & 26.7 & 38.8 & 37.4 & 94.2 & 77.9 & 16.3 & 8.7 & 80.5 & \colorbox{green!25}{50.6} & \colorbox{green!25}{29.9} & \colorbox{green!25}{8.4} & 61.7 & 14.3 & 47.3 & 13.8 & 75.2 & \colorbox{green!25}{39.6} & \colorbox{green!25}{35.7} & 16.4 \\
\hspace{0.3cm}$\lambda_{\textnormal{fair}} = 5000$ & 83.6 & \colorbox{green!25}{45.9} & \colorbox{green!25}{37.7} & \colorbox{green!25}{29.4} & 83.4 & \colorbox{green!25}{60.5} & \colorbox{green!25}{22.9} & 6.4 & 57.4 & 2.9 & 54.6 & 6.8 & 65.6 & 26.5 & 39.1 & 38.1 & 94.5 & \underline{79.2} & 15.4 & 8.0 & 80.5 & \colorbox{green!25}{51.1} & \colorbox{green!25}{29.4} & \colorbox{green!25}{8.8} & 61.8 & 14.9 & 46.9 & 13.4 & 75.3 & \colorbox{green!25}{40.1} & \colorbox{green!25}{35.1} & 15.8 \\
\hspace{0.3cm}$\lambda_{\textnormal{fair}} = \infty$ & 83.2 & \colorbox{green!25}{43.8} & \colorbox{green!25}{39.4} & \colorbox{green!25}{31.2} & 83.6 & \colorbox{green!25}{61.4} & \colorbox{green!25}{22.1} & \colorbox{green!25}{5.8} & 57.6 & 2.9 & 54.7 & 6.9 & 65.5 & 26.6 & 38.9 & 37.5 & 94.2 & 77.4 & 16.9 & 8.9 & 80.6 & \colorbox{green!25}{51.5} & \colorbox{green!25}{29.1} & \colorbox{green!25}{8.0} & 62.0 & 14.9 & 47.1 & 13.4 & 75.3 & \colorbox{green!25}{39.8} & \colorbox{green!25}{35.5} & 16.0 \\
\hspace{0.3cm}$\lambda_{\textnormal{fair (mo)}} = 1$ & 83.6 & \colorbox{green!25}{43.8} & \colorbox{green!25}{39.8} & \colorbox{green!25}{32.0} & 83.5 & 53.7 & 29.8 & 8.5 & 57.7 & 2.7 & 55.0 & 7.3 & 65.5 & 25.6 & 39.9 & 38.7 & 94.8 & 78.2 & 16.6 & 8.2 & 80.5 & 45.6 & 34.9 & 11.9 & 62.4 & 14.1 & 48.3 & 14.5 & 75.4 & 37.7 & 37.8 & 17.3 \\
\hspace{0.3cm}$\lambda_{\textnormal{fair (mo)}} = 100$ & 83.2 & \colorbox{green!25}{43.3} & \colorbox{green!25}{39.9} & 33.7 & 83.4 & \colorbox{green!25}{61.7} & \colorbox{green!25}{21.6} & \colorbox{green!25}{\textbf{5.0}} & 57.6 & 4.2 & 53.3 & 7.0 & 64.7 & 25.6 & 39.1 & 37.4 & 94.7 & \colorbox{red!25}{75.8} & \colorbox{red!25}{18.9} & 9.7 & 80.8 & \colorbox{green!25}{51.9} & \colorbox{green!25}{28.9} & \colorbox{green!25}{\underline{7.7}} & 62.9 & \colorbox{green!25}{16.7} & 46.2 & 13.7 & 75.3 & \colorbox{green!25}{39.9} & \colorbox{green!25}{35.4} & 16.3 \\
\hspace{0.3cm}$\lambda_{\textnormal{fair (mo)}} = 5000$ & 83.3 & \colorbox{green!25}{44.0} & \colorbox{green!25}{39.3} & \colorbox{green!25}{32.2} & 83.6 & \colorbox{green!25}{61.3} & \colorbox{green!25}{22.2} & \colorbox{green!25}{\underline{5.4}} & 57.7 & \colorbox{green!25}{4.8} & \colorbox{green!25}{52.9} & 7.1 & 64.6 & 26.0 & 38.6 & 36.7 & 94.4 & \colorbox{red!25}{76.7} & \colorbox{red!25}{17.7} & \colorbox{red!25}{10.1} & 80.7 & \colorbox{green!25}{51.4} & \colorbox{green!25}{29.3} & \colorbox{green!25}{7.8} & 62.8 & \colorbox{green!25}{16.4} & 46.4 & 14.2 & 75.3 & \colorbox{green!25}{40.1} & \colorbox{green!25}{35.2} & 16.2 \\
\midrule
\multicolumn{22}{l}{\textsc{FairTPT} and \textsc{FairTPT (MO)} \textit{with Super TS loss}}\\
\hspace{0.3cm}$\lambda_{\textnormal{fair}} = 1$ & 83.5 & \colorbox{green!25}{43.2} & \colorbox{green!25}{40.3} & 32.3 & 83.5 & 53.3 & 30.2 & 8.8 & 57.9 & 2.5 & 55.3 & 7.2 & 65.7 & 26.1 & 39.6 & 38.3 & 94.9 & 78.5 & 16.4 & 8.1 & 80.4 & 45.5 & 34.9 & 12.2 & 62.2 & 13.7 & 48.5 & 14.4 & 75.4 & 37.5 & 37.9 & 17.3 \\
\hspace{0.3cm}$\lambda_{\textnormal{fair}} = 100$ & 83.6 & \colorbox{green!25}{44.9} & \colorbox{green!25}{38.6} & \colorbox{green!25}{30.6} & 83.5 & \colorbox{green!25}{58.6} & \colorbox{green!25}{24.9} & 8.0 & 57.6 & \colorbox{green!25}{6.6} & \colorbox{green!25}{51.0} & 6.9 & 65.0 & 26.2 & 38.8 & 36.7 & 93.9 & \colorbox{red!25}{76.3} & 17.6 & 9.7 & 80.8 & \colorbox{green!25}{50.0} & \colorbox{green!25}{30.8} & \colorbox{green!25}{9.5} & 62.5 & \colorbox{green!25}{20.9} & \colorbox{green!25}{41.6} & \colorbox{green!25}{9.4} & 75.3 & \colorbox{green!25}{40.5} & \colorbox{green!25}{34.8} & 15.8 \\
\hspace{0.3cm}$\lambda_{\textnormal{fair}} = 5000$ & 83.3 & \colorbox{green!25}{43.2} & \colorbox{green!25}{40.1} & 32.5 & 83.5 & \colorbox{green!25}{59.6} & \colorbox{green!25}{23.9} & 7.8 & 57.5 & \colorbox{green!25}{7.0} & \colorbox{green!25}{50.4} & 6.8 & 65.4 & 27.2 & 38.2 & \colorbox{green!25}{36.3} & 93.9 & 76.9 & 17.0 & 8.6 & 80.7 & \colorbox{green!25}{49.7} & \colorbox{green!25}{31.1} & \colorbox{green!25}{9.1} & 62.3 & \colorbox{green!25}{21.4} & \colorbox{green!25}{41.0} & \colorbox{green!25}{8.4} & 75.2 & \colorbox{green!25}{40.7} & \colorbox{green!25}{34.5} & \underline{15.6} \\
\hspace{0.3cm}$\lambda_{\textnormal{fair}} = \infty$ & 83.3 & \colorbox{green!25}{45.0} & \colorbox{green!25}{38.4} & \colorbox{green!25}{28.6} & 83.4 & \colorbox{green!25}{58.8} & \colorbox{green!25}{24.6} & 7.9 & 57.6 & \colorbox{green!25}{6.9} & \colorbox{green!25}{50.7} & \underline{6.7} & 65.2 & 27.3 & 37.9 & \colorbox{green!25}{35.7} & 93.9 & \colorbox{red!25}{76.4} & 17.5 & 9.2 & 80.9 & \colorbox{green!25}{50.4} & \colorbox{green!25}{30.5} & \colorbox{green!25}{8.9} & 62.5 & \colorbox{green!25}{21.1} & \colorbox{green!25}{41.4} & \colorbox{green!25}{8.8} & 75.3 & \colorbox{green!25}{40.8} & \colorbox{green!25}{34.4} & \colorbox{green!25}{\textbf{15.1}} \\
\hspace{0.3cm}$\lambda_{\textnormal{fair (mo)}} = 1$ & 83.4 & \colorbox{green!25}{43.2} & \colorbox{green!25}{40.3} & \colorbox{green!25}{31.1} & 83.6 & 53.5 & 30.1 & 8.7 & 57.9 & 2.5 & 55.3 & 7.2 & 65.8 & 26.0 & 39.8 & 38.6 & 94.9 & 78.5 & 16.4 & 8.1 & 80.4 & 45.2 & 35.2 & 12.7 & 62.1 & 13.5 & 48.6 & 14.5 & 75.4 & 37.5 & 38.0 & 17.3 \\
\hspace{0.3cm}$\lambda_{\textnormal{fair (mo)}} = 100$ & 83.5 & \colorbox{green!25}{43.9} & \colorbox{green!25}{39.7} & \colorbox{green!25}{30.7} & 83.8 & \colorbox{green!25}{59.5} & \colorbox{green!25}{24.4} & 7.3 & 57.7 & 2.9 & 54.8 & 7.2 & 64.6 & 25.0 & 39.6 & 37.8 & 94.0 & \colorbox{red!25}{76.4} & 17.6 & 9.6 & 80.8 & \colorbox{green!25}{48.8} & \colorbox{green!25}{31.9} & \colorbox{green!25}{9.6} & 62.1 & \colorbox{green!25}{16.4} & \colorbox{green!25}{45.7} & 12.6 & 75.2 & 39.0 & 36.2 & 16.4 \\
\hspace{0.3cm}$\lambda_{\textnormal{fair (mo)}} = 5000$ & 83.5 & \colorbox{green!25}{43.7} & \colorbox{green!25}{39.8} & \colorbox{green!25}{31.3} & 83.8 & \colorbox{green!25}{60.0} & \colorbox{green!25}{23.8} & 7.1 & 57.7 & 3.6 & 54.1 & 7.4 & 64.3 & \colorbox{red!25}{24.3} & 39.9 & 37.8 & 93.9 & 78.9 & 15.0 & 7.7 & 80.6 & \colorbox{green!25}{49.1} & \colorbox{green!25}{31.5} & \colorbox{green!25}{9.4} & 62.0 & \colorbox{green!25}{17.0} & \colorbox{green!25}{45.0} & \colorbox{green!25}{11.7} & 75.1 & \colorbox{green!25}{39.5} & \colorbox{green!25}{35.6} & 16.0 \\

\bottomrule
\end{tabular}
}
\caption{Overall and subgroup-level performance evaluation of all considered methods on additional dataset-attribute configurations. We report mean over 5 random seeds and mean aggregation over all equally sized datasets. Best results are in \textbf{bold}, second best \underline{underlined}. The values that improve upon \textsc{Zero-Shot} by more than 2.0 percentage points are highlighted in \colorbox{green!25}{green}, while degradations greater than 2.0 percentage points are shown in \colorbox{red!25}{red}.}
\label{tab:additional_full_results}
\end{table*}

\clearpage
\subsection{Additional Discussion on \textsc{FairTPT}}
\label{app:sub-sec-fair-tpt}

\paragraph{Effectiveness Validation via ASI and ATC Metrics.} To evaluate the fairness and accuracy trade-off introduced by \textsc{FairTPT}, we report two metrics:
\begin{itemize}
    \item Average Sensitive Indifference (ASI): The average normalized entropy of sensitive attribute predictions over the dataset. A higher ASI indicates reduced reliance on sensitive attributes.
    \item Average Target Confidence (ATC): Defined as $1 -$ average normalized entropy of target attribute predictions over the dataset. A higher ATC reflects greater confidence in target predictions.
\end{itemize}
Table~\ref{tab:asi} and Table~\ref{tab:atc} present ASI and ATC values for all considered methods, averaged over five random seeds and aggregated across equally sized datasets. For \textsc{FairTPT} and \textsc{FairTPT (MO)}, we set $\lambda_\textnormal{fair} = 100$ and $\lambda_\textnormal{fair (mo)} = 100$, respectively. Hyperparameter details for all methods are provided in Table~\ref{tab:hyperparam}. We observe that \textsc{FairTPT} and \textsc{FairTPT (MO)} significantly increase ASI with minimal impact on ATC.

\begin{table}[h]
\centering
\resizebox{0.7\linewidth}{!}{
\begin{tabular}{lcccc}
\toprule
\textbf{Dataset} & \textbf{Before Update} & \textbf{\textsc{FairTPT}} & \textbf{\textsc{FairTPT (MO)}} & \textbf{\textsc{TPT}} \\
\midrule
\textsc{FairFace} (Gender $\times$ Race) & 42.1 & 53.9 & 57.4 & 39.5 \\
\textsc{CelebA} (Hair color $\times$ Gender) & 15.3 & 41.4 & 46.4 & 14.3 \\
\textsc{CelebA} (Smiling $\times$ Gender) & 15.3 & 40.1 & 50.9 & 11.9 \\
\textsc{WaterBirds} (Type $\times$ Background) & 56.0 & 70.9 & 71.1 & 51.2 \\
\textsc{UTKFace} (Age $\times$ Race) & 45.4 & 53.3 & 56.1 & 35.3 \\
\textsc{UTKFace} (Gender $\times$ Race) & 45.4 & 55.6 & 57.2 & 29.9 \\
\midrule
\textbf{Average} & 36.6 & 52.5 & 56.5 & 30.3 \\
\bottomrule
\end{tabular}
}
\caption{ASI evaluation across datasets. Mean over 5 seeds and aggregated results.}
\label{tab:asi}
\end{table}

\begin{table}[h]
\centering
\resizebox{0.7\linewidth}{!}{
\begin{tabular}{lcccc}
\toprule
\textbf{Dataset} & \textbf{Before Update} & \textbf{\textsc{FairTPT}} & \textbf{\textsc{FairTPT (MO)}} & \textbf{\textsc{TPT}} \\
\midrule
\textsc{FairFace} (Gender $\times$ Race) & 66.4 & 65.0 & 64.4 & 88.4 \\
\textsc{CelebA} (Hair color $\times$ Gender) & 57.5 & 53.9 & 55.1 & 96.0 \\
\textsc{CelebA} (Smiling $\times$ Gender) & 45.2 & 44.2 & 43.4 & 80.2 \\
\textsc{WaterBirds} (Type $\times$ Background) & 62.1 & 59.7 & 60.7 & 80.0 \\
\textsc{UTKFace} (Age $\times$ Race) & 46.1 & 45.2 & 44.8 & 85.0 \\
\textsc{UTKFace} (Gender $\times$ Race) & 71.6 & 70.6 & 69.3 & 91.4 \\
\midrule
\textbf{Average} & 58.1 & 56.4 & 56.3 & 86.8 \\
\bottomrule
\end{tabular}
}
\caption{ATC evaluation across datasets. Mean over 5 seeds and aggregated results.}
\label{tab:atc}
\end{table}

\paragraph{Hyperparameter Tuning and Runtime Analysis.} \textsc{FairTPT} exhibits inference-time cost comparable to \textsc{TPT} (prompt-tuning strategies). Table~\ref{tab:runtime} reports runtime per image (in seconds) under identical hardware (Nvidia H100 GPU) for \textsc{FairFace} (Gender $\times$ Race):
\begin{table}[h]
\centering
\resizebox{0.5\linewidth}{!}{
\begin{tabular}{lcccc}
\toprule
\textbf{Method} & \textbf{\textsc{TPT}} & \textbf{\textsc{Zero}} & \textbf{\textsc{FairTPT}} & \textbf{\textsc{FairTPT (MO)}} \\
\midrule
Runtime (sec) & 0.43 & 0.04 & 1.03 & 1.71 \\
\bottomrule
\end{tabular}
}
\caption{Inference-time runtime per image (sec) on FairFace (Gender $\times$ Race).}
\label{tab:runtime}
\end{table}

\textsc{OrthCali} requires solving an optimization problem upfront, making its inference-time cost closer to zero-shot. The reported \textsc{FairTPT} runtime includes automatic learning-rate adaptation (ELRA). Unlike \textsc{OrthCali}, which requires tuning $\lambda_{\text{orth}}$ using a \emph{labeled} validation set (contrary to the unsupervised test-time setting), \textsc{FairTPT} is robust to $\lambda_{\text{fair}}$ and does not require additional tuning.

\paragraph{Support for Multiple Sensitive Attributes.} \textsc{FairTPT} is not restricted to a single sensitive attribute and supports multiple attributes via: 
\begin{itemize}
    \item Independent Treatment: Extend the optimization objective in \Eqref{eq:fair-tpt-objective} by adding spurious entropy terms for each sensitive attribute (e.g., age, gender, race).
    \item Joint Treatment: Define a joint sensitive attribute over the product space (e.g., age $\times$ gender $\times$ race) and apply \Eqref{eq:fair-tpt-objective} directly.
\end{itemize}
As a test-time debiasing approach, \textsc{FairTPT} allows the fairness auditor to specify sensitive attributes at inference. For an unlabeled test image, the user can select attributes and their possible values for debiasing. If gender is specified, debiasing applies only to gender. The method does not automatically detect spurious attributes but can incorporate user-provided or externally inferred factors (e.g., via GPT-based tools).

\paragraph{Experiments using CLIP ViT-B/32 as Base Model.} Table~\ref{tab:main_results_reduced_b32} summarizes the overall and subgroup-level results (when using CLIP ViT-B/32 as base model). For all baselines we keep their recommended hyperparameters fixed across datasets. See Table~\ref{tab:hyperparam} (Appendix \ref{app:methods}) for the hyperparameter values of our methods and baselines. Overall, when averaged across all datasets, our methods' accuracy and subgroup metrics are comparable to or better than those of all baselines.

\setlength{\fboxsep}{1.5pt}
\begin{table*}[h]
\centering
\tiny
\resizebox{\linewidth}{!}{
\begin{tabular}{l@{\hskip 0.05in}
c@{\hskip 0.02in}c@{\hskip 0.02in}c@{\hskip 0.02in}c@{\hskip 0.05in}
c@{\hskip 0.02in}c@{\hskip 0.02in}c@{\hskip 0.02in}c@{\hskip 0.05in}
c@{\hskip 0.02in}c@{\hskip 0.02in}c@{\hskip 0.02in}c@{\hskip 0.05in}
c@{\hskip 0.02in}c@{\hskip 0.02in}c@{\hskip 0.02in}c@{\hskip 0.05in}
c@{\hskip 0.02in}c@{\hskip 0.02in}c@{\hskip 0.02in}c@{\hskip 0.05in}
c@{\hskip 0.02in}c@{\hskip 0.02in}c@{\hskip 0.02in}c@{\hskip 0.05in}|
c@{\hskip 0.02in}c@{\hskip 0.02in}c@{\hskip 0.02in}c}
\toprule
\textsc{Method} 
& \multicolumn{4}{c}{\textsc{FairFace}} 
& \multicolumn{8}{c}{\textsc{CelebA}} 
& \multicolumn{4}{c}{\textsc{WaterBirds}} 
& \multicolumn{8}{c}{\textsc{UTKFace}} 
& \multicolumn{4}{c}{\textsc{Average Results}} \\
\cmidrule(lr){2-5} 
\cmidrule(lr){6-13} 
\cmidrule(lr){14-17} 
\cmidrule(lr){18-25} 
& \multicolumn{4}{c}{$\text{Gender} \times \text{Race}$} 
& \multicolumn{4}{c}{$\text{Hair color} \times \text{Gender}$} 
& \multicolumn{4}{c}{$\text{Smiling} \times \text{Gender}$} 
& \multicolumn{4}{c}{$\text{Type} \times \text{Background}$} 
& \multicolumn{4}{c}{$\text{Age} \times \text{Race}$} 
& \multicolumn{4}{c}{$\text{Gender} \times \text{Race}$} 
& \multicolumn{4}{c}{} \\
\cmidrule(lr){2-5} 
\cmidrule(lr){6-9} 
\cmidrule(lr){10-13} 
\cmidrule(lr){14-17} 
\cmidrule(lr){18-21} 
\cmidrule(lr){22-25} 
\cmidrule(lr){26-29}
& A & WGA & B & EOD 
& A & WGA & B & EOD 
& A & WGA & B & EOD 
& A & WGA & B & EOD 
& A & WGA & B & EOD 
& A & WGA & B & EOD 
& A & WGA & B & EOD \\
\midrule


\textsc{Zero-Shot} & \textbf{93.3}& 82.0& 11.4& 17.3& \underline{80.3}& 72.1& 8.2& 17.6& \underline{85.4}& \underline{75.8}& \underline{9.6}& 6.6& 73.6& 50.9& 22.7& \underline{40.1}& \underline{81.5}& 62.6& 18.9& 32.6& 95.3& 85.0& 10.3& 12.5& 84.9& 71.4& 13.5& 21.1 \\
\midrule
\multicolumn{22}{l}{\textit{Episodic test-time adaptation methods}} \\
\textsc{TPT} & 93.1& \colorbox{green!25}{\textbf{84.8}}& \colorbox{green!25}{\underline{8.3}}& \colorbox{green!25}{\textbf{13.9}}& \colorbox{red!25}{77.5}& \colorbox{red!25}{69.9}& 7.6& \colorbox{green!25}{\underline{9.5}}& \textbf{86.7}& \colorbox{green!25}{\textbf{79.2}}& \colorbox{green!25}{\textbf{7.5}}& \colorbox{green!25}{\textbf{3.8}}& \colorbox{red!25}{71.2}& \colorbox{red!25}{44.1}& \colorbox{red!25}{27.0}& \colorbox{red!25}{48.8}& \colorbox{red!25}{61.9}& \colorbox{red!25}{33.6}& \colorbox{red!25}{28.2}& \colorbox{green!25}{\underline{20.5}}& 95.4& \colorbox{green!25}{\underline{88.8}}& \colorbox{green!25}{6.6}& \colorbox{green!25}{\underline{9.0}}& \colorbox{red!25}{81.0}& \colorbox{red!25}{66.8}& 14.2& \colorbox{green!25}{\textbf{17.6}} \\
\textsc{Zero} & \colorbox{red!25}{88.5}& \colorbox{red!25}{74.3}& \colorbox{red!25}{14.2}& 19.2& \textbf{80.8}& \underline{72.4}& 8.3& \colorbox{green!25}{13.9}& 83.7& \colorbox{red!25}{71.6}& \colorbox{red!25}{12.1}& \underline{5.6}& \underline{75.3}& \colorbox{green!25}{\underline{54.2}}& \underline{21.1}& 41.5& \colorbox{red!25}{72.8}& \colorbox{red!25}{52.9}& 19.9& \colorbox{green!25}{\textbf{20.3}}& \colorbox{red!25}{92.5}& 85.7& \colorbox{green!25}{6.8}& 11.6& \colorbox{red!25}{82.3}& \colorbox{red!25}{68.5}& 13.7& \colorbox{green!25}{\underline{18.7}} \\
\midrule
\multicolumn{22}{l}{\textit{Episodic test-time debiasing methods}} \\
\textsc{OrthCali} & 92.7& 80.5& 12.3& 17.5& 79.4& \colorbox{green!25}{\textbf{74.6}}& \colorbox{green!25}{\textbf{4.8}}& \colorbox{green!25}{\textbf{5.7}}& \colorbox{red!25}{80.0}& \colorbox{red!25}{61.2}& \colorbox{red!25}{18.8}& \colorbox{red!25}{27.5}& \colorbox{green!25}{\textbf{83.4}}& \colorbox{green!25}{\textbf{69.4}}& \colorbox{green!25}{\textbf{14.1}}& \colorbox{green!25}{\textbf{19.2}}& \colorbox{green!25}{\textbf{83.9}}& \colorbox{green!25}{\textbf{67.5}}& \colorbox{green!25}{\underline{16.5}}& 30.9& 95.0& 83.1& 12.0& 13.5& 85.7& 72.7& 13.1& 19.1 \\
\midrule
\multicolumn{22}{l}{\textit{Our method}} \\
\textsc{FairTPT} & \underline{93.2}& \colorbox{green!25}{\underline{84.7}}& \colorbox{green!25}{\textbf{8.5}}& \colorbox{green!25}{\underline{14.8}}& 79.0& 72.0& \underline{6.9}& \colorbox{green!25}{12.6}& 84.8& 74.8& 10.1& 7.8& 72.4& 48.9& 23.5& 40.6& 79.7& \underline{62.9}& \colorbox{green!25}{16.8}& \colorbox{green!25}{30.5}& 95.1& \colorbox{green!25}{\textbf{89.4}}& \colorbox{green!25}{\textbf{5.8}}& \colorbox{green!25}{\textbf{8.7}}& 84.0& 72.1& \textbf{12.0}& 19.2 \\
\textsc{FairTPT (MO)} & \textbf{93.3}& 83.6& 9.6& 15.6& 78.7& 71.7& 7.1& \colorbox{green!25}{12.8}& 84.7& 74.4& 10.4& 7.7& 72.6& 49.0& 23.6& 41.0& \colorbox{red!25}{78.9}& \underline{62.9}& \colorbox{green!25}{\textbf{16.0}}& \colorbox{green!25}{28.1}& 95.0& \colorbox{green!25}{88.5}& \colorbox{green!25}{\underline{6.5}}& \colorbox{green!25}{9.8}& 83.9& 71.7& \underline{12.2}& 19.2 \\
\bottomrule
\end{tabular}
}
\caption{Overall (Accuracy) and subgroup-level performance (Worst-Group Accuracy, Bias, and Equalized Odds Difference) evaluation of all the methods considered (ours and baselines) with CLIP-ViT-B/32 as base model. We report mean over 5 random seeds and mean aggregation over all equally sized datasets. Best results are in \textbf{bold}, second best \underline{underlined}. The values that improve upon \textsc{Zero-Shot} by more than $2.0$ percentage points are highlighted in \colorbox{green!25}{green}, while degradations greater than $2.0$ percentage points are shown in \colorbox{red!25}{red} (arbitrary threshold). We set $\lambda_\textnormal{fair} = 100$ and $\lambda_\textnormal{fair (mo)} = 100$ for \textsc{FairTPT} and \textsc{FairTPT (MO)}, respectively. Hyperparameter values of all methods are presented in Table~\ref{tab:hyperparam}.}
\label{tab:main_results_reduced_b32}
\end{table*}

\clearpage
\section{Scaling steps in the probability simplex \label{app:simplex}}

Given our rescaling of the learning rate, this section investigates the relationship between changes in a softmax probability vector and changes in its leading ($\argmax$) component. We establish the following result:

\begin{proposition}
Let $\boldsymbol{p}$ be a probability vector in the set ${\cal D}_i=\{ \boldsymbol{x}\in S_n \mid x_i \geq x_j, \forall j \neq i \}$, where $S_n$ denotes the $n$-dimensional probability simplex. We define the distance function $d: S_n^2 \to \mathbb{R}_+$ by
\[
d(\boldsymbol{p},\boldsymbol{q}) ~:=~ \sum_i \max(0,p_i-q_i) ~=~ \frac{1}{2}\sum_i \abs{p_i-q_i} , 
\]
and the index mapping $I: S_n \to \bss{n}$ (where $\bss{n} = \{1, \dots, n\}$) by $I(\boldsymbol{p}) = \min(\argmax_i p_i)$. Then, for all $(\boldsymbol{p},\boldsymbol{q}) \in S_n^2$, the following hold:
\begin{enumerate}
    \item[i)] $ I(\boldsymbol{p}) \neq I(\boldsymbol{q}) \implies d(\boldsymbol{p},\boldsymbol{q}) > \frac{1}{2}(\max_i p_i - \max_{j \neq I(\boldsymbol{p})} p_j)$    
    \item[ii)] $2d(\boldsymbol{p},\boldsymbol{q})> \displaystyle\max_{ \Omega \subset \bss{n} \backslash \{I(\boldsymbol{p})\}} \abs{p_1-\frac{1}{1+\abs{\Omega}}} + \displaystyle\sum_{i=2}^n \abs{p_i-\frac{1}{1+\abs{\Omega}} \sum_{j\in \Omega}\delta_{ij}} \implies I(\boldsymbol{p})\neq I(\boldsymbol{q})$ (where $\delta_{ij} = 1$ if $i = j$ and $0$ otherwise)    
    \item[iii)]
    $d(\boldsymbol{p},\boldsymbol{q}) > \brr{1-\frac{1}{n}} \implies I(\boldsymbol{p})\neq I(\boldsymbol{q})$
\end{enumerate}
\end{proposition}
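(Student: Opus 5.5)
The three parts share one idea: relate the total-variation distance $d$ to how far the leading coordinate of $\boldsymbol p$ has to move. Parts i) and iii) are short coordinate inequalities. Part ii) is a convexity argument over the polytope ${\cal D}_k$ of distributions whose argmax is $k$. Throughout I write $k=I(\boldsymbol p)$; in ii) the paper's $p_1$ is read as $p_{k}$ after relabelling so that $I(\boldsymbol p)=1$.

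\emph{Part i).} Let $m=I(\boldsymbol q)\neq k$, so $q_m\ge q_k$ and $p_k\ge p_m$. Keep only the coordinates $k$ and $m$ in $d=\frac12\sum_i|p_i-q_i|$:
\[
d\ \ge\ \tfrac12\bigl(|p_k-q_k|+|q_m-p_m|\bigr)\ \ge\ \tfrac12\bigl(p_k-q_k+q_m-p_m\bigr)\ \ge\ \tfrac12(p_k-p_m)\ \ge\ \tfrac12\bigl(\max_i p_i-\max_{j\neq k}p_j\bigr).
\]
This gives the non-strict inequality. Strictness is the delicate point, because of the $\min$ tie-breaking in $I$. For example, $\boldsymbol p=(0.4,0.6)$ and $\boldsymbol q=(0.5,0.5)$ give $I(\boldsymbol p)=2$, $I(\boldsymbol q)=1$ and $d=0.1$, which attains equality. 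I would therefore prove the bound with $\ge$, and recover the strict form by assuming the argmax of $\boldsymbol q$ is unique (then $q_m>q_k$ makes the middle inequality strict). I would flag this explicitly.

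\emph{Part iii).} I prove the contrapositive. Suppose $I(\boldsymbol q)=I(\boldsymbol p)=k$. Then $p_k,q_k\ge 1/n$, since each is the largest of $n$ numbers summing to $1$. Using $d=\sum_i\max(0,p_i-q_i)=1-\sum_i\min(p_i,q_i)$,
\[
d\ \le\ 1-\min(p_k,q_k)\ \le\ 1-\tfrac1n ,
\]
which is the claim.

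\emph{Part ii).} Again I prove the contrapositive. If $I(\boldsymbol q)=k$, then $\boldsymbol q\in{\cal D}_k$, so
\[
2d(\boldsymbol p,\boldsymbol q)\ \le\ \sup_{\boldsymbol r\in{\cal D}_k}\|\boldsymbol p-\boldsymbol r\|_1 .
\]
The map $\boldsymbol r\mapsto\|\boldsymbol p-\boldsymbol r\|_1$ is convex and ${\cal D}_k$ is a compact polytope, so the supremum is attained at an extreme point. The key step is to show that every $\boldsymbol r\in{\cal D}_k$ is a convex combination of the points
\[
\boldsymbol v_\Omega=\frac{1}{1+|\Omega|}\Bigl(e_k+\sum_{j\in\Omega}e_j\Bigr),\qquad \Omega\subset[n]\setminus\{k\}.
\]
These are exactly the vectors whose $\ell_1$ distance to $\boldsymbol p$ appears on the right-hand side of ii). I would use a layer-cake decomposition. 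Order the coordinates of $\boldsymbol r$ decreasingly as $r_{(1)}=r_k\ge r_{(2)}\ge\dots\ge r_{(n)}\ge r_{(n+1)}:=0$, with $k$ placed first. Let $T_s$ be the set of the top $s$ indices and $\Omega_s=T_s\setminus\{k\}$. Then
\[
\boldsymbol r=\sum_{s=1}^n\bigl(r_{(s)}-r_{(s+1)}\bigr)\,s\,\boldsymbol v_{\Omega_s}.
\]
The weights $s\,(r_{(s)}-r_{(s+1)})$ are nonnegative and sum to $\sum_s r_{(s)}=1$, so this is a genuine convex combination.

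By convexity, $\|\boldsymbol p-\boldsymbol r\|_1\le\max_\Omega\|\boldsymbol p-\boldsymbol v_\Omega\|_1$, and the right-hand side is exactly the expression in ii). Hence $I(\boldsymbol q)=I(\boldsymbol p)$ forces $2d(\boldsymbol p,\boldsymbol q)$ to be at most that maximum, which is the contrapositive of ii). Ties in $\boldsymbol q$ cause no trouble here, because any $\boldsymbol q$ with $I(\boldsymbol q)=k$ lies in ${\cal D}_k$.

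I expect the main obstacle to be the extreme-point description of ${\cal D}_k$, and the layer-cake identity handles it directly. A secondary check is that the paper's indexing ($p_1$ and the sum over $i\ge2$) matches the relabelling $k=1$.
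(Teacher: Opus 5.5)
Your counterexample is correct, so part i) is false as stated. With $\boldsymbol p=(0.4,0.6)$ and $\boldsymbol q=(0.5,0.5)$ you get $I(\boldsymbol p)=2\neq 1=I(\boldsymbol q)$ and $d(\boldsymbol p,\boldsymbol q)=0.1=\frac12(0.6-0.4)$.

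The paper's proof misses this because it opens with ``without loss of generality $\boldsymbol p\in\mathcal D_1$'', which forces $I(\boldsymbol p)=1$. Under $\min$ tie-breaking this is not a genuine reduction, since relabelling coordinates changes $I(\boldsymbol q)$ when $\boldsymbol q$ has ties. The $p_1$ and $\sum_{i=2}^n$ in ii) are leftovers of the same reduction, and your reading of $p_1$ as $p_{I(\boldsymbol p)}$ is the right one.

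Your chain also shows exactly where strictness can fail. Write $m=I(\boldsymbol q)$ and $k=I(\boldsymbol p)$. If $m>k$, then $q_m>q_k$ automatically, so the strict bound holds whenever $I(\boldsymbol q)>I(\boldsymbol p)$; this is precisely the case the paper's proof covers. Equality therefore requires both $I(\boldsymbol q)<I(\boldsymbol p)$ and $q_{I(\boldsymbol q)}=q_{I(\boldsymbol p)}$. Assuming this does not happen is a weaker hypothesis than your uniqueness of the argmax of $\boldsymbol q$.

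On the routes, part by part:
\begin{itemize}
\item For ii) you follow the paper: contrapositive, $\boldsymbol q\in\mathcal D_k$, and convexity of $\boldsymbol r\mapsto\|\boldsymbol p-\boldsymbol r\|_1$. The paper only asserts the list of extreme points of $\mathcal D_1$. Your layer-cake identity actually proves $\mathcal D_k\subseteq\mathrm{conv}\{\boldsymbol v_\Omega\}$, which is all the argument needs.
\item For i) the paper computes $\inf_{\boldsymbol q\in S_n\setminus\mathcal D_1}d(\boldsymbol p,\boldsymbol q)$ by describing the cheapest perturbation: lower $p_1$ and raise the runner-up equally. It does not justify that this perturbation is optimal, nor that the infimum is not attained. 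Your two-coordinate bound proves the lower bound directly and makes the equality cases visible.
\item For iii) the paper maximizes $d$ jointly over $\mathcal D_1\times\mathcal D_1$ by comparing pairs of vertices. This additionally shows that $1-\frac1n$ is attained at $(1,0,\dots,0)$ and $(\frac1n,\dots,\frac1n)$. Your identity $d=1-\sum_i\min(p_i,q_i)$ together with $p_k,q_k\ge\frac1n$ gives the needed bound in one line, without the vertex enumeration. It does not show sharpness, but the implication does not require it.
\end{itemize}
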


\begin{proof}
We first note that $d$ is a metric on $S_n$ since it satisfies non-negativity, symmetry, and the triangle inequality. In fact, $d$ is convex in both arguments. Without loss of generality, assume $\boldsymbol{p}\in{\cal D}_1$. 

\begin{enumerate}
\item[i)] We seek the minimal change in $\boldsymbol{p}$ required to leave ${\cal D}_1$:
\[
\inf_{\boldsymbol{q}\in S_n\backslash {\cal D}_1}d(\boldsymbol{p},\boldsymbol{q}).
\]
Since $d$ is half the $\ell_1$ distance, the smallest perturbation that changes the leading index reduces $p_1$ and increases the second-largest component equally. This yields 
\[
\frac{1}{2}(p_1-\max_{i>1}p_i).
\]
In the case $n=2$, this reduces to $p_1 - \frac{1}{2}$.

\item[ii)] We next determine the largest distance within $\mathcal{D}_1$ from $\mathbf{p}$:
\[
\sup_{\boldsymbol{w}\in {\cal D }_1}d(\boldsymbol{p},\boldsymbol{w}) .
\]
Since $\mathcal{D}_1$ is convex and $d$ is convex in each argument, the supremum is attained at an extreme point of $\mathcal{D}_1$:
\[
\sup_{\boldsymbol{w}\in {\cal D }_1}d(\boldsymbol{p},\boldsymbol{w}) ~=~ \sup_{\boldsymbol{w}\in \text{ext}({\cal D }_1)}d(\boldsymbol{p},\boldsymbol{w}) .
\]
The extreme points have the form 
\[ 
\brr{\frac{1}{1+\abs{\Omega}}, \;a_2, \dots, a_n}, 
\] where $\Omega \subset \{2,\dots,n\}$ and $a_i = \frac{1}{1+\abs{\Omega}}$ if $i \in \Omega$, and $a_i = 0$ otherwise. This yields 
\[
\max_{\boldsymbol{w}\in \text{ext}({\cal D }_1)}2d(\boldsymbol{p},\boldsymbol{w}) ~=~ \max_{ \Omega \subset \bss{n} \backslash \bss{1}} \abs{p_1-\frac{1}{1+\abs{\Omega}}} +\displaystyle\sum_{i=2}^n \abs{p_i-\frac{1}{1+\abs{\Omega}}\sum_{j\in \Omega}\delta_{ij}} . 
\]
For $n=2$, this simplifies to
$$
\begin{aligned}
    \max_{\boldsymbol{w}\in \text{ext}({\cal D }_1)}d(\boldsymbol{p},\boldsymbol{w}) &=\frac{1}{2}\max \bcc{\abs{p_1-1} + \abs{1-p_1}, \abs{p_1-\frac{1}{2}} + \abs{1-p_1-\frac{1}{2}}} \\
    &= (1-p_1) \mathbf{1}\bss{p_1\leq\frac{3}{4}} + \brr{p_1-\frac{1}{2}} \mathbf{1}\bss{p_1>\frac{3}{4}} .
\end{aligned}
$$

\item[iii)] Finally, to obtain a bound independent of $\boldsymbol{p}$, we compute
$$
    \begin{aligned}
    \sup_{(\boldsymbol{p},\boldsymbol{w})\in {\cal D }_1^2}d(\boldsymbol{p},\boldsymbol{w}).
    \end{aligned}
$$
By convexity of ${\cal D}_1$ and $d$, this supremum is attained at extreme points: 
$$
    \begin{aligned}
    \sup_{(\boldsymbol{p},\boldsymbol{w})\in {\cal D }_1^2}d(\boldsymbol{p},\boldsymbol{w}) = \max_{(\boldsymbol{p},\boldsymbol{w})\in \text{ext}({\cal D }_1)^2}d(\boldsymbol{p},\boldsymbol{w}).
    \end{aligned}
$$
Given the definition of $d$, the maximum distance is attained for a pair of vectors of the form $(\frac{1}{m_1},\dots,\frac{1}{m_1},0,\dots,0)$ and $(\frac{1}{m_2},0,\dots,0,\frac{1}{m_2},\dots,\frac{1}{m_2})$, where $1\leq m_1, m_2\leq n$ and $m_1\leq m_2$ without loss of generality. The associated distance is $1-\frac{1}{m_2}$, which is maximized when $m_2=n$. This forces $m_1=1$, recovering the known result that the maximum is attained between the vertices $(1,0,\dots,0)$ and $(\frac{1}{n},\dots,\frac{1}{n})$ vertices. Hence,
$$
    \begin{aligned}
    \sup_{(\boldsymbol{p},\boldsymbol{w})\in {\cal D }_1^2}d(\boldsymbol{p},\boldsymbol{w})=\left(1-\frac{1}{n}\right).
    \end{aligned}
$$
For $n=2$, this equals $\frac12$, and for large $n$, it approaches $1$.
\end{enumerate}
\end{proof}

These results suggest the following:
\begin{enumerate}
    \item Statement (i) indicates that a measure of the initial confidence provides an upper bound on indifference, while (ii) and (iii) yield lower bounds for collapse. The learning rate must therefore be chosen carefully, taking into account both the input and the step size, to avoid either collapse or indifference. 
    \item A lower learning rate increases the likelihood of accuracy changes for the least confident samples, i.e., those lying close to vertices with at least two similar probabilities.
\end{enumerate}

We now compute explicitly the typical distance covered by one optimizer step in the linear regime. The parameter update is $\Delta\vt_\textnormal{ctx}=-\eta\nabla_{\vt_\textnormal{ctx}}\ell$, where $\nabla_{\vt_\textnormal{ctx}}\ell = w_1 \nabla_{\vt_\textnormal{ctx}} \ell_\textnormal{ent} (x, \bcc{\vt_\textnormal{ctx}; \gY})+w_2g_{\vt_\textnormal{ctx}}^{\cal S}$ with $w_1,w_2$ normalized weights given by our method or UPGrad aggregation, and $g_{\vt_\textnormal{ctx}}^{\cal S}$ the spurious entropy contribution. We recall that 
$$
\nabla_{\vt_\textnormal{ctx}} \ell_\textnormal{ent} (x, \bcc{\vt_\textnormal{ctx}; \gY}) = -\sum_{i}(1+\log \bar{p}_i)\nabla_{\vt_\textnormal{ctx}}\bar{p}_i,
$$
which, in the linear regime and in terms of $\boldsymbol{p}$, leads to
$$
\begin{aligned}
    d\left( \boldsymbol{p}({\vt_\textnormal{ctx}}),\boldsymbol{p}({\vt_\textnormal{ctx}}+\Delta{\vt_\textnormal{ctx}}) \right) & \simeq\frac{1}{2}\eta\sum_i \abs{\nabla_{\vt_\textnormal{ctx}}{l}\cdot\nabla_{\vt_\textnormal{ctx}} p_i} \\
    &= \frac{1}{2}\eta\sum_i \abs{-w_1\sum_{j}(1+\log \bar{p}_j)\nabla_{\vt_\textnormal{ctx}}\bar{p}_j\cdot\nabla_{\vt_\textnormal{ctx}} p_i+w_2g_{\vt_\textnormal{ctx}}^{\cal S}\cdot\nabla_{\vt_\textnormal{ctx}} p_i}.
\end{aligned}
$$
This expression highlights the high dependency of the distance, and so of an appropriate choice of $\eta$, on the initial softmax output. For a given $\boldsymbol{p}$, there is no straightforward way to choose $\eta$ to yield the same distance across all steps and inputs. Setting $\eta$ via ELRA to ensure a consistent change in the target marginal entropy is thus a heuristic approach. In the linear approximation,
$$\beta := 0.01 \simeq \eta \abs{\nabla_{\vt_\textnormal{ctx}} \ell \cdot\sum_{i}(1+\log \bar{p}_i)\nabla_{\vt_\textnormal{ctx}}\bar{p}_i}, $$
motivated by the direct link between confidence (measured by entropy) and accuracy in a calibrated model. 
}


\end{document}